\newtheorem{remark}{\textbf{Remark}}
\newtheorem{lemma}{\textbf{Lemma}}
\newtheorem{theorem}{\textbf{Theorem}}
\newtheorem{corollary}{\textbf{Corollary}}
\newtheorem{myDef}{Definition}
\newenvironment{proof}[1] {Proof {#1}:}{\hfill$\square$}
\newcommand{\UCBODC}{{\texttt{UCB-ODC}}\xspace}
\newcommand{\AAEODC}{{\texttt{AAE-ODC}}\xspace}
\newcommand{\ODC}{{\texttt{ODC}}\xspace}
\newcommand{\BT}{{\text{buffer threshold}}\xspace} % originally "buffer size requirement"
\newcommand{\BTs}{{\text{buffer thresholds}}\xspace} % originally "buffer size requirements"
\begin{document}

% If your paper is accepted and the title of your paper is very long,
% the style will print as headings an error message. Use the following
% command to supply a shorter title of your paper so that it can be
% used as headings.
%
%\runningtitle{I use this title instead because the last one was very long}

% If your paper is accepted and the number of authors is large, the
% style will print as headings an error message. Use the following
% command to supply a shorter version of the authors names so that
% they can be used as headings (for example, use only the surnames)
%
\runningauthor{Yu-Zhen Janice Chen, Lin Yang, Xuchuang Wang, Xutong Liu, Mohammad Hajiesmaili, John C. S. Lui, Don Towsley}

\twocolumn[

\aistatstitle{On-Demand Communication for Asynchronous Multi-Agent Bandits}

\aistatsauthor{ Yu-Zhen Janice Chen \And Lin Yang \And  Xuchuang Wang \and Xutong Liu}
\aistatsaddress{ Univ. of Massachusetts Amherst \And  Nanjing University \And The Chinese Univ. of Hong Kong} 
\aistatsauthor{Mohammad Hajiesmaili \And John C. S. Lui \And Don Towsley }
\aistatsaddress{Univ. of Massachusetts Amherst \And The Chinese Univ. of Hong Kong \And Univ. of Massachusetts Amherst } 
]

\begin{abstract}
This paper studies a cooperative multi-agent multi-armed stochastic bandit problem where agents operate \textit{asynchronously} -- agent pull times and rates are unknown, irregular, and heterogeneous -- and face the same instance of a $K$-armed bandit problem.
Agents can share reward information to speed up the learning process at additional communication costs.
We propose \texttt{ODC}, an on-demand communication protocol that tailors the communication of each pair of agents based on their empirical pull times. 
\texttt{ODC} is efficient when the pull times of agents are highly heterogeneous, and its communication complexity depends on the empirical pull times of agents.
\texttt{ODC} is a generic protocol that can be integrated into most cooperative bandit algorithms without degrading their performance. We then incorporate \ODC into the natural extensions of \texttt{UCB} and \texttt{AAE} algorithms and propose two communication-efficient cooperative algorithms. 
Our analysis shows that both algorithms are near-optimal in regret. 
\end{abstract}

%!TEX root = ODC-CameraReady.tex

\section{INTRODUCTION}

Asynchronous multi-agent multi-armed bandit (MAMAB) settings arise naturally in several applications. For instance, in online advertising with multiple heterogeneous servers, server processing capabilities and speeds are often different. Furthermore, the times that servers receive recommendation requests are often unknown and irregular. Another example is clinical trials with multiple labs in collaboration, where trial times depend on client visit times, 
which vary from lab to lab. In other large-scale distributed learning scenarios, such as IoT devices cooperating to learn an underlying environment, agents can be asynchronous in nature due to task arrangements or hardware limits. 

This paper studies a MAMAB setting where agents with \textit{unknown asynchronous} decision times cooperate to improve their learning performance.
Concretely, we consider a system where a group of $M$ agents, $\mathcal{A} = \{1, ..., M\}$, cooperate to solve the same instance of a $K$-armed bandit problem. 
An agent repeatedly chooses an arm from the arm set to pull and receives a stochastic reward from it. Agents have different numbers of decision rounds (pull times) at arbitrary unknown times.
Each agent aims to minimize its individual \textit{regret} -- the cumulative difference between the reward received by the agent and the expected reward of the best arm in the arm set. 
Agents cooperate by sharing reward information with each other, and their goal is to together minimize the \textit{group regret} -- the total amount of individual regret among the $M$ agents.
Cooperation among agents, however, comes with an additional \textit{communication} cost, which can be expensive for some applications when agents are geographically dispersed or have limited power/bandwidth resources for communication.

Prior studies~\citep{yang2021cooperative, yang2022distributed} have shown that it is possible to achieve near-optimal group regret by immediately broadcasting rewards.
In an asynchronous setting where agents have different pull speeds, immediate broadcasts can incur unnecessary communication costs. With immediate broadcast communication, an agent can receive multiple reward-sharing messages from another agent between its two decision rounds; these messages could have been accumulated (buffered) by that agent and sent all at once, incurring lower communication overhead. Hence, for a group of asynchronous agents, tailoring the message exchange protocol between each pair of agents can yield better  
communication efficiency. 

This paper aims to reduce communication costs over that of the immediate broadcast communication protocol (\texttt{IBC}) while achieving the same order of regret. 
The lack of synchronization between agents, however, poses a challenge on determining the timing of communication. Specifically, agents are uncertain of other agents’ learning progress at any time due to the arbitrary asynchronicity of agent pull times and hence need to trade-off communication costs to learn this information for better cooperation.
One might apply the idea of coordinated cooperative learning, e.g., the leader-follower framework, which has proven to be efficient in prior studies~\citep{kolla2018collaborative, dubey2020cooperative, wang2020optimal} of the synchronous MAMAB problem. However, unknown and irregular agent pull speeds hinder the application of coordinated cooperative learning. This can lead to a scenario where agents chosen to be in charge of exploration, leaders, are slow (have small pull rates), and agents chosen to perform exploitation are fast (have large pull rates), which can incur high regret. 
Another alternative is customizing spontaneous communication between agents, where each agent deliberately chooses its communication frequency to other agents according to their pull rates. However, efficient implementation of customized spontaneous communication is not possible since agents do not have prior knowledge of the pull times of others.

\paragraph{Contributions.} 
This paper develops \textit{On-Demand} Communication (\ODC), an efficient protocol for the asynchronous cooperative MAMAB model, where unique technical challenges are introduced by the unknown, irregular, and different decision times of agents. By the design of \ODC, we address the challenge of reducing the number of communications among asynchronous agents. Specifically, \ODC reduces the number of communications by tailoring the times communications occur between each pair of agents based on their empirical pull times.
More importantly, \ODC is generic and can be used with most cooperative bandit algorithms. 
We propose two decentralized MAMAB algorithms, \UCBODC and \AAEODC, which combine \ODC with natural extensions of \texttt{UCB} and \texttt{AAE} algorithms respectively.
Our analysis shows that both  \UCBODC and \AAEODC achieve near-optimal group regret upper bounds of $O(\sum_{i:\Delta_i > 0} \log(N)/\Delta_i)$, where $N\equiv \sum_{j \in \mathcal{A}} N_j$ is the total number of decision rounds of all agents, $N_j$ is the total number of decision rounds of agent $j$, and $\Delta_i$ is the  suboptimality gap of arm $i$.

Under \ODC, communication complexity, i.e., the total number of messages sent among agents, depends on the specific decision times of agents. We show that the communication complexity of \ODC is 
$O(\sum_{j, j^\prime \in \mathcal{A}} \min\{N_j, N_{j^\prime}\})$, which depends on the agents with the fewest decision rounds. This communication complexity is much smaller than that of the immediate broadcast communication protocol (\texttt{IBC}), $O(MN)$, when agent pull times are highly heterogeneous. Moreover, following prior ideas on the synchronous MAMAB setting, one has the option to tune message transmission rates under \ODC by allowing messages to vary in size 
to further reduce the communication complexity. 
For example, if the number of observations in a message is doubled after each communication, 
the communication complexity of \ODC becomes $O(\sum_{j, j^\prime \in \mathcal{A}}\min\{\log N_j, \log N_{j^\prime}\})$. In this way, our asynchronous policy can recover the state-of-the-art logarithmic communication complexity when applied to the synchronous MAMAB setting.

Our experimental results verify our theoretical observations and demonstrate that \ODC is especially advantageous when agent pull speeds are highly diversified, and when there exist many slow agents.

\paragraph{Prior Work.} 
We review the most relevant work here and refer to Appendix~\ref{sec:related} for extended literature review.
The most relevant work considers asynchronous bandit agents cooperating in a fully decentralized manner~\citep{yang2021cooperative, yang2022distributed,sankararaman2019social, feraud2019decentralized}. 
The model in~\cite{yang2021cooperative,yang2022distributed} assumes each agent periodically make decisions at different \textit{known} frequencies. Our paper assumes that pulling times are unknown and irregular. 
\cite{sankararaman2019social} study a gossip protocol, i.e.,  an agent can only communicate with one other agent at each time.
Last, \cite{feraud2019decentralized} studies the scenario where the goal is to identify the best arm instead of minimizing regret.
More broadly there is extensive prior work on MAMAB with synchronous agents either in a fully decentralized setting, e.g.,~\citep{szorenyi2013gossip, chawla2020gossiping, landgren2016distributed, buccapatnam2015information, martinez2019decentralized, madhushani2021one, cesa2016delay}, or using coordinated cooperative approach~\citep{shi2021federated, wang2019distributed, wang2020optimal, bar2019individual, chakraborty2017coordinated, dubey2020cooperative, kolla2018collaborative}. 
In the synchronous MAMAB setting, the batch approach (a.k.a., doubling epoch, phase, buffer)~\citep{perchet2016batched, gao2019batched} has been used to achieve logarithmic communication complexity, e.g, by~\cite{agarwal2021multi, shi2021heterogeneous, boursier2019sic}.
There are also works on asynchronous multi-agent learning in related fields such as federated linear bandit~\citep{li2022asynchronous, he2022simple} and online convex optimization with full information or semi-bandit feedback~\citep{cesa2020cooperative, jiang2021asynchronous, joulani2019think, bedi2019asynchronous, della2021efficient}. 

%!TEX root = ODC-CameraReady.tex

\section{ASYNCHRONOUS MULTI-AGENT BANDITS}\label{sec:model}

We study an asynchronous version of the cooperative multi-agent multi-armed bandit (MAMAB) problem with a set $\mathcal{A} = \{1, ..., M\}$ of $M$ independent agents and a set $\mathcal{K} = \{1, ..., K\}$ of $K$ arms. Each arm $i \in \mathcal{K}$ is associated with a mutually independent sequence of i.i.d. rewards, taken to be Bernoulli with mean $0 \leq \mu(i) \leq 1$. 
Let $i^* = \arg\max\nolimits_{i\in \mathcal{K}}\mu(i)$ denote the optimal arm. 
Define the suboptimality gap of arm $i$ as $\Delta_i \equiv \mu(i^*) - \mu(i)$ and let $\Delta \equiv \min_{i\in \mathcal{K}\setminus\{i^*\}}\Delta_i$ denote the smallest suboptimality gap in the arm set.

Agents operate \textit{asynchronously.} 
Let $N_j$ be the total number of decisions made 
by agent $j$; agent $j \in \mathcal{A}$ pulls arms at time slots $t^{j}_1, t^{j}_2, ...,$ and $t^{j}_{N_j}$, where both $N_j$ and the time slots are not known by any agent including agent $j$. 
We make no assumptions about when agents pull arms and the total number of pulls they make. One agent may pull many arms within an arbitrary interval, while another agent might not pull any arm. 
Furthermore, agents are allowed to join, leave, and re-join the system at arbitrary times. Let $T \equiv \max\nolimits_{j\in\mathcal{A}}t^j_{N_j}$ denote the learning horizon of the entire group of agents and $N\equiv \sum_{j \in \mathcal{A}} N_j$ denote the total number of decisions among all agents over the time horizon.

We consider the problem where there are no \textit{collisions}; i.e., agent always receives a Bernoulli reward with mean $\mu(i)$ from arm $i\in  \mathcal{K}$, irrespective of the actions of other agents.
Each agent $j \in \mathcal{A}$ pulls one arm at time $t \in \{t^{j}_1, t^{j}_2, ..., t^{j}_{N_j}\}$ with the goal of minimizing its cumulative regret. The expected cumulative regret of a single agent $j$ is defined as 
	$$\mathbb{E}[R^{j}_{N_j}] = \mu(i^*) N_j - \mathbb{E}\big[\sum\nolimits_{t \in \{t^{j}_1, t^{j}_2, ..., t^{j}_{N_j}\}} x_t(I^{j}_t)\big],$$
where $I^{j}_t \in \mathcal{K}$ is the arm pulled by agent $j$ at time $t$, reward $x_t(I^{j}_t)$ is taken from Bernoulli distribution with value 0 or 1, and the expectation is taken over the randomness of agent's decisions and arm rewards.
We denote the number of times agent $j$ pulls arm $i$ by time $t$ as $n^t_j(i)$, and the number of decisions agent $j$ makes by time $t$ as $n^t_j$.
We assume that every agent can reliably communicate with every other agent to share their observations. 
Let $\hat{n}^t_j(i)$ denote the empirical number of observations of arm $i$ that agent $j$ has at time $t$, either by pulling the arm, or obtained from other agents, and let $\hat{n}_j^t$ denote the total empirical number of observations agent $j$ has at time $t$.
The objective of the cooperative MAMAB problem is to minimize expected \textit{group regret}, defined as
% \begin{align*}\label{eq:group-regret}
	$\mathbb{E}[R] = \sum\nolimits_{j \in \mathcal{A}} \mathbb{E}[R^{j}_{N_j}]$, %,
% \end{align*}
 while maintaining low communication overhead.
Let $C$ denote the total number of messages sent by agents in horizon $T$, as in~\cite{wang2020optimal, yang2021cooperative, yang2022distributed}.
We precisely define the information included in a message in Definition~\ref{def:msg} in \S\ref{sec:odc}.

%!TEX root = ODC-CameraReady.tex

\begin{algorithm*}[!t]
    \footnotesize
	\caption{\ODC for Agent $j$ }
	\label{alg:ODC}
	\begin{algorithmic}[1]
		\State \textbf{Initialization:} exchange demands $E^{j\rightarrow j^\prime} \gets \texttt{True}, \forall j^\prime \in \mathcal{A}$, buffers $b_n^{j\rightarrow j^\prime}(i)\gets 0$, $b_\mu^{j\rightarrow j^\prime}(i)\gets 0, \forall j^\prime \in \mathcal{A}, \forall i\in \mathcal{K}$, number of communications $c^{j \rightarrow j^\prime} \gets 1, \forall j^\prime \in \mathcal{A}$, \BTs $f(c^{j \rightarrow j^\prime}) \gets f(1), \forall j^\prime\in \mathcal{A}$
% 		, algorithm parameters
        \For{$t=1...T$}
            \If{$t$ is a decision time slot of agent $j$, i.e., $t \in \{t^j_1,...,t^j_{N_j}\}$} \Comment{\texttt{decision making}}
        		\State Run an underlying bandit algorithm: pull arm $I_t^{j}$ according to e.g., \texttt{UCB}, or \texttt{AAE}, and receive instantaneous reward $x_t(I^j_t)$ 
        		\State Update the parameters of the underlying bandit algorithm, e.g., empirical mean rewards, number of observations
        		\For{each agent $j^\prime\in \mathcal{A}$}
        		\State Update the buffer for  agent $j^\prime$: $b_n^{j\rightarrow j^\prime}(I^j_t) \gets b_n^{j\rightarrow j^\prime}(I^j_t) +1$, $b_\mu^{j\rightarrow j^\prime}(I^j_t) \gets b_\mu^{j\rightarrow j^\prime}(I^j_t) + x_t(I^j_t)$
        		\If{$E^{j\rightarrow{j^\prime}}$ is $\texttt{True}$ and $\sum_{i\in \mathcal{K}}b_n^{j\rightarrow j^\prime}(i) \geq f(c^{j \rightarrow j^\prime})$} \Comment{\texttt{information sharing}}
        		%\State Share instantaneous reward $x_t(I^j_t)$ with agent $j^\prime$
        		\State Share the buffered information with $j^\prime$, i.e., send a message as defined in Definition~\ref{def:msg}, Set $c^{j\rightarrow j^\prime} \gets c^{j\rightarrow j^\prime} + 1$
        		%\State Share the buffered information with $j'$, i.e., arm indexes, and no. of observations since last sharing round, empirical reward means
        		\State Set exchange demand  $E^{j\rightarrow{j^\prime}} \gets \texttt{False}$ and renew the buffer for agent $j^\prime$ 
        		\State Update \BT $f(c^{j\rightarrow j^\prime})$, e.g., double it $f(c^{j\rightarrow j^\prime}) \gets 2f(c^{j\rightarrow j^\prime}-1)$ or keep it the same 
        % 		\Else \Comment{\texttt{update the buffer information}}
        % 		\State Update the buffered information for $I^j_t$ and agent $j'$
        		\EndIf
        		\EndFor
    		\EndIf
    		%\For{each agent $j^\prime\in \mathcal{A}$} 
        		\For{each new message received from any agent $j^\prime \in \mathcal{A}$}  \Comment{\texttt{message processing}}
        		\State Update the parameters of the underlying bandit algorithm, e.g., empirical mean rewards, number of observations
        		\If{agent $j$ has buffered $f(c^{j \rightarrow j^\prime})$ observations for $j^\prime$, i.e., $\sum_{i\in \mathcal{K}}b_n^{j\rightarrow j^\prime}(i) \geq f(c^{j \rightarrow j^\prime})$}
        		\State Share information by sending a message as defined in Definition~\ref{def:msg} to $j^\prime$, Set $c^{j\rightarrow j^\prime} \gets c^{j\rightarrow j^\prime} + 1$, renew buffer for $j^\prime$
        		\State Update \BT $f(c^{j\rightarrow j^\prime})$, e.g., double it $f(c^{j\rightarrow j^\prime}) \gets 2f(c^{j\rightarrow j^\prime}-1)$ or keep it the same
        		\Else
        		\State Set exchange demand $E^{j\rightarrow{j^\prime}} \gets \texttt{True}$
        		\EndIf
        		\EndFor
        % 		\Par{} 
        % 		\EndPar
    		%\EndFor
        \EndFor
	\end{algorithmic}
\end{algorithm*}

\section{ALGORITHM DESIGN}

  In \S\ref{sec:odc}, we first elaborate the design and provide intuition behind the On-Demand Communication (\ODC) protocol. Then, we incorporate \ODC into bandit algorithms and propose two communication-efficient cooperative bandit algorithms: \UCBODC (\S\ref{sec:ucb-odc}) and \AAEODC (\S\ref{sec:aae-odc}).

\begin{figure*}[!t]
    \vspace{-2.5mm}
	\centering
 	\subfloat[\ODC with static \BTs, i.e., \BTs are updated according to $f(1) \gets 1$, $f(c) \gets f(c-1)$ ]{\includegraphics[width=0.475\textwidth]{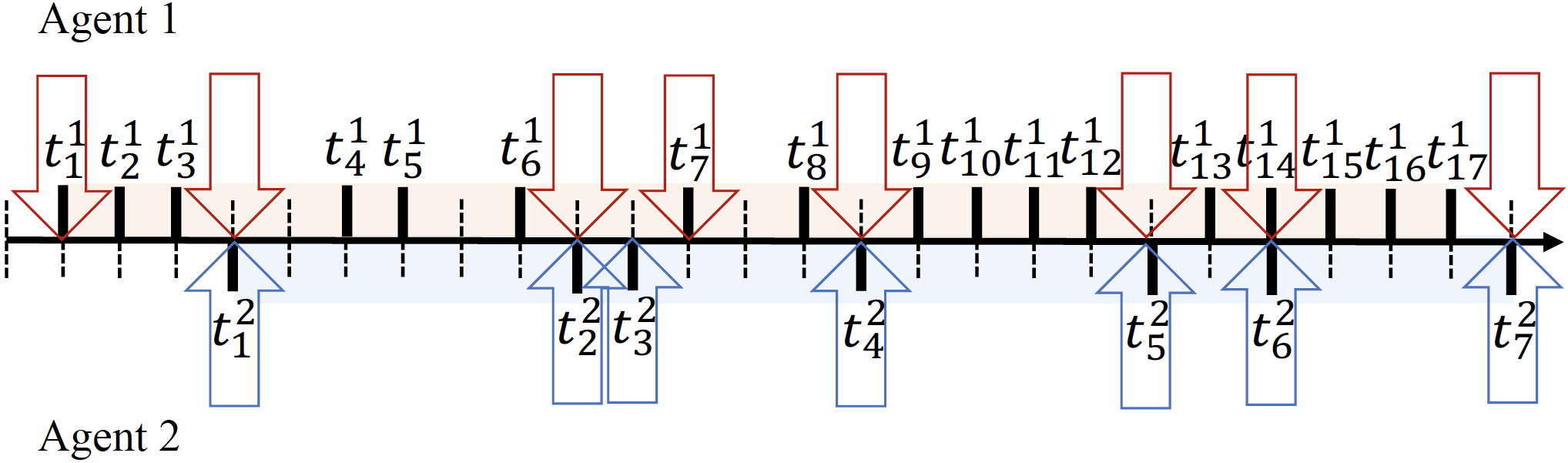}
	\label{fig:timeline-static}}
	\hfill
	\subfloat[\ODC with doubling \BTs, i.e., thresholds are updated according to $f(1) \gets 1$, $f(c) \gets 2f(c-1)$]{\includegraphics[width=0.475\textwidth]{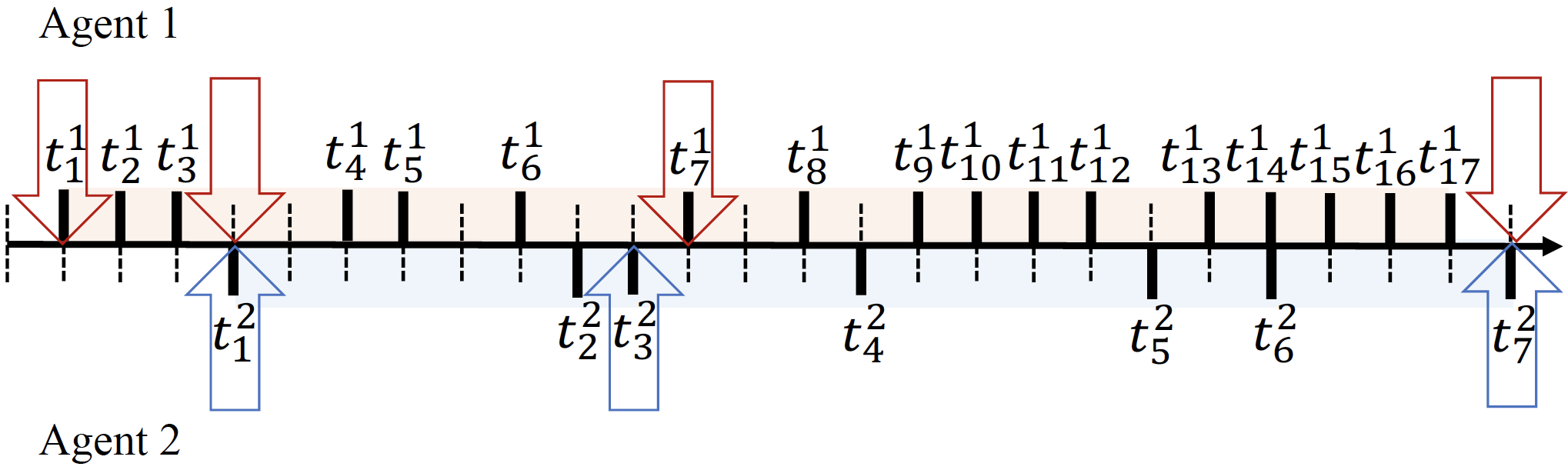}
	\label{fig:timeline-doubling}}
	\vspace{-2mm}
	\caption{Examples of two agents with arbitrary decision times}
	\label{fig:timeline}
\end{figure*}

\subsection{\ODC: On-Demand Communication Protocol} \label{sec:odc}
We present the On-Demand Communication (\ODC) protocol (\ODC) summarized in Algorithm~\ref{alg:ODC}. The core idea of \ODC is to leverage the fact that agents pull arms at different rates to reduce communication complexity while achieving the same order of regret achieved by algorithms that immediately share rewards. 
Consider a scenario with a \textit{fast} and a \textit{slow} agent. By \textit{fast}, we mean the agent pulls many arms while a \textit{slow} agent pulls very few arms during the same time horizon. If agents immediately share their observations, the fast agent incurs a large communication overhead by sending multiple messages between two consecutive decision rounds of the slow agent. In fact, the fast agent can reduce communication overhead while achieving the same regret if it aggregates the instantaneous rewards during the slow agent's non-decision period and sends the information all at once prior to the slow agent's next decision round. Hence, different agent pull rates motivate a new communication protocol that reduces communication complexity by scheduling communication times for each pair of agents according to their pull rates. 

Given the above motivation, one idea is to allow each agent to receive other observations at a rate proportional to its pull rate. In our asynchronous MAMAB model, it is challenging to tailor communication timings because agent pull times are irregular and unknown. A straightforward way to achieve this is to allow agents to request observations from other agents prior to pulling arms. However, requests introduce extra communication overhead, i.e., fast agents may make too many requests to slow agents before they obtain new reward information to share.

The idea implemented in \ODC is to treat each observation sharing message as an exchange demand.
Specifically, we let each agent $j$ maintain a set of binary valued exchange demand variables $(E^{j\rightarrow 1}, E^{j\rightarrow 2},..., E^{j\rightarrow M})$.
Once agent $j$ receives a message from agent $j^\prime$, it sets exchange demand $E^{j\rightarrow{j^\prime}}$ to $\texttt{True}$. 
Then, when agent $j$ acquires new information 
it responds back 
to agent $j^\prime$ and resets $E^{j\rightarrow {j^\prime}}$ to $\texttt{False}$. 
If agent $j$ acquires new information while $E^{j\rightarrow {j^\prime}}$ is $\texttt{False}$, agent $j$  \textit{buffers} it for agent $j^\prime$ while waiting for the exchange demand to be set to $\texttt{True}$.
Specifically, the buffer maintained for agent $j^\prime$ records the following information for each arm $i$: (1) the number of observations of $i$ that agent $j$ acquires by pulling it since the last time agent $j$ sent a message to agent $j^\prime$, denoted as $b_n^{j\rightarrow j^\prime}(i)$, (2) the cumulative reward over the observations of arm $i$ that agent $j$ acquires from pulling it since the last round agent $j$ sends a message to $j^\prime$, denoted as $b_\mu^{j\rightarrow j^\prime}(i)$. 
After agent $j$ sends the buffered information in a message to agent $j^\prime$, it renews the buffer by resetting %every value, 
$b_n^{j\rightarrow j^\prime}(i), b_\mu^{j\rightarrow j^\prime}(i), \forall i\in \mathcal{K}$, to zero.
\begin{myDef}\label{def:msg}
A message sent from agent $j$ to agent $j^\prime$ is a set of $K$ tuples: \{$(b_n^{j\rightarrow j^\prime}(i), b_\mu^{j\rightarrow j^\prime}(i))$, $\forall i\in\mathcal{K}$\}.\footnote{If an agent buffers $n$ observations, the number of observations and cumulative reward of each arm take value in $\{0,...,n\}$ and require $\log(n+1)$ bits.}
\end{myDef}

\ODC also implements thresholds on the buffer sizes\footnote{One may apply the batch/epoch setting techniques in synchronous MAMAB literature to the buffer threshold setting here.}, i.e., a buffer must contain at least as many observations as the \BT, that together with exchange demands determine whether an agent should send a message to another agent. Specifically, each agent $j$ maintains a set of positive integer valued variables $(c^{j\rightarrow 1},..., c^{j\rightarrow M})$ denoting the number of communications from agent $j$ to other agents. 
The \BT, $f(c^{j\rightarrow j^\prime})$, is a positive and monotonically increasing function of the number of communications $c^{j\rightarrow j^\prime}$. After agent $j$ sends a message to agent $j^\prime$ when the communication counter is $c^{j\rightarrow j^\prime}$, the communication counter is incremented by one and the \BT for the next communication is $f(c^{j\rightarrow j^\prime} + 1)$.
Possible candidates for the buffer threshold function include $f(c) = a, c=1,2,\ldots$, where $a$ is a positive integer, and $f(c) =  a^{c-1}, c=1,2,\ldots$, where $a>1$ is a positive integer.  The first example produces a constant \BT and the second allows \BT to increase exponentially  each time a message is sent. Under \ODC, an agent $j$ sends a message to agent $j^\prime$ if $E^{j\rightarrow j^\prime}$ is \texttt{True} and agent $j$ has buffered at least $f(c^{j\rightarrow j^\prime})$ observations for agent $j^\prime$.

In Figure~\ref{fig:timeline}, we provide simple examples of two agents with arbitrary decision time. To illustrate how \ODC (Algorithm~\ref{alg:ODC}) works, we describe the communication schedule for the example in Figure~\ref{fig:timeline-doubling}. %where there is a \textit{fast} agent and a \textit{slow} agent. 
Agent 1 first sends a message to agent 2 at time $t^1_1$, sets exchange demand $E^{1\rightarrow2}$ to \texttt{False}, sets $c^{1\rightarrow2}$ to $2$, and updates $f(c^{1\rightarrow2})$ to $2$. At $t^1_2$ and $t^1_3$, agent 1 pulls arms and buffers the obtained observations because $E^{1\rightarrow2}$ is \texttt{False}. At $t^2_1$, 
agent 1 receives a message from agent 2, replies with a message containing the observations obtained at $t^1_2, t^1_3$, renews the buffer, sets $c^{1\rightarrow2}$ to $3$, and updates $f(c^{1\rightarrow2})$ to $4$. At $t^1_4, t^1_5, t^1_6$, agent 1 pulls arms and buffers the obtained observations. At $t^2_3$, agent 1 receives a message from agent 2; instead of replying with a message, agent 1 sets the exchange demand $E^{1\rightarrow 2}$ to \texttt{True} at this time because it only buffered three observations while the \BT $f(c^{1\rightarrow2})$ is $4$. At $t^1_7$, agent 1 obtains an additional observation and satisfies the \BT; hence, it sends a message to agent 2, renews the buffer, sets $E^{1\rightarrow 2}$ to \texttt{False}, sets $c^{1\rightarrow2}$ to $4$, and updates $f(c^{1\rightarrow2})$ to $8$.

Last, we note that \ODC can handle agent arrivals and departures. Agent notifies the others when it departs or goes offline. When agent $j$ receives a departure notice from agent $j^\prime$, it sets exchange demand $E^{j\rightarrow{j^\prime}}$ to $\texttt{False}$, so it will buffer information for $j^\prime$.  
When an agent (re)joins the system, it notifies all other agents. 
%it send its set of arm indices to all other agents and requests the sets of arm indices from them.
When agent $j$ receives a join notice from agent $j^\prime$, it %replies with its set of arm indices and
(re)initializes exchange demand $E^{j\rightarrow{j^\prime}}$ to \texttt{True} so that %it can start cooperation.
the observations buffered during agent $j^\prime$'s leaving can be sent to it to (re)start cooperation.

\subsection{\UCBODC: Cooperative UCB with \ODC}
\label{sec:ucb-odc}
In this section, we present \UCBODC, a fully decentralized cooperative MAMAB algorithm that samples according to a natural extension of the Upper Confidence Bound (UCB) algorithm and uses \texttt{ODC} for communications. %with \textit{buffered} information sharing for communication. 
Under \UCBODC, agent $j$ computes an empirical mean reward, $\hat{\mu}(i, \hat{n}^t_j(i))$, over $\hat{n}^t_j(i)$ observations of agent $j$ for each arm $i \in \mathcal{K}$.
Note that the value of $\hat{n}^t_j(i)$ not only consists of instantaneous rewards agent $j$ received from pulling arm $i$, but it also includes information agent $j$ received from other agents. Under \UCBODC, agent $j$ also maintains a confidence interval for arm $i$ centered on its empirical mean value, $\hat{\mu}(i, \hat{n}^t_j(i))$, with width defined as 
\begin{equation}\label{eq:confidence-interval}
    %\texttt{CI}_j(i) \equiv \sqrt{\frac{\alpha \log \delta^{-1}}{2 \hat{n}^t_j(i)}},
    \texttt{CI}^t_j(i) \equiv \sqrt{\alpha \log(1/{\delta^t_j})/(2 \hat{n}^t_j(i))},
\end{equation}
where $\alpha$ and $\delta^t_j$ are algorithm parameters. 
With probability at least $1-(\delta^t_j)^\alpha$, the true reward mean, $\mu(i)$, lies in its confidence interval, i.e., $\mu(i) \in [\hat{\mu}(i, \hat{n}^t_j(i)) - \texttt{CI}^t_j(i), \hat{\mu}(i, \hat{n}^t_j(i)) + \texttt{CI}^t_j(i)].$
Further discussion and analysis of the confidence interval can be found in~\cite{bubeck2012regret}. 

Under \UCBODC, agent $j$ selects the arm with the largest upper confidence bound at each decision round, i.e., 
%\begin{align}\label{eq:ucb}
   $$ I^j_t \equiv \arg\max_{i\in\mathcal{K}} \hat{\mu}(i, \hat{n}^t_j(i)) + \texttt{CI}^t_j(i), \,\, t \in \{t_1^j,...t_{N_j}^j\}.$$
%\end{align}
Upon receiving an instantaneous reward for the selected arm $I^j_t$, \UCBODC updates the reward mean estimate and confidence interval of $I^j_t$. 
In the meantime, agent $j$ follows \ODC checking exchange demands, \BTs, and accordingly buffering the information or sending messages.
The pseudocode of \UCBODC is in Appendix~\ref{sec:pseudo-UCB}.

\subsection{\AAEODC: Cooperative AAE with \ODC}
\label{sec:aae-odc}
We propose \AAEODC, which combines \texttt{ODC} with a natural extension of the Active Arm Elimination (AAE) algorithm~\citep{even2006action}. 
Agents executing \AAEODC together maintain a dynamic \textit{candidate set} to keep track of arms likely to be the optimal arm, where the candidate set is updated using the confidence intervals as defined in (\ref{eq:confidence-interval}).
Specifically, the candidate set initially contains all arms. When agents observe or receive rewards, they recompute the confidence intervals of arms; if an arm's confidence interval completely falls below that of any other arm, it is removed from the candidate set as it is unlikely to be the optimal arm. Formally, arm $i$ is removed from the candidate set at time $t$ if for any agent $j$:
$$\exists i^\prime \in \mathcal{K} \text{ s.t. } \hat{\mu}^t_j(i) + \texttt{CI}^t_j(i) < \hat{\mu}^t_j(i^\prime) -\texttt{CI}^t_j(i^\prime).$$
Once an arm is eliminated by an agent, the agent broadcasts the index of the eliminated arm, so that other agents can keep updating the candidate set.
At each decision round, agent $j$ pulls from the candidate set the arm that agent $j$ has fewest observations. 
Once the candidate set size reduces to one, agents have completed the exploration task having identified optimal arm with high probability, and they do not need information from one another anymore.  
Hence, agents under \AAEODC stop their communication once the candidate set size shrinks to one.
The cooperation policy of \AAEODC follows the \ODC protocol and is summarized in the
pseudocode of \AAEODC in Appendix ~\ref{sec:pseudo-AAE}.

%!TEX root = ODC-CameraReady.tex

\section{ANALYSIS OF REGRET AND COMMUNICATION COMPLEXITY}\label{sec:analysis}

When agents are asynchronous and pull arms in arbitrary time slots, the performance of a cooperative bandit algorithm depends on how much agents can cooperate with each other.
A unique technical challenge in the regret analysis of \texttt{UCB-ODC} and \texttt{AAE-ODC} is bounding the additional number of times agents pull suboptimal arms due to delayed observation sharing while waiting for exchange demands between asynchronous agents or waiting for \BTs to be satisfied. 
To facilitate the regret analysis, we let $\tau_i$ denote the time slot such that 
%$$\mathscr{N}(i) \geq \sum\nolimits_{j \in \mathcal{A}} n_{j}^{\tau_i-1}(i),$$
$$\mathscr{N}(i) + M \geq \sum\nolimits_{j \in \mathcal{A}} n_{j}^{\tau_i}(i) > \mathscr{N}(i) \geq \sum\nolimits_{j \in \mathcal{A}} n_{j}^{\tau_i-1}(i),$$
for each suboptimal arm $i \in \mathcal{K}\setminus \{i^*\}$, where for \UCBODC, $\mathscr{N}(i) = (2\alpha \log N)/\Delta_i^2$, and for \AAEODC, $\mathscr{N}(i) = (16\alpha \log N)/\Delta_i^2$.

\subsection{Regret Results}

\begin{theorem}[Expected Group Regret under \ODC]\label{thm:group-regrets}
With algorithm parameter $\alpha \geq 3$ and \BTs being updated according to a positive and monotonically increasing function $f$, we have:\\
\emph{(a)} with $\delta^t_j = 1/N$, the expected group regret of \emph{\UCBODC} satisfies
\begin{equation}\label{eq:UCBODC-regret}
    \mathbb{E}[R]\leq 3KM +\sum_{i\in\mathcal{K}:\Delta_i>0}\Big(\frac{2\alpha \log N}{\Delta_i} +\sum_{j\in\mathcal{A}} F^j_i \Delta_i\Big),
\end{equation}
where $F^j_i$ is a non-negative variable defined as\footnote{We drop $t$ from notations $c_t^{j\rightarrow j^\prime}$ and $E_t^{j \rightarrow j^\prime}$ in algorithm presentations for brevity. The precise notations are used in analysis.}
\begin{equation}\label{eq:fji}
    F^j_i = \min\Big\{\Big(\sum_{j^\prime \in \mathcal{A}\setminus\{j\}}f(c_{\tau_i}^{j^\prime \rightarrow j})\Big), \frac{2\alpha \log N}{\Delta_i^2}\Big\};
\end{equation}
\emph{(b)} with $\delta^t_j =  1/N^2$, the expected group regret of \emph{\AAEODC} satisfies
\begin{equation} \label{eq:AAEODC-regret}
	\mathbb{E}[R] 
	\leq 3KM+\sum\limits_{i\in \mathcal{K}:\Delta_i>0} \Big(\frac{16 \alpha \log N}{\Delta_i} + \sum\limits_{j\in\mathcal{A}} G^j_i \Delta_i\Big),
\end{equation}
where 
$G^j_i$ is a non-negative variable defined as
\begin{equation}\label{eq:gij}
    G^j_i = \min\Big\{\Big(\sum_{j^\prime \in \mathcal{A}\setminus\{j\}}f(c_{\tau_i}^{j^\prime \rightarrow j})\Big), \frac{16\alpha \log N}{\Delta^2_i}\Big\}.
\end{equation}
\end{theorem}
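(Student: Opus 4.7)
The plan is to follow the standard UCB/AAE regret decomposition, modified to absorb the communication delay inherent in \ODC. Both bounds admit the same three-part structure, so I would describe a single argument and specialise the constants to parts (a) and (b). First, introduce the clean event $\mathcal{E}$: for every agent $j$, every time $t\le T$, and every arm $i$, the empirical mean satisfies $|\hat{\mu}(i,\hat n^t_j(i))-\mu(i)|\le \texttt{CI}^t_j(i)$. With $\delta^t_j=1/n^t_j$ and $\alpha\ge 2$, Hoeffding's inequality together with a union bound over $t\le N$, over the $K$ arms, over the $M$ agents, and over the feasible values of $\hat n^t_j(i)\in\{1,\dots,N\}$ yields $\Pr[\mathcal{E}^c]$ small enough to contribute at most the additive $3KM$ term to the group regret. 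All remaining analysis is conditioned on $\mathcal{E}$.

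Conditioning on $\mathcal{E}$ and using the definition of $\tau_i$, I would next show that had every agent possessed the full global count $\sum_{j}n^{\tau_i}_j(i)>\mathscr{N}(i)$, then the UCB of arm $i$ would have been at most $\mu(i^*)$, because $\texttt{CI}\le \Delta_i/2$ and $\hat\mu(i)\le\mu(i)+\Delta_i/2$ on $\mathcal{E}$. For \UCBODC this forbids further selection of $i$; for \AAEODC the same inequality, with the factor~$2$ blow-up arising from comparing $\hat\mu(i)+\texttt{CI}$ to $\hat\mu(i^*)-\texttt{CI}$, causes elimination from the candidate set, explaining the $8\alpha$ in place of $2\alpha$. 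This accounts for the leading $2\alpha\log N/\Delta_i$ (resp.\ $8\alpha\log N/\Delta_i$) contribution per suboptimal arm.

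The genuinely new ingredient is bounding the extra pulls each agent $j$ makes of arm $i$ after $\tau_i$ because of in-flight observations. Let $B^{j'\to j}_t(i)$ denote the number of arm-$i$ observations buffered at $j'$ for $j$ at time $t$. A direct identity gives $\hat n^{\tau_i}_j(i)=\sum_{j'}n^{\tau_i}_{j'}(i)-\sum_{j'\neq j}B^{j'\to j}_{\tau_i}(i)\ge\mathscr{N}(i)-\sum_{j'\neq j}B^{j'\to j}_{\tau_i}(i)$, so agent $j$ can pull $i$ at most $\sum_{j'\neq j}B^{j'\to j}_{\tau_i}(i)$ additional times after $\tau_i$ before its local evidence alone forces it to stop. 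The \ODC handshake then provides the key invariant $B^{j'\to j}_{t}(i)\le f(c^{j'\to j}_{t})$ (up to a slack absorbed in the $3KM$ constant): whenever the buffer reaches the current threshold with $E^{j'\to j}=\texttt{True}$ the next pull flushes it, and whenever $E=\texttt{False}$ a pending reply from $j$ will toggle $E$ back and trigger a flush at the following event. This yields the $\sum_{j'\neq j}f(c^{j'\to j}_{\tau_i})$ term inside $F^j_i$ and $G^j_i$; the other side of the minimum is trivial because $j$ performs at most $\mathscr{N}(i)$ pulls of $i$ in total. Multiplying by $\Delta_i$ and summing over suboptimal arms and agents completes both bounds.

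The main obstacle will be the rigorous verification of the buffer invariant at the specific time $\tau_i$, because \ODC's demand-based handshake permits the buffer to transiently exceed the current threshold while $E^{j'\to j}=\texttt{False}$. A careful case analysis tied to the last message sent along $j'\to j$ and the last message received from $j$ will be needed to show that any overshoot is either zero at the moment $\tau_i$ is examined or can be charged to the $3KM$ constant. Everything else is a book-keeping extension of the classical single-agent UCB and AAE regret analyses, with the per-agent suboptimal-pull counts replaced by local counts $\hat n^t_j(i)$ that aggregate both self-pulls and delayed messages.
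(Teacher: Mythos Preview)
Your high-level plan mirrors the paper's proof: split into confidence-interval failures, pulls up to $\tau_i$, and extra post-$\tau_i$ pulls caused by undelivered buffers, then bound the last piece by a case analysis on the exchange-demand flag. The case analysis you anticipate is exactly what the paper carries out; it resolves the $E^{j'\to j}_{\tau_i}=\texttt{False}$ overshoot via the handshake invariant that whenever $E^{j'\to j}_t=\texttt{False}$ one has $c^{j'\to j}_t-c^{j\to j'}_t\in\{0,1\}$, so monotonicity of $f$ gives $f(c^{j\to j'}_{\tau_i})\le f(c^{j'\to j}_{\tau_i})$ and the per-$j'$ contribution to agent $j$'s extra pulls of arm $i$ is at most $f(c^{j'\to j}_{\tau_i})$ in every case.

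There is, however, a genuine gap in your first step. With $\delta^t_j=1/n^t_j$, the earliest decisions of each agent have $\delta$ near $1$, so a global clean event $\mathcal{E}$ satisfies only $\Pr[\mathcal{E}^c]=O(KM)$ at best (and your additional union over $t\le N$ and over all feasible $\hat n$ values makes this worse, not better); multiplying by the worst-case regret $N$ on $\mathcal{E}^c$ then blows up rather than yielding $O(KM)$. The paper does \emph{not} condition on a single clean event. Instead it labels each individual decision Type-I (all CIs hold) or Type-II (some CI fails) and bounds the \emph{expected number} of Type-II decisions by $\sum_{j}\sum_{l=1}^{N_j}2K/l^{\alpha}\le 2KM$, each contributing at most $1$ to regret since $\Delta_i\le 1$. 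The remaining $KM$ in the $3KM$ constant comes not from CI failures but from the slack $\sum_j n^{\tau_i}_j(i)\le \mathscr{N}(i)+M$ built into the definition of $\tau_i$. Replace your clean-event conditioning by this per-decision accounting and run the buffer argument on Type-I decisions only; the remainder of your plan then matches the paper.
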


The proofs of Theorem~\ref{thm:group-regrets}(a) and~\ref{thm:group-regrets}(b) deal with each suboptimal arm $i$, 
and upper bounds the extra number of times each agent pulls arm $i$ after time $\tau_i$.
A formal proof is given in Appendix~\ref{sec:group-regrets-proof}. 
In the following, we highlight important properties of the regret results under \ODC. 

\begin{remark} (Regret characterization by total number of decision rounds.)
We observe that the expected regret in Theorem~\ref{thm:group-regrets} is characterized by $N$, the total number of decision rounds among all agents in the learning horizon.
This is in contrast to the synchronous agent setting where regret is usually presented as a function of the learning horizon $T$ and the total number of agents $M$.
When agent pull rates significantly differ from each other, Theorem~\ref{thm:group-regrets} provides a much tighter regret bound than those derived for synchronous settings, as $N$ can be much smaller than $M\times T$.
\end{remark}

\begin{remark} (Regret optimality.)\label{rmk:regret-optimal}
When \BTs are set to a small constant $a$, i.e., $f(c^{j\rightarrow j^\prime})=a, c^{j\rightarrow j^\prime}=1,2,\ldots, \forall j, j^\prime$, then $F^j_i, \forall i,j$ (resp. $G^j_i, \forall i,j$) is bounded by constant $Ma$, and \emph{\UCBODC} (resp. \emph{\AAEODC}) achieves a provable optimal regret upper bound.
To show this, we derive the following lower bound on group regret by adopting the proof techniques for the asymptotic lower bound for single-agent bandits, e.g.,~\cite[Theorem~2.2]{bubeck2012regret}:
\begin{equation}\label{eq:lower-bound}
    %\liminf_{T \rightarrow \infty} \frac{\mathbb{E}[R]}{\log T} \geq \sum_{i\in \mathcal{K}:\mathcal{A}^*_{-i}\neq \emptyset} \frac{\tilde{\Delta}_i}{\text{KL}(\mu(i), \mu(i)+\tilde{\Delta}_i)} = \Omega \left( \sum_{i\in \mathcal{K}:\mathcal{A}^*_{-i}\neq \emptyset} \frac{1}{\tilde{\Delta}_i} \right).
    \mathbb{E}[R] = \Omega \Big( \sum\nolimits_{i\in \mathcal{K}:\Delta_i > 0} \frac{\log N}{\Delta_i} \Big).
    %\mathbb{E}[R] = \Omega \Bigg( \sum\nolimits_{i\in \mathcal{K}:\Delta_i > 0} \log(N)/\Delta_i \Bigg).
\end{equation}
The proof of (\ref{eq:lower-bound}) is given in Appendix~\ref{sec:lower-bound}.
Then, since $F^j_i$ (resp. $G^j_i$) is a constant $\forall i,j$, one observes that the regret of \emph{\UCBODC} in~\eqref{eq:UCBODC-regret} (resp. of \emph{\AAEODC} in \eqref{eq:AAEODC-regret}) is near-optimal compared to the lower bound (\ref{eq:lower-bound}), up to two constant terms, i.e., $3KM$ and $\sum_{i,j} F^j_i$ (resp. $\sum_{i,j} G^j_i$). 
\end{remark}

\begin{remark}
\label{rem:unsent}(Impact of \BTs.)  
The setting of \BTs influences the trade-off between communication complexity and group regret. Remark~\ref{rmk:regret-optimal} shows that \emph{\UCBODC} and \emph{\AAEODC} have near-optimal regrets if \BTs are set to be small (compared to $\log(N)/\Delta^2$). If \BTs are simply set to be always large, one can reduce the communication complexity while incurring higher regret. Depending on specific scenarios, e.g., as in Remark~\ref{rmk:regret-synchronous}, \BTs can be wisely set to achieve low communication while not degrading the regret much.

\end{remark}

\begin{remark}(Performance in synchronous setting.)\label{rmk:regret-synchronous}
When applied to a MAMAB setting with synchronous agents where every agent makes a decision at every time slot, our asynchronous algorithm \emph{\AAEODC} recovers a near-optimal regret $O(\sum_{i\in\mathcal{K}:\Delta_i>0}\log(N)/\Delta_i)$ with logarithmic communication complexity by setting a doubling \BT whose size is proportional to the number of arms remaining in the candidate set, $\mathcal{C}$,  i.e., $f(c)=|\mathcal{C}| 2^{c-1}$, $c=1,2,\ldots$.
We show this in Appendix~\ref{sec:recover-regret} and discuss the recovery of logarithmic communication complexity in Remark~\ref{rmk:comm-recover}.
\end{remark}

%!TEX root = ODC-CameraReady.tex

\begin{table*}
    \caption{Summary of Results (all regret bounds are problem-dependent and we omit the \(1/\Delta\) factor)}
    \centering
    \begin{tabular}{l|llll}
          \toprule
           & Pull Times
           & Buffer Thres.
           & Group Regret
           & Communication \#\\ \midrule
          \texttt{UCB-ODC}
           & Async., Sync.
           & Constant
           & \(O(K\log N)\)
           & \(O(\sum_{j, j^\prime\in\mathcal{A}} \min\{N_j, N_{j^\prime}\})\) \\
          \texttt{AAE-ODC}
           & Async., Sync.
           & Constant
           & \(O(K\log N)\)
           & \(O(\sum_{j, j^\prime\in\mathcal{A}}K\min\{\log N, N_j, N_{j^\prime}\}/\Delta^2)\) \\
           \texttt{AAE-ODC}
           & Sync.
           & Doubling
           & \(O(K\log N )\)
           & \(O(\sum_{j, j^\prime\in\mathcal{A}}\log[K\min\{\log N, N_j, N_{j^\prime}\}/\Delta^2])\) \\
           \bottomrule
    \end{tabular}
\end{table*}

\subsection{Communication Complexity}

\begin{remark}[Communication Complexity under \texttt{IBC}]\label{prop:IBC-comm}
The communication complexity of MAMAB algorithms using immediate broadcasting communication (\emph{\texttt{IBC}}) is 
    $$C = \sum\nolimits_{j \in \mathcal{A}} \sum\nolimits_{j^\prime \in \mathcal{A}\setminus\{j\}} N_j.$$
\end{remark}

\begin{theorem}[Communication Complexities under \ODC]\label{thm:comm}
When \BTs are updated according to a positive and monotonically increasing function $f$, the communication complexities of \emph{\UCBODC} and \emph{\AAEODC} satisfy:\\
\begin{equation}\label{eq:comm}
    C \leq \sum\nolimits_{j \in \mathcal{A}} \sum\nolimits_{j^\prime \in \mathcal{A}\setminus\{j\}} \min\{C_j, C_{j^\prime}\}+1,
\end{equation}
where $C_j$ is the largest integer in set $\{1,...,N_j\}$ such that\\
\emph{(a)} for \emph{\UCBODC}
\begin{equation}\label{eq:ucb-C_j}
\Big(\sum_{c=1}^{C_j} f(c)\Big) \leq N_j;
\end{equation}
\emph{(b)} for \emph{\AAEODC}
\begin{equation}\label{eq:aae-C_j}
\Big(\sum_{c=1}^{C_j} f(c)\Big) \leq \min \Big\{ 2K+\sum_{i\in\mathcal{K}}\frac{16\alpha\log N}{\max\{\Delta_i^2, \Delta^2\}}, N_j\Big\}.
\end{equation}

\end{theorem}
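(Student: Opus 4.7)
The plan is to decompose the total communication count $C = \sum_{j \in \mathcal{A}} \sum_{j' \in \mathcal{A}\setminus\{j\}} c^{j \rightarrow j'}$ into pairwise message counts and bound each by a per-agent capacity $C_j$. Two independent bounds on $c^{j \rightarrow j'}$ will be combined: a \emph{ping-pong} bound coming from the exchange-demand mechanism, and a \emph{buffer-accumulation} bound coming from the \BT schedule.

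The first step is to establish a ping-pong invariant. Inspection of Algorithm~\ref{alg:ODC} shows that after its initial transmission, agent $j$ can send a new message to $j'$ only after $E^{j\rightarrow j'}$ has been flipped back to \texttt{True}, which in turn can occur only upon reception of a message from $j'$. Symmetry yields $|c^{j\rightarrow j'}_t - c^{j' \rightarrow j}_t| \leq 1$ at every time $t$, so $c^{j \rightarrow j'} \leq c^{j' \rightarrow j} + 1$ throughout the horizon.

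The second step bounds $c^{j\rightarrow j'}$ from agent $j$'s own side. Each time $j$ sends its $c$-th message to $j'$, the buffer $b^{j\rightarrow j'}$ must have accumulated at least $f(c)$ fresh observations since the previous send; hence sending $c$ messages requires $\sum_{k=1}^{c} f(k)$ distinct pulls by $j$ to have been logged into that buffer. For \UCBODC, every pull by $j$ feeds every buffer $b^{j\rightarrow j'}$, so the total feed is exactly $N_j$, yielding $c^{j\rightarrow j'} \leq C_j$ with $C_j$ as defined in (\ref{eq:ucb-C_j}) and establishing part (a). For \AAEODC, the refinement to (\ref{eq:aae-C_j}) is the main technical step: once agent $j$'s candidate set contracts to a singleton it stops transmitting, so I only need to bound the number of pulls $j$ makes prior to that event. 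I would reuse the AAE concentration argument underpinning Theorem~\ref{thm:group-regrets}(b): on the high-probability event, each suboptimal arm $i$ is eliminated from $j$'s candidate set after $j$ has pulled it at most $\lceil 8\alpha \log N_j / \Delta_i^2 \rceil$ times, and the optimal arm is pulled no more than the last surviving suboptimal arm, whose rate is bounded by $8\alpha \log N_j / \Delta^2$. Summing over arms and adding an $O(K)$ term for the initial round-robin gives the $2K + \sum_{i\in\mathcal{K}} 8\alpha\log N_j / \max\{\Delta_i^2, \Delta^2\}$ quantity; intersecting with the trivial upper bound $N_j$ produces (\ref{eq:aae-C_j}).

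Combining the two bounds yields $c^{j\rightarrow j'} \leq \min\{C_j,\, c^{j' \rightarrow j} + 1\} \leq \min\{C_j, C_{j'}\} + 1$, and summing over the $M(M-1)$ ordered pairs (absorbing the $+1$ offset once) delivers (\ref{eq:comm}). The only step requiring genuinely new analysis is the AAE candidate-set pull count; the rest is a careful accounting of the protocol's alternation structure and the \BT growth schedule.
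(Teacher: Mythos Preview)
Your proposal is correct and mirrors the paper's proof: buffer-threshold accumulation gives $c^{j\rightarrow j'}\le C_j$, the exchange-demand alternation gives $c^{j\rightarrow j'}\le C_{j'}+1$ (the paper phrases this as an explicit case split on whether $C_j\le C_{j'}$ rather than your ping-pong invariant, but the content is identical), and the \AAEODC refinement bounds pre-elimination pulls via Lemma~\ref{lemma:type-I-suboptimal-aae}. One small correction for part~(b): the $2K$ additive term in~(\ref{eq:aae-C_j}) arises in the paper from the \emph{expected} number of Type-II (bad-concentration) decisions over the horizon, not from an initial round-robin, so the resulting bound is in expectation rather than on a high-probability event.
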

Proofs of Theorem~\ref{thm:comm}(a) and~\ref{thm:comm}(b) are in Appendix~\ref{sec:comm-proof}. 

\begin{corollary} When \BTs $f(c) = a, c=1,2,\ldots$, $a$ is a positive integer,  we have: \\
\emph{(a)} the communication complexity of \emph{\UCBODC} is
$$O\Big( \sum\limits_{j, j^\prime \in \mathcal{A}} \min\Big\{\big\lfloor\frac{N_j}{a}\big\rfloor, \big\lfloor\frac{N_{j^\prime}}{a}\big\rfloor\Big\}\Big);$$
\emph{(b)} the communication complexity of \emph{\AAEODC} is
$$O\Big(\sum\limits_{j, j^\prime \in \mathcal{A}} \min\Big\{ 
\Big\lfloor\frac{K\log N}{a\Delta^2}\Big\rfloor
, \big\lfloor\frac{N_j}{a}\big\rfloor, \big\lfloor\frac{N_{j^\prime}}{a}\big\rfloor\Big\}\Big).$$
\end{corollary}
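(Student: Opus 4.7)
The plan is to derive the corollary as a direct consequence of Theorem~\ref{thm:comm}, by solving in closed form for the largest admissible $C_j$ when the buffer threshold function is the constant $f(c)=a$, and then plugging that value back into the master bound~\eqref{eq:comm}. The approach is purely algebraic: no new probabilistic arguments are needed beyond what Theorem~\ref{thm:comm} already supplies.

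For part (a), I would apply~\eqref{eq:ucb-C_j} with $f(c)=a$. The left-hand side collapses to $\sum_{c=1}^{C_j} a = a C_j$, so the constraint becomes $a C_j \leq N_j$, whose largest integer solution is $C_j = \lfloor N_j/a \rfloor$. Substituting into~\eqref{eq:comm} yields
\begin{equation*}
C \leq \sum_{j\in\mathcal{A}}\sum_{j^\prime\in\mathcal{A}\setminus\{j\}} \min\bigl\{\lfloor N_j/a\rfloor, \lfloor N_{j^\prime}/a\rfloor\bigr\} + 1,
\end{equation*}
which absorbs into the stated $O(\cdot)$ bound once one notes that the trailing $+1$ is negligible compared to the double sum.

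For part (b), I would first simplify the right-hand side of~\eqref{eq:aae-C_j}. Because $\max\{\Delta_i^2,\Delta^2\}\geq \Delta^2$ for every arm $i$, one obtains the loose but sufficient upper bound $\sum_{i\in\mathcal{K}} 8\alpha\log N_j / \max\{\Delta_i^2,\Delta^2\} \leq 8\alpha K \log N_j /\Delta^2$. With $f(c)=a$, constraint~\eqref{eq:aae-C_j} then becomes $a C_j \leq \min\{2K + 8\alpha K\log N_j/\Delta^2,\, N_j\}$. Solving for the largest admissible integer gives $C_j = O\bigl(K\log N_j/(a\Delta^2)\bigr)$, after absorbing the additive $2K/a$ into the logarithmic term (valid as soon as $N_j$ is not trivially small; tiny $N_j$ contribute at most $O(1)$ anyway). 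Substituting back into~\eqref{eq:comm} gives the claimed bound.

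The only subtlety, rather than a genuine obstacle, is justifying the absorption of the $2K$ constant and the replacement of the gap-dependent sum $\sum_i 1/\max\{\Delta_i^2,\Delta^2\}$ by the simpler $K/\Delta^2$; both are handled by the inequality above and by noting that $\min\{\cdot,N_j\}$ ensures $C_j\leq N_j/a$ in edge cases where the logarithmic expression would otherwise exceed $N_j/a$. Everything else is mechanical substitution into Theorem~\ref{thm:comm}, so the proof is short and requires no new machinery.
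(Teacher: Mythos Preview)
Your proposal is correct and follows exactly the intended route: the paper treats this corollary as an immediate consequence of Theorem~\ref{thm:comm} and does not supply a separate proof, so the mechanical substitution of $f(c)=a$ into~\eqref{eq:ucb-C_j} and~\eqref{eq:aae-C_j}, followed by the loose bound $\sum_{i\in\mathcal{K}} 1/\max\{\Delta_i^2,\Delta^2\}\le K/\Delta^2$, is precisely what is implicitly assumed.
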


\begin{corollary}\label{coro:doubling-buffer}
When \BTs $f(c) =  a^{c-1}, c=1,2,\ldots$, $a>1$ is a positive integer,  we have:\\ 
\emph{(a)} the communication complexity of \emph{\UCBODC} is
$$O\Big(\sum\limits_{j, j^\prime \in \mathcal{A}} \min\big\{\lfloor\log_a(N_j)\rfloor, \lfloor\log_a(N_{j^\prime})\rfloor\big\}\Big);$$
\emph{(b)} the communication complexity of \emph{\AAEODC} is
{\small
$$O\Big(\!\!\sum\limits_{j, j^\prime \in \mathcal{A}}\! \min\Big\{ 
\Big\lfloor\log_a\!\Big(\!\frac{K\log N}{a\Delta^2}\!\Big)\Big\rfloor
, \lfloor\log_a(N_j)\rfloor, \lfloor\log_a(N_{j^\prime})\rfloor\Big\}\Big).$$
}
\end{corollary}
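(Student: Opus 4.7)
The plan is to specialize Theorem~\ref{thm:comm} to the doubling buffer threshold $f(c)=a^{c-1}$. The geometric series identity
\begin{equation*}
\sum_{c=1}^{C_j} f(c) \;=\; \sum_{c=1}^{C_j} a^{c-1} \;=\; \frac{a^{C_j}-1}{a-1}
\end{equation*}
turns the implicit inequalities defining $C_j$ in \eqref{eq:ucb-C_j} and \eqref{eq:aae-C_j} into explicit logarithmic expressions, after which the bound $C \leq \sum_{j,j'\in\mathcal{A}}\min\{C_j,C_{j'}\}+1$ is substituted directly.

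For part (a), I would observe that the defining inequality $(a^{C_j}-1)/(a-1) \leq N_j$ is equivalent to $a^{C_j} \leq (a-1)N_j + 1$, so the largest integer $C_j$ satisfying the constraint is $\lfloor \log_a((a-1)N_j+1)\rfloor$. Since $a$ is a fixed positive integer $>1$, this is $O(\lfloor \log_a N_j\rfloor)$. Plugging into \eqref{eq:comm} yields part (a).

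For part (b), the same inversion applied to \eqref{eq:aae-C_j} gives
\begin{equation*}
C_j \;=\; \Big\lfloor \log_a\!\Big((a-1)\min\{2K + \textstyle\sum_{i}\tfrac{8\alpha\log N_j}{\max\{\Delta_i^2,\Delta^2\}},\,N_j\} + 1\Big)\Big\rfloor.
\end{equation*}
I would then argue that in the nontrivial regime (i.e., when the algorithm is actually exploring), the $K\log N_j/\Delta^2$ term dominates $2K$, and is no larger than $N_j$ (otherwise the second argument of the $\min$ takes over and the bound in (a) applies), so that $C_j = O(\lfloor\log_a(K\log N_j / (a\Delta^2))\rfloor)$. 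The stated bound then follows by substitution into \eqref{eq:comm}.

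There is no genuine obstacle here: the result is essentially algebraic bookkeeping from Theorem~\ref{thm:comm}. The only point requiring mild care is the second case, where one must check that the $\min$ in \eqref{eq:aae-C_j} is taken by the $K\log N_j/\Delta^2$ expression to produce the claimed logarithmic form; in the opposite regime, the bound collapses to that of part (a) with $N_j$ replaced by $N_j$, which is also $O(\log_a N_j)$ and hence subsumed by the $\min$. The additive constant $1$ from \eqref{eq:comm} and the $(a-1)$ factor inside the logarithm are absorbed by the $O(\cdot)$ notation.
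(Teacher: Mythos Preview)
Your proposal is correct and matches the paper's approach: the paper does not give a separate proof for Corollary~\ref{coro:doubling-buffer}, treating it as an immediate specialization of Theorem~\ref{thm:comm} obtained by substituting $f(c)=a^{c-1}$ and summing the geometric series, exactly as you outline. The algebraic inversion and absorption of constants into $O(\cdot)$ that you describe is precisely what is intended.
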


In what follows, we highlight the significance of the communication complexity results of \ODC.

\begin{remark} (Communication complexity characterization by number of decision rounds.)\label{rmk:comm-empirical}
A major contribution of Theorem \ref{thm:comm} is the generalization of the communication complexity analysis to the asynchronous-agent setting where the upper bound depends on the number of decision rounds of the agents instead of the length of time horizon. More specifically, the communication complexity of \emph{\ODC} implicitly depends on the total number of decisions by all agents, $N$. %, since $\mathscr{T}^{j\rightarrow{j^\prime}}$ is a subset of decision time slots of agent $j$.
In general when agent pull rates differ significantly from each other, $N$ is much smaller than $M\times T$, and Theorem \ref{thm:comm} provides a much tighter upper bound than previous results relying on $T$. 
\end{remark}

\begin{remark} (Performance with asynchronous agents.)\label{rmk:comm-asynchronous}
\emph{\ODC} is able to deal with heterogeneous pull rates, since communication can be tailored for on-demand transmissions. 
Especially, under \ODC, the number of communications between each pair of agents depends on the slower agent; while, under \texttt{IBC}, the number of communications between each pair of agents is dominated by the faster agent.
For example, consider a two-agent system where agent $j^{\text{fast}}$ is a fast agent that pulls arms much more often than a slow agent $j^{\text{slow}}$, i.e., $N_{j^{\text{fast}}} \gg N_{j^{\text{slow}}}$. 
By Theorem~\ref{thm:comm}, the number of messages sent by \emph{\ODC} is at most $2 N_{j^{\text{slow}}}+2$,
while, by Remark~\ref{prop:IBC-comm}, the number of messages sent under immediate communication can be as large as~$N_{j^{\text{fast}}} + N_{j^{\text{slow}}}$ .
\end{remark}

\begin{remark}(Recovery of logarithmic communication complexity in synchronous setting.)\label{rmk:comm-recover}
When our asynchronous \emph{\ODC} protocol is applied to a MAMAB setting with synchronous agents, i.e., $N_j = T, \forall j\in\mathcal{A}$, Corollary~\ref{coro:doubling-buffer} implies that we can recover a $O(M^2\log T)$ communication complexity by doubling \BT after each message transmission. 
\end{remark}

\begin{remark}(Double logarithmic communication complexity of \emph{\AAEODC})
As shown in Theorem~\ref{thm:comm}(b), the communication complexity of \emph{\AAEODC} depends logarithmically on the total number of decision rounds among all agents, $N$, and therefore depends logarithmically on the learning horizon, $T$, when the suboptimality gaps are large, e.g., $\Delta_i \gg 1/\sqrt{N_j}, \forall j\in\mathcal{A}, i \in\mathcal{K}\setminus\{i^*\}$. 
This is because \emph{\AAEODC} stops communication once the exploration task completes, i.e., once the \textit{candidate set} size becomes one. Corollary~\ref{coro:doubling-buffer}(b) further shows that double logarithmic communication complexity can be achieved if the \BTs of \emph{\AAEODC} are set to be doubling.

\end{remark}

Note that \ODC also works for the scenario that communication has a deterministic delay and that each agent only has access to a \textit{local} subset of the $K$ arms, as in~\cite{yang2022distributed, chawla2020gossiping, yang2021cooperative}. We provide regret and communication complexity analysis of both algorithms for such scenario in Appendix~\ref{sec:delay} and~\ref{sec:heterogeneous}.

%!TEX root = ODC-CameraReady.tex

\begin{figure*}[t]
	\centering
	\hfill
 	\subfloat[UCB Communication]{\includegraphics[width=0.24\textwidth]{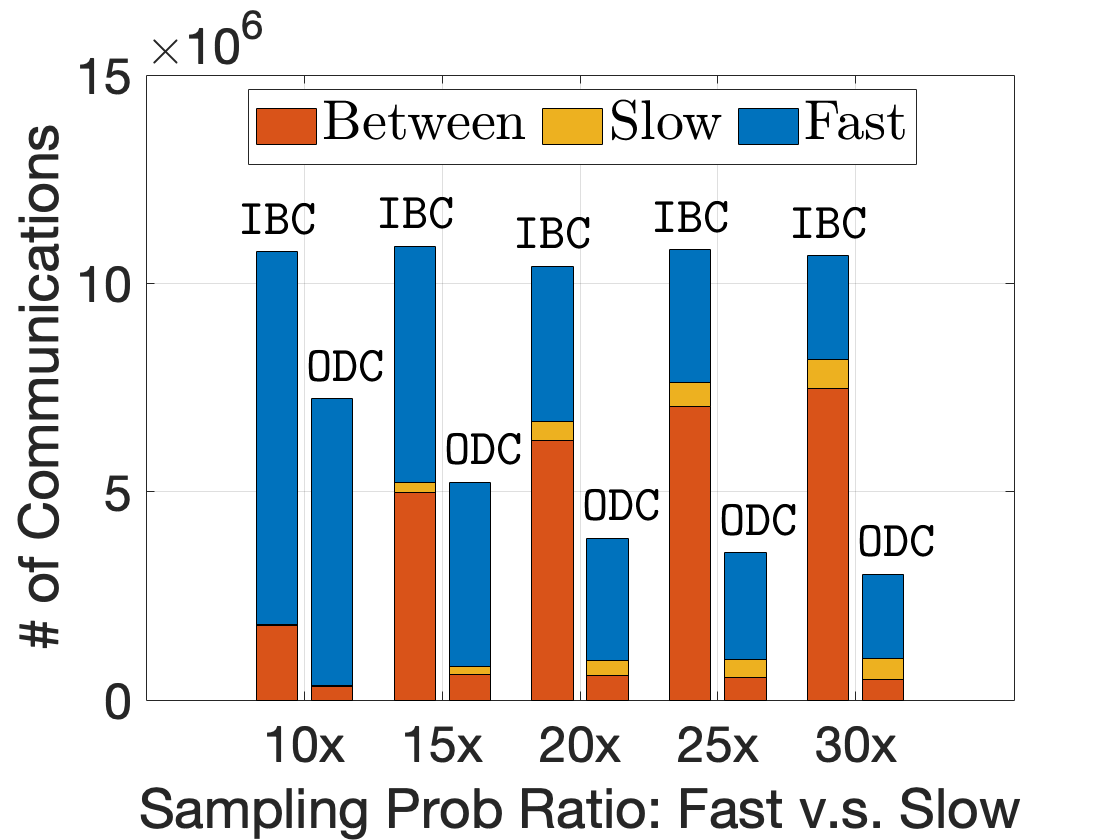}
 	\label{fig:exp6-UCB-comm}}
 	\hfill
	\subfloat[AAE Communication]{\includegraphics[width=0.24\textwidth]{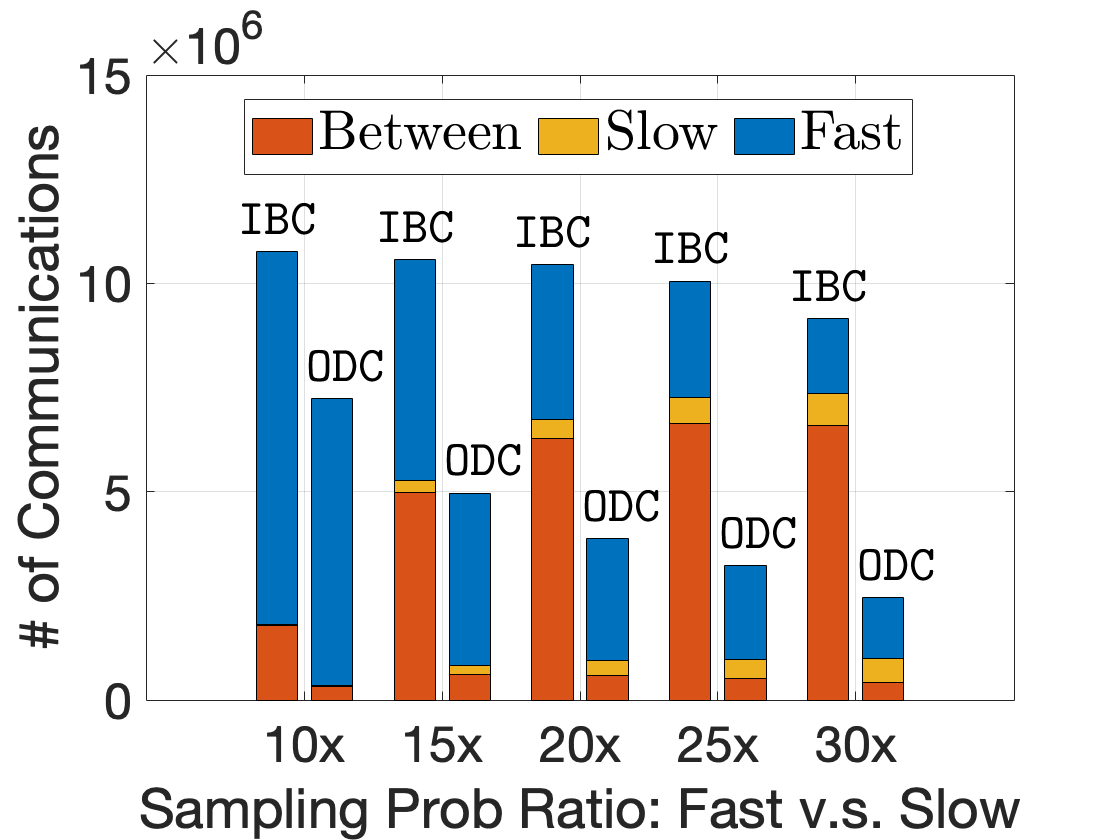}
 	\label{fig:exp6-AAE-comm}}
 	\hfill
	\subfloat[UCB and AAE Regret]{\includegraphics[width=0.24\textwidth]{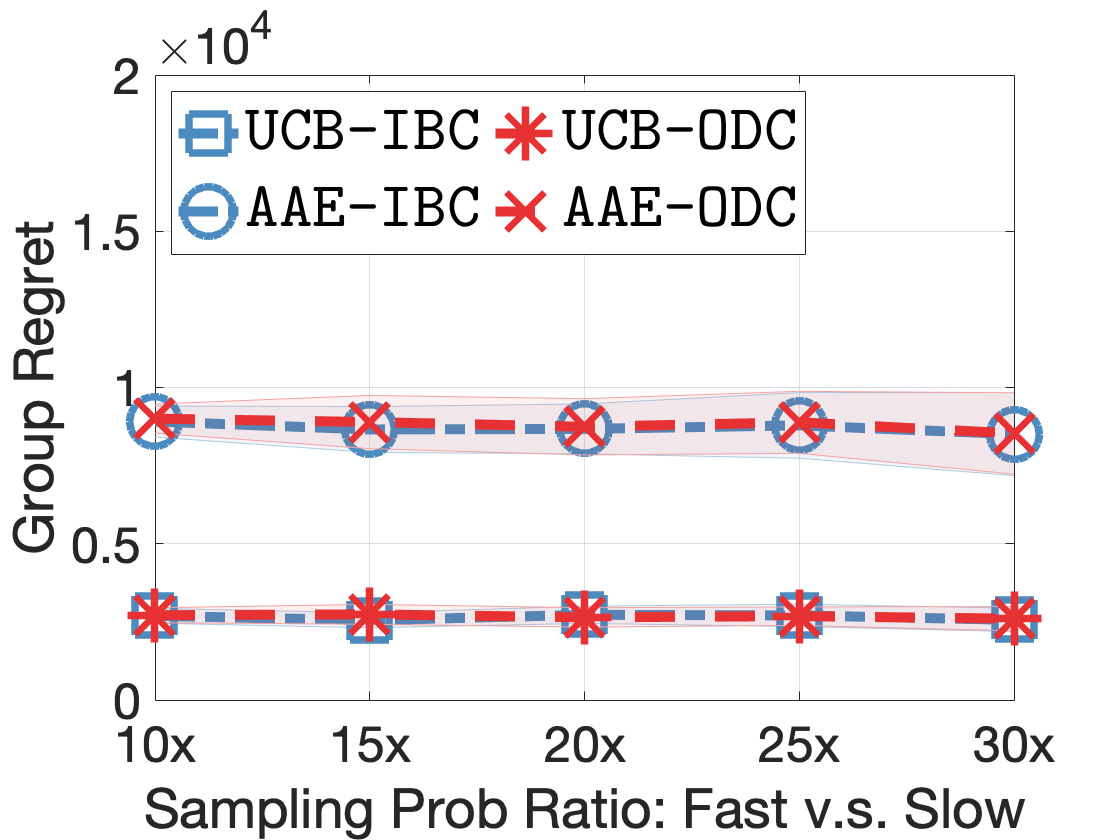}
 	\label{fig:exp6-reg}}
 	\hfill
 	\vspace{-2mm}
	\caption{Experiment 1 --- impact of the heterogeneity of agent speeds. Forty agents with fixed mean sampling probability and increasing sampling probability ratio between fast and slow agents.} %\rev{Notable observation: the number of communications between fast and slow agents (dark orange part) increases under immediate broadcast communication (\texttt{IBC}) while remains similar and relatively small under \ODC as the sampling probability ratio increases. \mh{be consistent with Fig.2}}}
	\label{fig:experiment1}
	\vspace{-2.5mm}
\end{figure*}
\begin{figure*}[t]
	\centering
	\hfill
 	\subfloat[UCB Communication]{\includegraphics[width=0.24\textwidth]{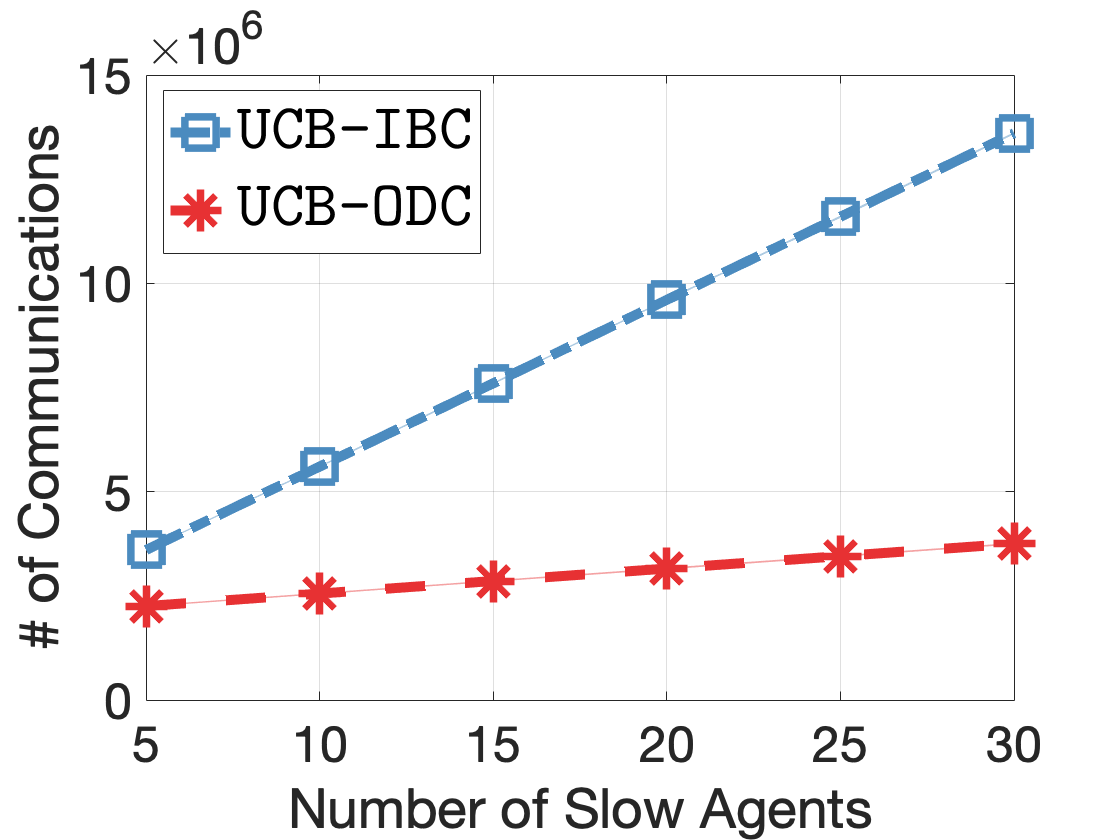}
	\label{fig:exp7-UCB-comm}}
	\hfill
	%\subfigure[UCB Regret]{\includegraphics[width=0.24\textwidth]{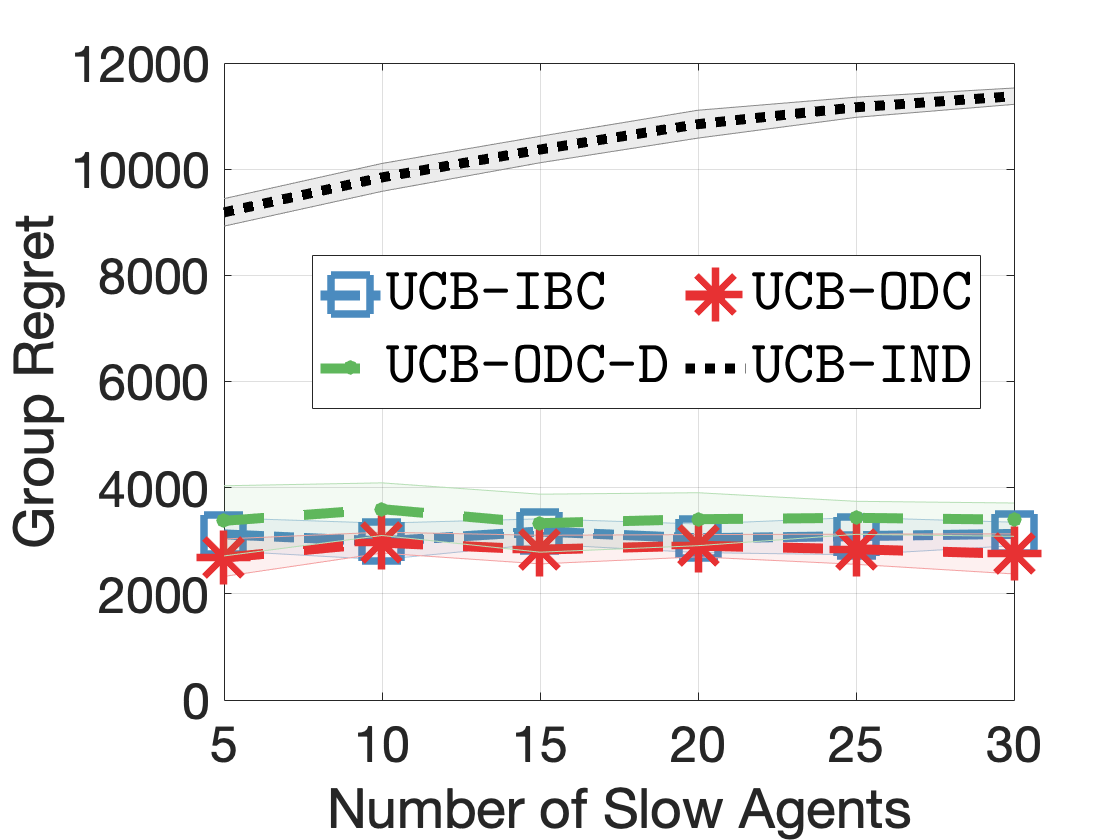}}
	\subfloat[AAE Communication]{\includegraphics[width=0.24\textwidth]{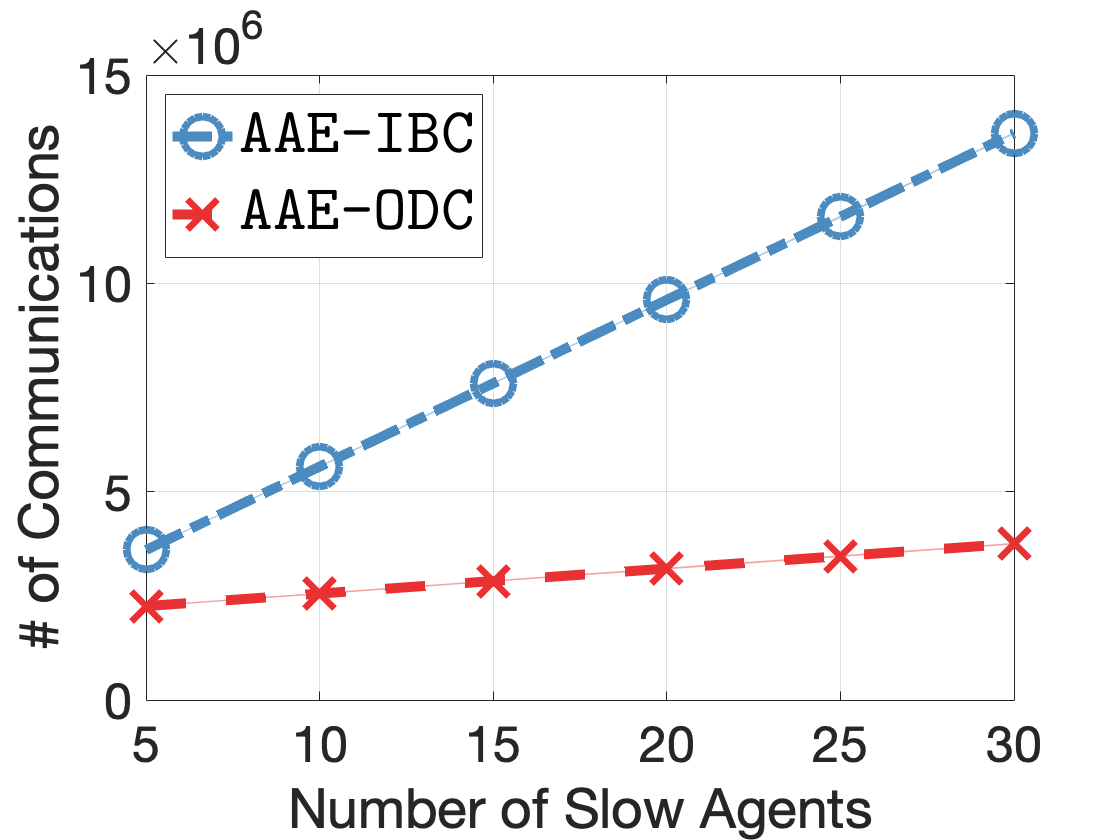}
	\label{fig:exp7-AAE-comm}}
	\hfill
	\subfloat[UCB and AAE Regret]{\includegraphics[width=0.24\textwidth]{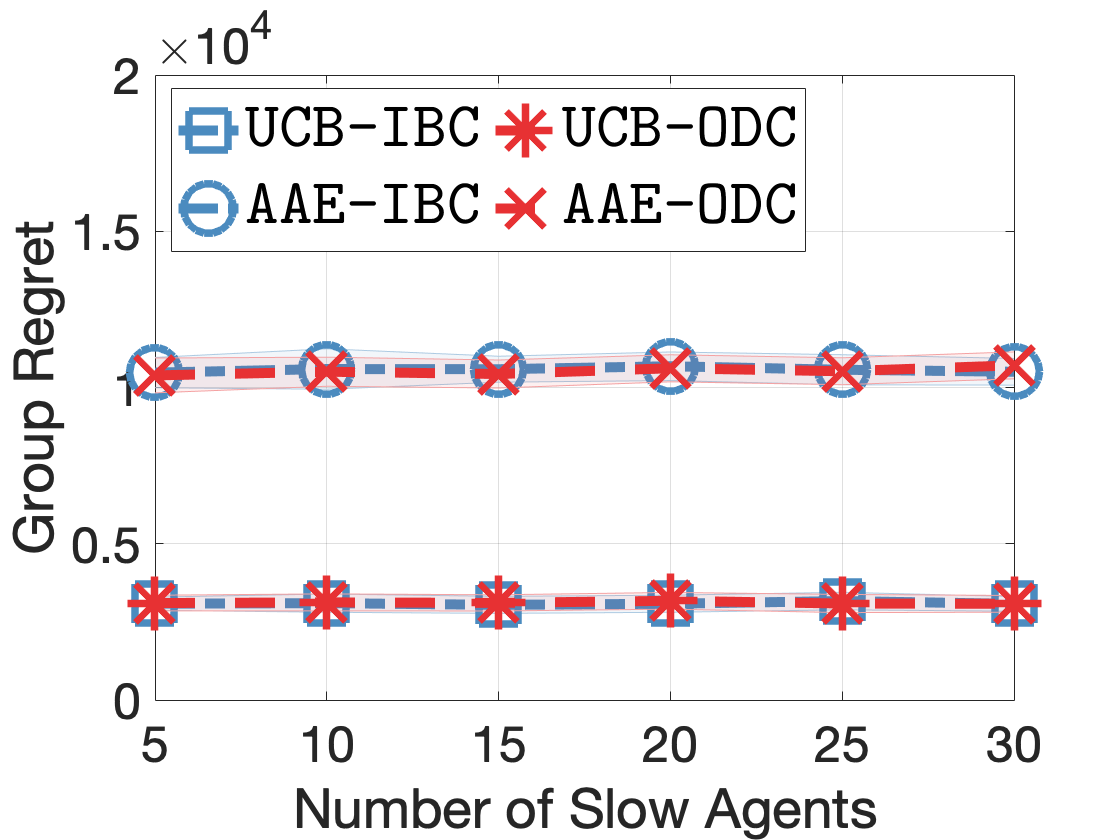}
	\label{fig:exp7-reg}}
	\hfill
	\vspace{-2mm}
	\caption{Experiment 2 --- impact of the number of slow agents. Increasing the number of slow agents while fixing the expected total number of decisions in the entire system. 
% 	$60,000$ time slots.
	}
	\label{fig:experiment2}
	\vspace{-2.5mm}
\end{figure*}

\section{NUMERICAL EXPERIMENTS}\label{sec:numerical} 

In this section, we first study the impact of differences in agent pull rates (Experiment 1) and number of slow agents in the system (Experiment 2) on communication complexity and group regret. We compare \UCBODC and \AAEODC with their counterparts that use immediate broadcast communication, labeled \texttt{UCB-IBC} and \texttt{AAE-IBC}
\footnote{\texttt{UCB-IBC} and \texttt{AAE-IBC} are essentially the same as \texttt{CO-UCB} and \texttt{CO-AAE} respectively proposed in~\cite{yang2022distributed}.
}  
in the first two experiments
with \BTs set to one
to better demonstrate the insights of \ODC.
In Experiment 3, we study the four above algorithms when \BTs are allowed to double after each communication.
%We present supplementary experiments and provide more insights on simulation results in Appendix~\ref{sec:extra-simulation}.

\textbf{Experimental Setup.} 
In our experiments, there are $M=40$ agents each with $K=16$ arms with Bernoulli rewards whose means are uniformly and randomly taken from \textit{Ad-Clicks}~\citep{adclicks}. 
% Each agent has $16$ arms uniformly and randomly selected from among the $K=80$ global arms.
% We consider a time slotted system where agents make decisions with certain probabilities at each time slot. 
We set $\alpha=3$ for all algorithms and 
report values averaged over $30$ independent trials. We report the average cumulative group regret after $T=80,000$ time slots for three experimental scenarios.

\textbf{Experiment 1.} In this experiment, we study the impact of differences in agent pull rates on communication complexity and group regret. Specifically, we fix the expected total number of decisions in the entire system by fixing the mean sampling probabilities among $40$ agents, and we increase the sampling probability ratio between fast and slow agents from $10\times$ to $30\times$ with a step size of $5$.  
Specifically we fix the sampling probability of each slow agent at $0.01$ and vary that of each fast agent from $0.1$ to $0.3$ with step size $0.05$. Note that we maintain the mean sampling probability among agents at $0.085$. Hence the number of fast (resp. slow) agents decreases (resp. increases) as the sampling probability ratio increases.

% Figure~\ref{fig:experiment1} shows the results of Experiment 1.
Figures~\ref{fig:exp6-UCB-comm} and~\ref{fig:exp6-AAE-comm} report the total amount of communication under the immediate broadcast communication (\texttt{IBC}) and the \ODC protocols. %\todo{one protocol name is all lower case, the other includes uppercase - should be consistent} 
We distinguish three types of communications: (1) between fast and slow agents (orange), (2) among slow agents (yellow), and (3) among fast agents (blue).
Figures~\ref{fig:exp6-UCB-comm} and~\ref{fig:exp6-AAE-comm} show that \ODC reduces the amount of communications in all three categories but most notably between fast and slow agents. 
The amount of communication between fast and slow agents increases under \texttt{IBC} while remaining relatively constant under \ODC as the sampling probability ratio between fast and slow agents increases. This demonstrates that \ODC is communication efficient when there is large difference in the pull rates of fast and slow agents. %a diversity of agent speeds.%; even if that were not the case, \ODC outperforms \texttt{IBC} because of the amount of traffic among fast agents.%JC removed the last part of the sentence because in the same arm set setting, there may not be improvement on the amount of traffic among fast agents.
Figure~\ref{fig:exp6-reg} shows that \UCBODC and \AAEODC exhibit similar group regrets to those of  \texttt{UCB-IBC} and \texttt{AAE-IBC} respectively.

\textbf{Experiment 2.} Next, we study the impact of the number of slow agents on communication complexity and group regret  while fixing the expected total number of decisions in the entire system. We fix the number of fast agents at $5$ and increase the number of slow agents from $5$ to $30$ with steps of $5$.  The sampling probability of a fast agent is always $0.8$. As the expected total number of decisions is fixed at $300,000$, the sampling probability of slow agents decreases from $0.2$ to $0.034$ as the number of slow agents increases.

Figure~\ref{fig:experiment2} reports the results of Experiment 2. 
Figures~\ref{fig:exp7-UCB-comm} and~\ref{fig:exp7-AAE-comm} show that the number of communications of \texttt{UCB-IBC} and \texttt{AAE-IBC} increase significantly as the number of slow agents increases even though the expected total number of decisions does not change; while the amount of communication of \UCBODC and \AAEODC do not change much as the number of slow agents increases.
Figure~\ref{fig:exp7-reg} shows that \UCBODC and \AAEODC still achieve similar group regrets as \texttt{UCB-IBC} and \texttt{AAE-IBC} respectively even though they require fewer message exchanges for cooperation.

\textbf{Experiment 3.} Finally, we study the performance of the four policies, \texttt{UCB-IBC}, \UCBODC, \texttt{AAE-IBC}, and \AAEODC, all with doubling \BTs, which we denote as \texttt{UCB-IBC-D}, \texttt{UCB-ODC-D}, \texttt{AAE-IBC-D}, and \texttt{AAE-ODC-D}, in a system with one fast agent with sampling probability one and nine slow agents with sampling probabilities $0.001$. 
Table~\ref{tab:exp3} summarizes the results, which again verifies our theoretical observation that
\ODC reduces the communications between asynchronous agents while achieving similar group regrets as \texttt{IBC}.

\begin{table}[h]
    \vskip -0.04 in
    \centering
    {\small
    \caption{Experiment 3}
    \begin{center}
    \vskip -0.135 in
    \begin{tabular}{||c || c | c||} 
     \hline
       & Communication & Group Regret \\ [0.1ex]
     \hline
     \texttt{UCB-IBC-D} & $629\pm2$ & $1474\pm89$ \\ 
     \hline
     \texttt{UCB-ODC-D} & $563\pm6$ & $1514\pm114$ \\
     \hline
     \texttt{AAE-IBC-D} & $629\pm2$ & $2679\pm254$ \\ 
     \hline
     \texttt{AAE-ODC-D} & $564\pm5$ & $2672\pm225$ \\
     \hline
    \end{tabular}
    \end{center}
    \label{tab:exp3}
    }
    \vskip -0.1 in
\end{table}

We present supplementary experiments and provide more insights on simulation results in Appendix~\ref{sec:extra-simulation}. Specifically, in Appendix~\ref{sec:extra-simu-thres} we numerically study the performance of \ODC with constant or doubling buffer threshold; in Appendix~\ref{sec:extra-simu-ind} we present the individual regrets in Experiment 1 and Experiment 2; in Appendix~\ref{sec:extra-simu-async} we numerically study the performance of \ODC under different types of asynchronicity; in Appendix~\ref{sec:extra-simu-algo} we demonstrate when \AAEODC would incur fewer communications than \UCBODC.
 
%!TEX root = ODC-CameraReady.tex

\section{CONCLUSION AND FUTURE DIRECTION}\label{sec:limitation}
This paper presented a communication protocol for efficient cooperation in asynchronous multi-agent bandits settings. The communication protocol explicitly adjust the amount of cooperation in proportion to agent pull rates and could be integrated into an underlying bandit algorithm. We combined the proposed communication protocol with two bandit algorithms and analyzed their performance in terms of regret and communication complexities. 

A limitation of this work is that we assume 
all messages are sent through reliable communication, e.g., TCP protocol. \ODC suffers potential performance degradation when it is used under unreliable communication, e.g., UDP protocol. Specifically, \ODC suffers performance degradation if there is packet loss in communication. For example, after agent $j$ sends a message to agent $j^\prime$ and sets $E^{j\rightarrow {j^\prime}} \gets \texttt{False}$, if this sharing message is lost without reaching agent $j^\prime$, then the cooperation between agent $j$ and $j^\prime$ will end. This is because from both agents' perspectives, each other's exchange demands are both $\texttt{False}$. Designing a loss-tolerant communication protocol for asynchronous MAMAB is an interesting open problem.

\subsubsection*{Acknowledgements}
We thank the anonymous reviewers for their useful comments. Yu-Zhen Janice Chen (yuzhenchen@cs.umass.edu) and Don Towsley's (towsley@cs.umass.edu) research is supported by U.S. Army Research Laboratory under Cooperative Agreement W911NF-17-2-0196 (IoBT CRA). Xuchuang Wang (xcwang@cse.cuhk.edu.hk), Xutong Liu (liuxt@cse.cuhk.edu), and John C.S. Lui's (cslui@cse.cuhk.edu.hk) research is supported by the RGC’s GRF 14215722. Mohammad Hajiesmaili's (hajiesmaili@cs.umass.edu) research is supported by NSF CAREER-2045641, CPS-2136199, CNS-2106299, and CNS-2102963. Correspondence to: Lin Yang (linyang@nju.edu.cn).

\bibliography{ref}

% If you have textual supplementary material
\appendix
\onecolumn

%!TEX root = ODC-CameraReady.tex

\section{LITERATURE REVIEW}\label{sec:related}

\paragraph{Collision or no collision.}
One of the extensively studied MAMAB settings is the \textit{collision} scenario~\citep{wang2020optimal, boursier2019sic, shi2021heterogeneous, bistritz2018distributed, bubeck2020non, besson2018multi},  where agents receive zero or degraded rewards if they pull the same arm simultaneously. This setting well models the opportunistic spectrum access applications with multiple users, where the objective is to choose the best channels while avoiding users communicate through the same channel at the same time.
On the other hand, the MAMAB setting with \textit{no collision}~\citep{shi2021federated, wang2019distributed, wang2020optimal, bar2019individual, chakraborty2017coordinated, dubey2020cooperative, szorenyi2013gossip, chawla2020gossiping, landgren2016distributed, buccapatnam2015information, martinez2019decentralized, bistritz2020cooperative, madhushani2021one, chakraborty2017coordinated, cesa2016delay, hillel2013distributed, dubey2020cooperative, yang2021cooperative, yang2022distributed, sankararaman2019social, feraud2019decentralized} has also attracted increasing research interest. 
In the MAMAB setting with no collision, agents receive independent rewards without any degradation even when they pull the same arm. 
This setting is more suitable for modeling applications like recommender systems, clinical trials, robotic taget searching, etc.
In this paper, we focus on the \textit{no collision} setting.

\paragraph{Cooperate with or without a coordinator.}
Regarding cooperation methods in cooperative MAMAB, there are two broad categories of prior work: (1) \textit{cooperation with coordinator}~\citep{shi2021federated, wang2019distributed, wang2020optimal, bar2019individual, chakraborty2017coordinated, dubey2020cooperative}, which utilizes a central server or elects leaders among agents to coordinate the learning process. (2) \textit{cooperation without coordinator}~\citep{szorenyi2013gossip, chawla2020gossiping, landgren2016distributed, buccapatnam2015information, martinez2019decentralized, bistritz2020cooperative, madhushani2021one, chakraborty2017coordinated, cesa2016delay, hillel2013distributed, dubey2020cooperative, yang2021cooperative, yang2022distributed, sankararaman2019social, feraud2019decentralized}, which addresses a decentralized learning scenario where agents communicate with each other to improve their learning performance.
In this work, we consider the \textit{cooperation without coordinator} (decentralized) approach in an asynchronous MAMAB setting where agent pull times and speeds being unknown, irregular, and different, hinders the application of a coordination approach. 

\paragraph{Reward assumptions.}
Similar to standard bandit problem, various reward assumptions are studied in decentralized cooperative MAMAB model.
For example,~\cite{szorenyi2013gossip, chawla2020gossiping, landgren2016distributed, buccapatnam2015information, martinez2019decentralized} consider stochastic bandit,~\cite{dubey2020cooperative} studies stochastic bandit with heavy tails, and~\cite{cesa2016delay} considers non-stochastic bandit.
In this work, we consider arms with stochastic rewards and assume they have Bernoulli distributions. 

\paragraph{Homogeneous or heterogeneous arm sets.}
In decentralized cooperative MAMAB model, agents can have homogeneous arm sets or heterogeneous arm sets.
Homogeneous arm sets setting~\citep{szorenyi2013gossip, landgren2016distributed, buccapatnam2015information, martinez2019decentralized}, i.e., same set of arms is available to each agent, is more extensively studied.
Regarding heterogeneous arm sets scenario, there are two different notions of heterogeneous arm sets as far as we notice. One refer to the scenario where agents have access to the same set of arms but each agent receive different expected reward from the same arm, e.g., in~\cite{hossain2021fair}. This setting models the opportunistic spectrum access application and mobile sensor environment estimating application, where the geographical location of agents influence the rewards they receive from the same arm. 
The other definition of heterogeneous arm sets models the scenario that agents receive same expected rewards from the same arm but each agent only have access to a subset of all the arms, e.g, in~\cite{yang2022distributed, chawla2020gossiping, yang2021cooperative}.
In this work, we mainly consider homogeneous arm sets setting. We provide extension of our results to account for agents pulling from different but overlapping subsets of arms in Appendix.

\paragraph{Synchronous or asynchronous agents.}
In decentralized cooperative MAMAB model, agents can operate synchronously or asynchronously. 
The synchronous setting~\citep{szorenyi2013gossip, chawla2020gossiping, landgren2016distributed, buccapatnam2015information, martinez2019decentralized} is more extensively studied. In the synchronous setting, there is a common clock among all agents, and every agent pulls an arm at every time slot. 
\cite{yang2021cooperative, yang2022distributed,sankararaman2019social, feraud2019decentralized} addresses MAMAB with asynchronous agents.
The model in~\cite{yang2021cooperative,yang2022distributed} assumes each agent periodically make decisions at different \textit{known} frequencies. \cite{sankararaman2019social} assumes each agent is equipped with a Poisson clock and agent pull when its clock rings. \cite{feraud2019decentralized} assumes there is a distribution determining which agent becomes active at each time slot.
Our paper assumes that pulling times are unknown, irregular, and not necessarily stochastic.
Last, asynchronous multi-agent learning has also been studied in related fields such as online (convex) optimization with full information or semi-bandit feedback~\citep{cesa2020cooperative, jiang2021asynchronous, joulani2019think, bedi2019asynchronous, della2021efficient}. 
  
\paragraph{Communication schemes.}  
Many different types of communication has been studied in decentralized cooperative MAMAB literature. 
For example,~\cite{szorenyi2013gossip} considers peer-to-peer network and let each agent communicate to only a fixed number of agents at each round.
\cite{chawla2020gossiping, sankararaman2019social} considers a gossip-style communication, where agents are assumed located on a graph and agents can only communicate with their neighbors. The gossip-style communication can be used to model the scenario that users in a social network explore restaurants and make recommendations to their friends. In~\cite{buccapatnam2015information, yang2021cooperative, yang2022distributed}, each agent is allowed to immediately broadcast the rewards to all other agents.
In this work, we consider that each agent is allowed to communicate with every other agent and design a communication protocol that is efficient when agents operate asynchronously. The proposed on-demand communication protocol, \ODC, is a fundamentally different idea from previously considered communication protocols in decentralized cooperative MAMAB literature, such as immediate broadcasting, peer-to-peer~\citep{dubey2020differentially}, consensus-based~\citep{martinez2019decentralized}, and gossip-style~\citep{sankararaman2019social} communication, under which agents spontaneously transmit information.

%!TEX root = ODC-CameraReady.tex

\section{PROOF OF THEOREM~\ref{thm:group-regrets}}\label{sec:group-regrets-proof}

\subsection{Proof of Theorem~\ref{thm:group-regrets}(a)}

To proceed with the proof of expected group regret of \UCBODC, we first state some intermediary lemmas and then use the lemmas to upper bound the group regret.
The first two lemmas are regarding two types of decisions, namely Type-I and Type-II.
\begin{myDef}
	At any decision round $t$, the decision of agent $j$ is a Type-I decision if the following equation holds
	\begin{equation}\label{eq:type-I}
		\mu(i) \in [\hat{\mu}^t_j(i) -\texttt{CI}^t_j(i), \hat{\mu}^t_j(i) +\texttt{CI}^t_j(i)], \quad \forall i \in \mathcal{K};
	\end{equation}
	otherwise the decision is a Type-II decision.
\end{myDef}

\begin{lemma}\label{lemma:decision-type}
	At any decision round $t$, an agent $j$ makes a Type-I decision with a probability at least $1 - 2KN_t{\delta^t_{j}}^\alpha$, where $N_t = \sum_{j\in \mathcal{A}} n_j^t$.
\end{lemma}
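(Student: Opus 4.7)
The plan is to recognize the statement as a standard Hoeffding-concentration-plus-union-bound argument, adapted to account for the fact that the number of observations $\hat{n}^t_j(i)$ is itself random (it depends on the pull history and on messages received from other agents via \ODC).

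First, I would fix an arm $i \in \mathcal{K}$ and argue that, \emph{conditional on} $\hat{n}^t_j(i) = n$, the empirical mean $\hat{\mu}^t_j(i)$ is the average of $n$ i.i.d. Bernoulli$(\mu(i))$ samples. This is justified by the model assumption that arm $i$'s reward sequence is mutually independent and i.i.d., and that whenever arm $i$ is pulled (by agent $j$ directly, or by some other agent whose observation reached $j$ through \ODC), the observed value is an independent Bernoulli$(\mu(i))$ draw. Thus the particular subset of $n$ samples entering $\hat{\mu}^t_j(i)$ is still a collection of $n$ i.i.d. Bernoulli$(\mu(i))$ variables, so Hoeffding's inequality applies in the standard way.

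Next, for any fixed $n \geq 1$, plugging the definition of $\mathtt{CI}^t_j(i)$ into Hoeffding's inequality yields
\begin{equation*}
\Pr\!\left(|\hat{\mu}^t_j(i)-\mu(i)| \geq \mathtt{CI}^t_j(i) \,\big|\, \hat{n}^t_j(i)=n\right) \leq 2\exp\!\left(-2n\cdot\frac{\alpha\log(1/\delta^t_j)}{2n}\right) = 2(\delta^t_j)^\alpha,
\end{equation*}
which is independent of $n$. Marginalizing over $\hat{n}^t_j(i)$ via the tower property then gives the unconditional bound $\Pr(|\hat{\mu}^t_j(i)-\mu(i)| \geq \mathtt{CI}^t_j(i)) \leq 2(\delta^t_j)^\alpha$. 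Finally, a union bound over the $K$ arms shows that the Type-I condition~\eqref{eq:type-I} fails with probability at most $2K(\delta^t_j)^\alpha$, which is exactly the claim.

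The only genuinely subtle point is the randomness of $\hat{n}^t_j(i)$: in principle the stopping-time-like nature of the sample count could break a naive application of Hoeffding. The clean way around this is the conditioning argument sketched above, which works because the Hoeffding tail bound $2(\delta^t_j)^\alpha$ holds uniformly in $n$, so the value of $n$ does not matter after marginalization. No peeling or maximal inequality is required here, since $\delta^t_j$ does not depend on $\hat{n}^t_j(i)$.
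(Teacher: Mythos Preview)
Your approach---Hoeffding per arm, then a union bound over the $K$ arms---is exactly what the paper does; the paper's proof is in fact terser than yours and simply asserts the Hoeffding tail bound $2(\delta^t_j)^\alpha$ for random $\hat n^t_j(i)$ without discussing the stopping-time issue you flag.

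One caveat on that issue: the step ``conditional on $\hat n^t_j(i)=n$, the empirical mean is an average of $n$ i.i.d.\ Bernoullis'' is not literally valid. Whether an $(n{+}1)$-th observation of arm $i$ reaches agent $j$ before time $t$ depends---through the pulling decisions and the \ODC exchanges---on the first $n$ rewards of arm $i$, so the event $\{\hat n^t_j(i)=n\}$ is not independent of those rewards, and conditioning on it can bias their law; the tower-property marginalization therefore does not close the gap by itself. The paper glosses over this entirely, so your write-up is at least as rigorous as the published proof; a fully airtight version would invoke either a union bound over the possible values of $n$ or an Azuma--Hoeffding/optional-skipping argument.
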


\begin{proof}{of Lemma~\ref{lemma:decision-type}}
	Note that for any arm $i$ with $n$ observations, by Hoeffding's inequality and union bound, we have
	\begin{align*}
	\mathbb{P}\left(\biggl|\mu(i) - \hat{\mu}(i, n)\biggl|> \sqrt{\frac{\alpha \log \delta^{-1}}{2n}}\right) \leq 2\delta^\alpha.
	\end{align*}
	Thus, the probability that the true mean value of arm $i$ is not in the confidence interval when agent $j$ makes a decision at time $t$ is at most $2 N^t{\delta^t_{j}}^\alpha$, where $N_t = \sum_{j\in \mathcal{A}} n_j^t$, as shown in the following.
	\begin{align*}
	&\mathbb{P}\left(\biggl| \mu(i) - \hat{\mu}(i, \hat{n}^t_j(i)) \biggl|>  \sqrt{\frac{\alpha \log {\delta^t_{j}}^{-1}}{2\hat{n}^t_j(i)}}\right) \notag\\
	&\leq \sum_{s=1}^{N_t}\mathbb{P}\left(\biggl| \mu(i) - \hat{\mu}(i, n) \biggl|>  \sqrt{\frac{\alpha \log {\delta^t_{j}}^{-1}}{2n}}\Biggl|n=s\right)\mathbb{P}(n=s)\leq 2N_t{\delta^t_{j}}^\alpha.
	\end{align*}
	Hence, the probability that~\eqref{eq:type-I} holds for all arm $i\in\mathcal{K}$ is lower bounded by $1 - \sum\limits_{i \in \mathcal{K}}2N_{t}{\delta^t_{j}}^\alpha = 1-2KN_t{\delta^t_{j}}^\alpha$.
\end{proof}

By Lemma~\ref{lemma:decision-type}, with probability at most $2KN_t{\delta^t_{j}}^\alpha$, an agent makes a Type-II decision at a decision round.
With $\delta^t_{j} = 1/N$,
the expected number of Type-II decisions made by all agents over the entire time horizon, denoted by $\mathbb{E}[Q_{\text{II}}]$ is upper bounded by
\begin{align}
& \mathbb{E}[Q_{\text{II}}] \leq \sum_{j\in\mathcal{A}}\sum_{l=1}^{N_j} 2K N_{t_l} {\delta^{t_l}_{j}}^\alpha
= \sum_{j\in\mathcal{A}}\sum_{l=1}^{N_j} 2K \frac{N_{t_l}}{N^\alpha}
\leq\sum_{j\in\mathcal{A}}\sum_{l=1}^{N_j} 2K \frac{1}{N^{\alpha-1}}\nonumber\\
\label{eq:qii}
& \overset{\text{(a)}}{\leq} \sum_{j\in\mathcal{A}} \frac{2K}{\alpha-2}\left(1-\frac{1}{N^{\alpha-2}}\right)
\equiv q_{\text{II}} \overset{\text{(b)}}{\leq} 2KM.
\end{align}
In Eq.~\eqref{eq:qii}, (a) holds because $l\leq N$ for each $l$ and
(b) holds if $\alpha \geq 3$.

\begin{lemma}\label{lemma:type-I-suboptimal-ucb}
	If agent $j \in \mathcal{A}$ makes a Type-I decision and pulls suboptimal arm $i \in \mathcal{K}$ by the \emph{\UCBODC} algorithm, at that decision round $t$ we have
	\begin{align*}
		\hat{n}^t_j(i) \leq \frac{2\alpha \log ({1/\delta^t_j})}{\Delta^2_i}.
	\end{align*}
\end{lemma}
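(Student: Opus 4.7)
The plan is to carry out the classical UCB ``chain of inequalities'' argument, specialized to the decentralized setting where each agent's confidence interval is built from its empirical count $\hat{n}^t_j(i)$ (which may include observations obtained via \ODC, not only self-pulls). The key observation is that the selection rule of \UCBODC picks the arm maximizing the UCB index, so if the suboptimal arm $i$ is selected over $i^*$, then the indices satisfy a one-line inequality that we can sandwich between $\mu(i)$ and $\mu(i^*)$ using the Type-I event.

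Concretely, I would proceed in three short steps. First, invoke the selection rule: since agent $j$ pulled $i$ at round $t$, we must have $\hat{\mu}^t_j(i)+\texttt{CI}^t_j(i)\ge \hat{\mu}^t_j(i^*)+\texttt{CI}^t_j(i^*)$. Second, apply the Type-I event (as defined just before the lemma) on the optimal arm to get $\hat{\mu}^t_j(i^*)+\texttt{CI}^t_j(i^*)\ge \mu(i^*)$, and on the suboptimal arm to get $\hat{\mu}^t_j(i)\le \mu(i)+\texttt{CI}^t_j(i)$. Chaining these yields $\mu(i^*)\le \mu(i)+2\,\texttt{CI}^t_j(i)$, i.e., $\Delta_i\le 2\,\texttt{CI}^t_j(i)$. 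Third, substitute the explicit form of the confidence radius from (\ref{eq:confidence-interval}), namely $\texttt{CI}^t_j(i)=\sqrt{\alpha\log(1/\delta^t_j)/(2\hat{n}^t_j(i))}$, square both sides of $\Delta_i\le 2\,\texttt{CI}^t_j(i)$, and rearrange to obtain the stated bound $\hat{n}^t_j(i)\le 2\alpha\log(1/\delta^t_j)/\Delta_i^2$.

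No step is truly hard; this is a deterministic consequence of the Type-I event and the UCB selection rule, and randomness has already been controlled in Lemma~\ref{lemma:decision-type}. The only subtlety worth noting is that $\hat{n}^t_j(i)$ counts all observations available to agent $j$ at time $t$, including those buffered and delivered through \ODC; the argument does not care about the provenance of these observations, only that the same count appears both in $\texttt{CI}^t_j(i)$ and in the conclusion. Hence the bound is exactly the standard single-agent UCB radius bound, with $\hat{n}^t_j(i)$ in place of the self-pull count, which is precisely the form needed later to combine with the definition of $\tau_i$ and derive~(\ref{eq:UCBODC-regret}).
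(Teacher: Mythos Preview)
Your proposal is correct and matches the paper's proof essentially line for line: both derive $\Delta_i \le 2\,\texttt{CI}^t_j(i)$ from the UCB selection rule combined with the Type-I event on $i$ and $i^*$, then substitute~(\ref{eq:confidence-interval}) and rearrange. The only cosmetic difference is that the paper frames the chain of inequalities as a proof by contradiction, whereas you argue it directly.
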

\begin{proof}{of Lemma~\ref{lemma:type-I-suboptimal-ucb}}
	If agent $j \in \mathcal{A}$ makes a Type-I decision and pulls suboptimal arm $i \in \mathcal{K}$ at time $t$ by the \UCBODC algorithm, we have
	\begin{align}\label{eq:CI-delta}
		2\texttt{CI}^t_j(i) \geq \Delta_i.
	\end{align}
	Because otherwise,
	\begin{align*}
		\hat{\mu}(i^*) + \texttt{CI}^t_j(i^*) \geq \mu(i^*) = \mu(i) + \Delta_i > \mu(i) + 2\texttt{CI}^t_j(i) > \hat{\mu}(i) +\texttt{CI}^t_j(i),
	\end{align*}
	contradicting the fact that arm $i$ is pulled by \UCBODC as it has the highest UCB. Rewrite~\eqref{eq:CI-delta} using the definition of $\texttt{CI}^t_j(i)$ in~\eqref{eq:confidence-interval}, we have
	%\begin{align*}
	$$\hat{n}^t_j(i) \leq \frac{2\alpha \log ({1/\delta^t_j})}{\Delta^2_i}.$$
	%\end{align*}
	%This completes the proof.
\end{proof}

Recall that $n_j^t(i)$ denotes the number of times agent $j$ has pulled arm $i$ up to time $t$.
In the cooperative learning process, there must exist a time slot $\tau_i$ for each subooptimal arm $i\in \mathcal{K} \setminus \{i^*\}$ such that
\begin{align}
	%&\frac{2\alpha \log N}{\Delta_i^2} \geq \sum_{j^\prime \in \mathcal{A}} n_{j^\prime}^{\tau_i-1}(i),\\
	\frac{2\alpha \log N}{\Delta_i^2} + M \geq \sum_{j^\prime \in \mathcal{A}} n_{j^\prime}^{\tau_i}(i) > \frac{2\alpha \log N}{\Delta_i^2} \geq \sum_{j^\prime \in \mathcal{A}} n_{j^\prime}^{\tau_i-1}(i). \label{eq:obs-at-tau}
\end{align}
%After time $\tau$, the extra number of times that arm $i$ is pulled can be under Type-I or Type-II decisions.
The number of times arm $i$ pulled after time $\tau_i$ are considered as extra number of pulls.
These extra pulls are because of three possible causes: 1) Type-I decision due to delayed transmission for waiting for exchange demands, 2) Type-I decision due to delayed transmission for waiting for \BTs to be satisfied, 3) Type-II decisions.

We first examine Type-I decision cases.
Note that $\hat{n}_j^t(i)$ is the total number of observations of arm $i$ agent $j$ possessed at time $t$, including both the number of times agent $j$ pulls arm $i$ and some or all number of times other agents in $\mathcal{A}$ pull arm $i$.
We define $B_t^{j\rightarrow j^\prime}(i)$ as the number of reward samples of arm $i$ stored in agent $j$'s buffer for agent $j^\prime$ (and not yet been sent) at time $t$, and define $B_t^{j\rightarrow j^\prime}$ as the total number of observations stored in agent $j$'s buffer for agent $j^\prime$ at time $t$.
Consider an agent $j \in \mathcal{A}$ and a suboptimal arm $i$ such that, at time $\tau_i$,
\begin{align}
	\frac{2\alpha \log N}{\Delta^2_i}\geq \frac{2\alpha \log 1/\delta^{\tau_i}_j}{\Delta^2_i}
	 & \geq \hat{n}^{\tau_i}_j(i) = n^{\tau_i}_j(i) + \sum_{j^\prime \in \mathcal{A}\setminus \{j\}} n^{\tau_i}_{j^\prime}(i) - B_{\tau_i}^{j^\prime \rightarrow j}(i)                                                                                                                                                                                                                                           \\
	 & \overset{(a)}{>} \frac{2\alpha \log N}{\Delta_i^2} - \sum_{j^\prime \in \mathcal{A}\setminus \{j\}} B_{\tau_i}^{j^\prime \rightarrow j}(i)                                                                                                                                                                                                                                                                \\
	 & = \frac{2\alpha \log N}{\Delta_i^2} - \sum_{j^\prime \in \mathcal{A}\setminus \{j\}} B_{\tau_i}^{j^\prime \rightarrow j}(i) \mathds{1}_{E_{\tau_i}^{j^\prime \rightarrow j} = \texttt{false}}- \sum_{j^\prime \in \mathcal{A}\setminus \{j\}} B_{\tau_i}^{j^\prime \rightarrow j}(i) \mathds{1}_{E_{\tau_i}^{j^\prime \rightarrow j} = \texttt{true}}                                                   \\
	 & \overset{(b)}{\geq} \frac{2\alpha \log N}{\Delta_i^2} - \sum_{j^\prime \in \mathcal{A}\setminus \{j\}} B_{\tau_i}^{j^\prime \rightarrow j}(i) \mathds{1}_{E_{\tau_i}^{j^\prime \rightarrow j} = \texttt{false}}- \sum_{j^\prime \in \mathcal{A}\setminus \{j\}} f(c_{\tau_i}^{j^\prime \rightarrow j})\mathds{1}_{E_{\tau_i}^{j^\prime \rightarrow j} = \texttt{true}}, \label{eq:delayed-transmission}
\end{align}
where inequality (a) is because of (\ref{eq:obs-at-tau}); inequality (b) is because, for agent $j^\prime \in \mathcal{A}\setminus \{j\}$ such that $E_{\tau_i}^{j^\prime \rightarrow j} = \texttt{true}$, we have $f(c_{\tau_i}^{j^\prime\rightarrow j}) \geq B_{\tau_i}^{j^\prime \rightarrow j} \geq B_{\tau_i}^{j^\prime \rightarrow j}(i) \geq 0$.
According to Lemma~\ref{lemma:type-I-suboptimal-ucb}, such an agent $j$ makes Type-I decisions to pull arm $i$ after time $\tau_i$.

In the following, we bound the extra number of times agent $j$ pulls arm $i$ to make up for the delayed transmission from other agents $j^\prime$. For an agent $j^\prime\in \mathcal{A}\setminus \{j\}$ such that $E_{\tau_i}^{j^\prime \rightarrow j} = \texttt{false}$, if $B_{\tau_i}^{j^\prime \rightarrow j}(i) < f(c_{\tau_i}^{j^\prime \rightarrow j})$, agent $j$ has to make at most $f(c_{\tau_i}^{j^\prime \rightarrow j})$ extra pulls of $i$ to make up for agent $j^\prime$'s delay; if $B_{\tau_i}^{j^\prime \rightarrow j}(i) \geq  f(c_{\tau_i}^{j^\prime \rightarrow j})$, agent $j$ can receive those observations from $j^\prime$ once agent $j$ buffers $f(c_{\tau_i}^{j\rightarrow j^\prime})$ observations for $j^\prime$ and sends a message to $j^\prime$. Hence, because of the delayed transmission from agents $j^\prime \in \mathcal{A}\setminus \{j\}: E_{\tau_i}^{j^\prime \rightarrow j} = \texttt{false}$, agent $j$ pulls arm $i$ after time $\tau_i$ at most following number of times:
\begin{align}\label{eq:obs-delay-demand}
	\sum_{j^\prime \in \mathcal{A}\setminus \{j\}} f(\max\{c_{\tau_i}^{j^\prime \rightarrow j}, c_{\tau_i}^{j \rightarrow j^\prime}\}) \mathds{1}_{E_{\tau_i}^{j^\prime \rightarrow j} = \texttt{false}} \leq \sum_{j^\prime \in \mathcal{A}\setminus \{j\}} f(c_{\tau_i}^{j^\prime \rightarrow j}) \mathds{1}_{E_{\tau_i}^{j^\prime \rightarrow j} = \texttt{false}},
\end{align}
where the inequality is because, by the definition of the \ODC, for any pair of agents $j, j^\prime \in \mathcal{A}$ at any time $t$, if $E_t^{j^\prime \rightarrow j} = \texttt{false}$, $1\geq c_t^{j^\prime \rightarrow j} - c_t^{j \rightarrow j^\prime} \geq 0$.
On the other hand, agents $j^\prime\in \mathcal{A}\setminus \{j\}$ such that $E_{\tau_i}^{j^\prime \rightarrow j} = \texttt{true}$ delay transmission of $\sum_{j^\prime \in \mathcal{A}\setminus \{j\}} B_{\tau_i}^{j^\prime \rightarrow j}(i) \mathds{1}_{E_{\tau_i}^{j^\prime \rightarrow j} = \texttt{true}}$ observations of $i$ to agent $j$ at time $\tau_i$ due to waiting for the \BTs to be satisfied.  To make up for this type of delay, agent $j$ pulls arm $i$ after time $\tau_i$ at most following number of times:
\begin{align}\label{eq:obs-delay-buffer}
	\sum_{j^\prime \in \mathcal{A}\setminus \{j\}} f(c_{\tau_i}^{j^\prime \rightarrow j})\mathds{1}_{E_{\tau_i}^{j^\prime \rightarrow j} = \texttt{true}}.
\end{align}

By~\eqref{eq:obs-delay-demand}~\eqref{eq:obs-delay-buffer} and Lemma~\ref{lemma:type-I-suboptimal-ucb}, agent $j$ contributes at most $F^j_i$ extra numbers of pullings of arm $i$ after time $\tau_i$, where
\begin{align}
	F^j_i = \min\Big\{\Big(\sum_{j^\prime \in \mathcal{A}\setminus\{j\}}f(c_{\tau_i}^{j^\prime \rightarrow j})\Big), \frac{2\alpha \log N}{\Delta_i^2}\Big\}.
\end{align}

We now examine Type-II decision case. According to Lemma~\ref{lemma:decision-type} and (\ref{eq:qii}), the expected number of Type-II decisions made by all agents over the entire time horizon is upper bounded by $2KM$. Since, in our case, $\Delta_i \leq 1, \forall i\in\mathcal{K}$, the regret incurred by Type-II decisions is upper bounded by $2KM$.

The expected group regret can be bounded by
\begin{align}
	\mathbb{E}[R] & = \sum_{j\in\mathcal{A}} \mathbb{E}[R^j_{N_j}]
	= \sum_{j\in\mathcal{A}} \sum_{i\in\mathcal{K}} \Delta_i \mathbb{E}[n_j^T(i)]= \sum_{i\in\mathcal{K}} \Delta_i \Big(\sum_{j\in\mathcal{A}} \sum_{\ell=1}^{N_j} \mathbb{P}[I^j_{t^j_\ell} = i]\Big)                                                               \\
	              & \leq 2KM+\sum_{i\in\mathcal{K}} \Delta_i \Big(\frac{2\alpha \log N}{\Delta_i^2} +M + \sum_{j\in\mathcal{A}}F^j_i\Big)\leq 3KM+\sum_{i\in\mathcal{K}:\Delta_i>0}\Big(\frac{2\alpha \log N}{\Delta_i} +\sum_{j\in\mathcal{A}} F^j_i \Delta_i\Big).
\end{align}
This completes the proof of Theorem~\ref{thm:group-regrets}(a).

\subsection{Proof of Theorem~\ref{thm:group-regrets}(b)}

Similar to the analysis of regret of \UCBODC in previous subsection, we utilize intermediary lemmas regarding Type-I and Type-II decisions to upper bound the group regret.
Agent $j$ makes a Type-I decision if~\eqref{eq:type-I} holds, otherwise it is a Type-II decision.

%As \AAEODC also selects arms based on confidence interval as defined in~\eqref{eq:confidence-interval}, Lemma~\ref{lemma:decision-type} holds for \AAEODC. That is, the upper bound of the expected number of Type-II decisions~\eqref{eq:qii} also holds for \AAEODC.

\begin{lemma}\label{lemma:type-I-suboptimal-aae}
	If agent $j \in \mathcal{A}$ makes a Type-I decision and pulls suboptimal arm $i \in \mathcal{K}$ by \AAEODC algorithm, at that decision round $t$ we have
	\begin{align}
		\hat{n}^t_j(i) \leq \frac{8\alpha \log (1/\delta_j^t)}{\Delta^2_i}.
	\end{align}
\end{lemma}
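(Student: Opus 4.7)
The plan is to mirror the structure of the proof of Lemma~\ref{lemma:type-I-suboptimal-ucb} but adapt it to \AAEODC's elimination rule and arm-selection rule. The core observation is that under \AAEODC, at time $t$ agent $j$ pulls an arm $i$ only if (i)~$i$ still lies in agent $j$'s candidate set (i.e., $i$ has not yet been eliminated by agent $j$), and (ii)~$i$ is the arm in the candidate set that agent $j$ has the fewest observations of. Combined with a Type-I decision (so the true means of all arms fall inside the confidence intervals), these two facts will force $\Delta_i$ to be at most a constant multiple of $\texttt{CI}^t_j(i)$, from which the stated bound on $\hat{n}^t_j(i)$ follows.

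Concretely, I would first use (i). Because $i$ has not been eliminated, for every arm $i'$, and in particular for the optimal arm $i^*$, we must have $\hat{\mu}^t_j(i) + \texttt{CI}^t_j(i) \geq \hat{\mu}^t_j(i^*) - \texttt{CI}^t_j(i^*)$; otherwise agent $j$ would have removed $i$ from its candidate set. Applying the Type-I inequalities $\hat{\mu}^t_j(i) \leq \mu(i) + \texttt{CI}^t_j(i)$ and $\hat{\mu}^t_j(i^*) \geq \mu(i^*) - \texttt{CI}^t_j(i^*)$ to both sides yields
\begin{equation*}
\mu(i) + 2\,\texttt{CI}^t_j(i) \;\geq\; \mu(i^*) - 2\,\texttt{CI}^t_j(i^*),
\end{equation*}
which rearranges to $\Delta_i \leq 2\bigl(\texttt{CI}^t_j(i) + \texttt{CI}^t_j(i^*)\bigr)$.

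Next I would invoke (ii): since \AAEODC pulls the candidate-set arm with the fewest observations, $\hat{n}^t_j(i) \leq \hat{n}^t_j(i^*)$, and since the width $\texttt{CI}^t_j(\cdot)$ defined in~\eqref{eq:confidence-interval} is monotonically decreasing in $\hat{n}^t_j(\cdot)$, this gives $\texttt{CI}^t_j(i^*) \leq \texttt{CI}^t_j(i)$. Plugging this into the previous inequality yields $\Delta_i \leq 4\,\texttt{CI}^t_j(i)$. Squaring and substituting the closed-form expression for $\texttt{CI}^t_j(i)$ gives
\begin{equation*}
\Delta_i^2 \;\leq\; 16\cdot\frac{\alpha \log(1/\delta^t_j)}{2\,\hat{n}^t_j(i)} \;=\; \frac{8\alpha \log(1/\delta^t_j)}{\hat{n}^t_j(i)},
\end{equation*}
and solving for $\hat{n}^t_j(i)$ delivers the claimed inequality.

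No step here is particularly delicate; the only place to be careful is in justifying the two structural facts about \AAEODC. Specifically, one has to check that the elimination criterion given in \S\ref{sec:aae-odc} (an arm is eliminated at agent $j$ when some other arm's LCB strictly exceeds its UCB) exactly rules out the opposite strict inequality, so that the inequality used in the proof above is indeed non-strict. Once those checks are made, the remaining calculation is routine, and the factor of $8$ (versus the factor of $2$ in Lemma~\ref{lemma:type-I-suboptimal-ucb}) arises precisely from the $\Delta_i \leq 4\,\texttt{CI}^t_j(i)$ bound in AAE instead of the $\Delta_i \leq 2\,\texttt{CI}^t_j(i)$ bound that holds for UCB.
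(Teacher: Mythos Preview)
Your proposal is correct and follows essentially the same approach as the paper's proof: both first derive $\Delta_i \le 2\texttt{CI}^t_j(i)+2\texttt{CI}^t_j(i^*)$ from the fact that $i$ has not been eliminated together with the Type-I condition, then use the least-observations selection rule to obtain $\texttt{CI}^t_j(i^*)\le\texttt{CI}^t_j(i)$ and hence $\Delta_i\le 4\texttt{CI}^t_j(i)$, and finally substitute the definition of $\texttt{CI}^t_j(i)$. The only cosmetic difference is that the paper phrases the first step as a proof by contradiction whereas you argue it directly.
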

\begin{proof}{of Lemma \ref{lemma:type-I-suboptimal-aae}}
	If agent $j \in \mathcal{A}$ makes a Type-I decision and pulls suboptimal arm $i \in \mathcal{K}$ at time $t$ by \AAEODC algorithm, we have
	\begin{align}
		2\texttt{CI}^t_j(i^*) + 2\texttt{CI}^t_j(i) \geq \Delta_i.
	\end{align}
	Because otherwise,
	\begin{align}
		\hat{\mu}(i^*) - \texttt{CI}^t_j(i^*) & = \hat{\mu}(i^*) + \texttt{CI}^t_j(i^*) - 2 \texttt{CI}^t_j(i^*)\notag                          \\
		                                      & \geq \mu(i^*) - 2 \texttt{CI}^t_j(i^*) =\mu(i) + \Delta_i - 2 \texttt{CI}^t_j(i^*)\notag        \\
		                                      & > \mu(i) + 2\texttt{CI}^t_j(i)\geq \hat{\mu}(i) + \texttt{CI}^t_j(i), \label{eq:aae-contradict}
	\end{align}
	contradicting the fact that arm $i$ is pulled by \AAEODC as it is in the candidate set (if \eqref{eq:aae-contradict} holds, arm $i$ should not be in the candidate set).
	Since \AAEODC pulls the arm with least observations in the candidate set, we have $\hat{n}^t_j(i) \leq \hat{n}^t_j(i^*)$ and thereby $\texttt{CI}^t_j(i) \geq \texttt{CI}^t_j(i^*)$. Rewrite $4\texttt{CI}^t_j(i) \geq \Delta_i$ using the definition of $\texttt{CI}^t_j(i)$ in~\eqref{eq:confidence-interval}, we obtain
	\begin{align*}
		\hat{n}^t_j(i) \leq \frac{8\alpha \log 1/\delta_j^t}{\Delta^2_i}.
	\end{align*}
\end{proof}

We then upper bound the group regret of \AAEODC by similar steps in previous subsection.

Let $\tau_i$ be the time slot for suboptimal arm $i$ that
\begin{align}
	%&\frac{8\alpha \log N}{\Delta^2_i} \geq \sum_{j^\prime \in \mathcal{A}} n^{\tau_i-1}_{j^\prime}(i),\label{eq:aae-begin}\\
	 & \frac{16\alpha \log N}{\Delta^2_i} +M \geq  \sum_{j^\prime \in \mathcal{A}} n^{\tau_i}_{j^\prime}(i) > \frac{16\alpha \log N}{\Delta^2_i}\geq \sum_{j^\prime \in \mathcal{A}} n^{\tau_i-1}_{j^\prime}(i).\label{eq:obs-at-tau-aae}
\end{align}
Consider an agent $j$ and a suboptimal arm $i$ such that at time $\tau_i$,
\begin{align}
	\frac{16\alpha \log N}{\Delta^2_i} \geq \frac{16\alpha \log 1/\delta^{\tau_i}_j}{\Delta^2_i}
	 & \geq \hat{n}^{\tau_i}_j(i) = n^{\tau_i}_j(i) + \sum_{j^\prime \in \mathcal{A}\setminus \{j\}} n^{\tau_i}_{j^\prime}(i) - B_{\tau_i}^{j^\prime \rightarrow j}(i)                                                                                                                                                                                                                                               \\
	%&\overset{(a)}{>} \frac{8\alpha \log N}{\Delta_i^2} - \sum_{j^\prime \in \mathcal{A}\setminus \{j\}} B_{\tau_i}^{j^\prime \rightarrow j}(i)\\
	 & \overset{(a)}{>} \frac{16\alpha \log N}{\Delta_i^2} - \sum_{j^\prime \in \mathcal{A}\setminus \{j\}} B_{\tau_i}^{j^\prime \rightarrow j}(i) \mathds{1}_{E_{\tau_i}^{j^\prime \rightarrow j} = \texttt{false}}\notag- \sum_{j^\prime \in \mathcal{A}\setminus \{j\}} B_{\tau_i}^{j^\prime \rightarrow j}(i) \mathds{1}_{E_{\tau_i}^{j^\prime \rightarrow j} = \texttt{true}}                                  \\
	 & \overset{(b)}{\geq} \frac{16\alpha \log N}{\Delta_i^2} - \sum_{j^\prime \in \mathcal{A}\setminus \{j\}} B_{\tau_i}^{j^\prime \rightarrow j}(i) \mathds{1}_{E_{\tau_i}^{j^\prime \rightarrow j} = \texttt{false}}- \sum_{j^\prime \in \mathcal{A}\setminus \{j\}} f(c_{\tau_i}^{j^\prime \rightarrow j})\mathds{1}_{E_{\tau_i}^{j^\prime \rightarrow j} = \texttt{true}}, \label{eq:delayed-transmission-aae}
\end{align}
where $B_t^{j\rightarrow j^\prime}(i)$ denotes the number of reward samples of arm $i$ stored in agent $j$'s buffer for agent $j^\prime$ (and not yet been sent) at time $t$; $B_t^{j\rightarrow j^\prime}$ denotes the total number of observations stored in agent $j$'s buffer for agent $j^\prime$; inequality (a) is because of (\ref{eq:obs-at-tau-aae}); inequality (b) is because, for agent $j^\prime \in \mathcal{A}\setminus \{j\}$ such that $E_{\tau_i}^{j^\prime \rightarrow j} = \texttt{true}$, we have $f(c_{\tau_i}^{j^\prime\rightarrow j}) \geq B_{\tau_i}^{j^\prime \rightarrow j} \geq B_{\tau_i}^{j^\prime \rightarrow j}(i) \geq 0$.
According to Lemma~\ref{lemma:type-I-suboptimal-aae}, such an agent $j$ makes Type-I decisions to pull arm $i$ after time $\tau_i$.

In the following, we bound the extra number of times agent $j$ pulls arm $i$ to make up for the delayed transmission from other agents $j^\prime$. For an agent $j^\prime\in \mathcal{A}\setminus \{j\}$ such that $E_{\tau_i}^{j^\prime \rightarrow j} = \texttt{false}$, if $B_{\tau_i}^{j^\prime \rightarrow j}(i) < f(c_{\tau_i}^{j^\prime \rightarrow j})$, agent $j$ has to make at most $f(c_{\tau_i}^{j^\prime \rightarrow j})$ extra pullings of $i$ to make up for agent $j^\prime$'s delay; if $B_{\tau_i}^{j^\prime \rightarrow j}(i) \geq  f(c_{\tau_i}^{j^\prime \rightarrow j})$, agent $j$ can receive those observations from $j^\prime$ once agent $j$ buffers $f(c_{\tau_i}^{j\rightarrow j^\prime})$ observations for $j^\prime$ and send a message to $j^\prime$. Hence, because of the delayed transmission from agents $j^\prime \in \mathcal{A}\setminus \{j\}: E_{\tau_i}^{j^\prime \rightarrow j} = \texttt{false}$, agent $j$ pulls arm $i$ after time $\tau_i$ at most following number of times:
\begin{align}\label{eq:obs-delay-demand-aae}
	\sum_{j^\prime \in \mathcal{A}\setminus \{j\}} f(\max\{c_{\tau_i}^{j^\prime \rightarrow j}, c_{\tau_i}^{j \rightarrow j^\prime}\}) \mathds{1}_{E_{\tau_i}^{j^\prime \rightarrow j} = \texttt{false}}\leq \sum_{j^\prime \in \mathcal{A}\setminus \{j\}} f(c_{\tau_i}^{j^\prime \rightarrow j}) \mathds{1}_{E_{\tau_i}^{j^\prime \rightarrow j} = \texttt{false}},
\end{align}
where the inequality is because, by the definition of the \ODC, for any pair of agents $j, j^\prime \in \mathcal{A}$ at any time $t$, if $E_t^{j^\prime \rightarrow j} = \texttt{false}$, $1\geq c_t^{j^\prime \rightarrow j} - c_t^{j \rightarrow j^\prime} \geq 0$.
On the other hand, agents $j^\prime\in \mathcal{A}\setminus \{j\}$ such that $E_{\tau_i}^{j^\prime \rightarrow j} = \texttt{true}$ delay transmission of $\sum_{j^\prime \in \mathcal{A}\setminus \{j\}} B_{\tau_i}^{j^\prime \rightarrow j}(i) \mathds{1}_{E_{\tau_i}^{j^\prime \rightarrow j} = \texttt{true}}$ observations of $i$ to agent $j$ at time $\tau_i$ due to waiting for the \BTs to be satisfied.  To make up for this type of delay, agent $j$ pulls arm $i$ after time $\tau_i$ at most following number of times:
\begin{align}\label{eq:obs-delay-buffer-aae}
	\sum_{j^\prime \in \mathcal{A}\setminus \{j\}} f(c_{\tau_i}^{j^\prime \rightarrow j})\mathds{1}_{E_{\tau_i}^{j^\prime \rightarrow j} = \texttt{true}}.
\end{align}

By~\eqref{eq:obs-delay-demand-aae}~\eqref{eq:obs-delay-buffer-aae}) and Lemma~\ref{lemma:type-I-suboptimal-aae}, agent $j$ contributes at most $G^j_i$ extra numbers of pulls of arm $i$ after time $\tau_i$, where
\begin{align}
	G^j_i = \min\Big\{\Big(\sum_{j^\prime \in \mathcal{A}\setminus\{j\}}f(c_{\tau_i}^{j^\prime \rightarrow j})\Big), \frac{16\alpha \log N}{\Delta_i^2}\Big\}. %\label{eq:aae-end}
\end{align}

We now examine Type-II decision case. 
As \AAEODC also selects arms based on confidence interval as defined in~\eqref{eq:confidence-interval}, Lemma~\ref{lemma:decision-type} holds for \AAEODC. 
%That is, the upper bound of the expected number of Type-II decisions~\eqref{eq:qii} also holds for \AAEODC.
According to Lemma~\ref{lemma:decision-type}, with probability at most $2KN_t{\delta^t_{j}}^\alpha$, an agent makes a Type-II decision at a decision round. 
With $\delta^t_j = 1/N^2$, the regret incurred by Type-II decisions is upper bounded by $2KM$.

The expected group regret can be bounded by
\begin{align}
	\mathbb{E}[R] & = \sum_{j\in\mathcal{A}} \mathbb{E}[R^j_{N_j}]
	= \sum_{j\in\mathcal{A}} \sum_{i\in\mathcal{K}} \Delta_i \mathbb{E}[n_j^T(i)]= \sum_{i\in\mathcal{K}} \Delta_i \Big(\sum_{j\in\mathcal{A}} \sum_{\ell=1}^{N_j} \mathbb{P}[I^j_{t^j_\ell} = i]\Big)                                                                   \\
	              & \leq 2KM + \sum_{i\in\mathcal{K}} \Delta_i \Big(\frac{16\alpha \log N}{\Delta_i^2} +M + \sum_{j\in\mathcal{A}}G^j_i\Big)\leq 3KM + \sum_{i\in\mathcal{K}:\Delta_i>0}\Big(\frac{16\alpha \log N}{\Delta_i} +\sum_{j\in\mathcal{A}} G^j_i \Delta_i\Big).
\end{align}
This completes the proof of Theorem~\ref{thm:group-regrets}(b).

\subsection{Recovery of near-optimal regret in synchronous setting} \label{sec:recover-regret}

When applied to a MAMAB setting with synchronous agents where every agent makes a decision at every time slot, our asynchronous algorithm \AAEODC can recover a near-optimal regret $$O\Big(\sum_{i\in\mathcal{K}:\Delta_i>0}\log N/\Delta_i\Big)$$ with the \BTs set to be doubled and proportional to the number of arms remaining in the candidate set, $\mathcal{C}$, i.e., $f(c^{j\rightarrow j^\prime}) = |\mathcal{C}|\times2^{c^{j\rightarrow j^\prime}-1}, \forall j, j^\prime \in \mathcal{A}$, $c^{j\rightarrow j^\prime}=1,2,\ldots$.

Note that, in a synchronous setting, the exchanges demands $E^{j \rightarrow j^\prime}$ and $E^{j^\prime \rightarrow j}$ are both always \texttt{true}. This is because both exchange demands are \texttt{true} at the beginning, and every time agent $j$ sends a message to agent $j^\prime$, agent $j^\prime$ also sends a message to agent $j$ as the \BTs for all (pairs of) agents are the same in a synchronous setting.

Consider the time $\tau_i$ for a suboptimal arm $i$ such that
\begin{align}
	%&\frac{8\alpha \log N}{\Delta^2_i} \geq \sum_{j^\prime \in \mathcal{A}} n^{\tau_i-1}_{j^\prime}(i),\\
	 & \frac{16\alpha \log N}{\Delta^2_i} +M \geq  \sum_{j^\prime \in \mathcal{A}} n^{\tau_i}_{j^\prime}(i) > \frac{16\alpha \log N}{\Delta^2_i}\geq \sum_{j^\prime \in \mathcal{A}} n^{\tau_i-1}_{j^\prime}(i).\label{eq:tau-def-aae}
\end{align}
Consider an agent $j$ such that at time $\tau_i$,
\begin{align*}
	\frac{16\alpha \log N}{\Delta^2_i} \geq \frac{8\alpha \log 1/\delta^{\tau_i}_j}{\Delta^2_i} \geq \hat{n}^{\tau_i}_j(i) = n^{\tau_i}_j(i) + \sum_{j^\prime \in \mathcal{A}\setminus \{j\}} n^{\tau_i}_{j^\prime}(i) - B_{\tau_i}^{j^\prime \rightarrow j}(i).
\end{align*}
Under \AAEODC, the maximum extra number of times agent $j$ pulls arm $i$ after time $\tau_i$ is at most $f(c_{\tau_i})$. Because after agent $j$ makes $f(c_{\tau_i})$ number of observations it sends a message to all other agents and receives the outstanding observations of arm $i$, $\sum_{j^\prime \in \mathcal{A}\setminus \{j\}} B_{\tau_i}^{j^\prime \rightarrow j}(i)$. 

Hence, the total amount of extra number of times agents pull arm $i$ after time $\tau_i$ can be upper bounded by
\begin{align}
	\sum_{j\in\mathcal{A}} f(c_{\tau_i}) \overset{(a)}{\leq} \sum_{j \in\mathcal{A}}  n_j^{\tau_i}(i)+K \overset{(b)}{\leq} \frac{16\alpha \log N}{\Delta^2_i}+M +K,
\end{align}
where inequality (a) is because, by setting the \BTs to be doubled and proportional to the number of arms remaining in the candidate set, the current \BT of an agent, $f(c_{\tau_i})$, is smaller than or equal to $K$ plus the amount of observations that agent have ever made before $\tau_i$, i.e., $f(c_{\tau_i}) \leq n_j^{\tau_i}(i)+K$; inequality (b) is because of the definition of $\tau_i$ in~\eqref{eq:tau-def-aae}.

%!TEX root = ODC-CameraReady.tex

\section{ASYMPTOTIC GROUP REGRET LOWER BOUND FOR ASYNCHRONOUS MAMAB}\label{sec:lower-bound}

The proof techniques for single-agent multi-armed bandit, e.g., in~\cite{bubeck2012regret}, and those for synchronous multi-agent multi-armed bandit, e.g., in~\cite{dubey2020cooperative}, can be applied to asynchronous multi-agent multi-armed bandit by slight modification. For completion of analysis, we provide the details as follows. 

Let $\mathcal{E}_K$ denote the class of $K$-armed bandit where each arm has a Bernoulli reward distribution and there is no collision, i.e., reward realization of arms is not influenced by actions of agents. Let $\nu = (P_1,...,P_K) \in \mathcal{E}_K$, $\nu^\prime = (P_1^\prime,...,P_K^\prime) \in \mathcal{E}_K$ be two $K$-armed bandit instances such that $P_i = P_i^\prime, \forall i\in\mathcal{K}\setminus\{k\}$, where $k$ is a suboptimal arm.
Specifically, $P_k^\prime$ is chosen to be $\text{Bernoulli}(\mu_k+\lambda)$ and $\lambda > \Delta_k$.
Let $\pi$ denote a consistent cooperative policy for asynchronous $M$-agent multi-armed bandit. We have the following divergence decomposition:
\begin{align}
    \mathbb{D}_\text{KL}(\mathbb{P}_{\nu\pi}, \mathbb{P}_{\nu^\prime \pi})&=
    \mathbb{E}_{\nu\pi}\Bigg[\log \frac{d\mathbb{P}_{\nu\pi}}{d\mathbb{P}_{\nu^\prime \pi}}\Big(I_1^{j:t^j_1=1}, x_1(I_1^{j:t^j_1=1}),...,I_T^{j:t^j_{n_j}=T}, x_T(I_T^{j:t^j_{n_j}=T})\Big)\Bigg]\notag\\
    &=\sum_{i\in\mathcal{K}} \mathbb{E}_{\nu\pi}\left[\sum_{j\in\mathcal{A}} n^T_j(i)\right] \mathbb{D}_{\text{KL}}(P_i, P_i^\prime)= \mathbb{E}_{\nu\pi}\left[\sum_{j\in\mathcal{A}} n^T_j(k)\right] \mathbb{D}_{\text{KL}}(P_k, P_k^\prime),\label{eq:decomposition}
\end{align}
where $n^T_j(i)$ is the total number of times agent $j$ pulls arm $i$, and $\mathbb{P}_{\nu\pi}$, $\mathbb{P}_{\nu^\prime \pi}$ are the distributions of the action-reward history induced by the interaction of policy $\pi$ with bandit instances $\nu$ and $\nu^\prime$ respectively.

By high-probability Pinsker inequality, we have the following for any event $A$:
\begin{align}\label{eq:pinsker}
    \mathbb{D}_{\text{KL}}(\mathbb{P}_{\nu\pi}, \mathbb{P}_{\nu^\prime \pi}) \geq \log \frac{1}{2(\mathbb{P}_{\nu\pi}(A)+\mathbb{P}_{\nu^\prime \pi}(A^c))}.
\end{align}
Let $R$ and $R^\prime$ be the (group) regret obtained by policy $\pi$ on bandit instances $\nu$ and $\nu^\prime$ respectively given the asynchronous pulling times of agents $(t^j_1, t^j_2, ...t^j_{N_j}), \forall j\in\mathcal{A}$.
By~\eqref{eq:decomposition}~\eqref{eq:pinsker} and by choosing $A=\left\{\sum_{j\in\mathcal{A}} n^T_j(k) \geq \frac{1}{2}\sum_{j\in\mathcal{A}} N_j =\frac{N}{2}\right\}$, we have
\begin{align}
    R + R^\prime &\geq \frac{N}{2}\Delta_k\mathbb{P}_{\nu\pi}(A)+ \frac{N}{2}(\lambda -\Delta_k) \mathbb{P}_{\nu^\prime \pi}(A^c) \geq \frac{N}{2} \min\{\Delta_k, \lambda-\Delta_k\} (\mathbb{P}_{\nu\pi}(A)+\mathbb{P}_{\nu^\prime \pi}(A^c))\notag\\
    &\geq \frac{N}{4} \min\{\Delta_k, \lambda-\Delta_k\} \exp\left(-\mathbb{D}_{\text{KL}}(\mathbb{P}_{\nu\pi}, \mathbb{P}_{\nu^\prime \pi})\right)\notag\\
    &= \frac{N}{4} \min\{\Delta_k, \lambda-\Delta_k\}\times\exp\left(-\mathbb{E}_{\nu\pi}\left[\sum_{j\in\mathcal{A}} n^T_j(k)\right] \mathbb{D}_{\text{KL}}(P_k, P_k^\prime)\right). \label{eq:event}
\end{align}
Rearranging~\eqref{eq:event} and taking limit inferior, we have
\begin{align*}
    \liminf_{N\rightarrow \infty}\frac{\mathbb{E}_{\nu\pi}\left[\sum_{j\in\mathcal{A}} n^T_j(k)\right]}{\log(N)}&\geq \frac{1}{\mathbb{D}_{\text{KL}}(P_k, P_k^\prime)}\liminf_{N\rightarrow \infty}\frac{\log(\frac{N \min\{\Delta_k, \lambda-\Delta_k\}}{R+R^\prime})}{\log(N)}\\
    &\geq \frac{1}{\mathbb{D}_{\text{KL}}(P_k, P_k^\prime)}\left(1-\limsup_{N\rightarrow\infty}\frac{\log(R+R^\prime)}{\log(N)}\right).
\end{align*}
By the fact that $\pi$ is consistent, we have some constants $\sigma > 0$ and $C_\sigma$, 
\begin{align}\label{eq:asym}
    \liminf_{N\rightarrow \infty}\frac{\mathbb{E}_{\nu\pi}\left[\sum_{j\in\mathcal{A}} n^T_j(k)\right]}{\log(N)}\geq \frac{1}{\mathbb{D}_{\text{KL}}(P_k, P_k^\prime)}\left(1-\limsup_{N\rightarrow\infty}\frac{a\log N+\log C_\sigma}{\log(N)}\right).
\end{align}
Plugging~\eqref{eq:asym} into the definition of group regret, we have 
\begin{align*}
    \liminf_{N\rightarrow\infty}\frac{R}{\log(N)} \geq \liminf_{N\rightarrow\infty}\frac{\sum_i \mathbb{E}_{\nu\pi}\left[\sum_{j\in\mathcal{A}} n^T_j(k)\right] \Delta_i}{\log(N)}\geq \sum_i \frac{\Delta_i}{\mathbb{D}_{\text{KL}}(P_i, P_i^\prime)}.
\end{align*}
This completes the proof.
%!TEX root = ODC-CameraReady.tex

\section{PROOF OF THEOREM~\ref{thm:comm}}\label{sec:comm-proof}

\subsection{Proof of Theorem~\ref{thm:comm}(a)}\label{sec:ucb-comm-proof}
We claim that, according to the rules of \ODC, an agent $j\in\mathcal{A}$ would not send in total more than 
$$\min\{C_j, C_{j^\prime}\}+1$$ messages to agent $j^\prime \in \mathcal{A}\setminus\{j\}$, where 
$$C_j = \max \Big\{C\in \{1,...,N_j\}: \Big(\sum_{c=1}^C f(c) \Big)\leq N_j\Big\},$$
upper bounds the number of times agent $j$ can fulfill the \BTs when \BTs are updated according to a monotonically increasing function $f$.

Suppose $C_j \leq C_{j^\prime}$. Under \ODC, the number of observations in buffer $\sum_{i\in\mathcal{K}} b_n^{j\rightarrow j^\prime}(i)$ must be greater than or equal to the \BT $f(c)$ when (right before) agent $j$ sends the $c$-th message to agent $j^\prime$. Hence, agent $j$ can send in total at most $C_j$ messages to agent $j^\prime$ when $C_j \leq C_{j^\prime}$. Because if agent $j$ sends in total more than $C_j$ messages, e.g., $C_j +1$ messages, to agent $j^\prime$, that means at least one message transmission violates the rule of \ODC as $\Big(\sum_{c=1}^{C_j+1} f(c) \Big)> N_j$.

Suppose $C_j > C_{j^\prime}$. Under \ODC, the exchange demand $E^{j\rightarrow j^\prime}$ must be \texttt{true} when (right before) an agent $j$ sends a message to agent $j^\prime$. According to the rules of \ODC, the exchange demand $E^{j\rightarrow j^\prime}$ is set to \texttt{true} when: 1) during algorithm initialization, 2) agent $j^\prime$ sends agent $j$ a message. Agent $j^\prime$ can send agent $j$ at most $C_{j^\prime}$ messages when $C_j > C_{j^\prime}$ for otherwise it must violate the \BTs rule of \ODC.
Hence, $E^{j\rightarrow j^\prime}$ is set to \texttt{true} at most $C_{j^\prime} + 1$ times. Then, agent $j$ can send agent $j^\prime$ in total at most $C_{j^\prime} + 1$ messages if agent $j$ follows the exchange demand rule of \ODC.

Now take into account the communications between all pairs of agents, we have the communication complexity:
$$C \leq \sum\nolimits_{j \in \mathcal{A}} \sum\nolimits_{j^\prime \in \mathcal{A}\setminus\{j\}} \min\{C_j, C_{j^\prime}\}+1.$$

This completes the proof.

\subsection{Proof of Theorem~\ref{thm:comm}(b)}\label{sec:aae-comm-proof}

We first upper bound the expected total numbers of Type-I decisions and Type-II decisions made by an agent before \AAEODC stops communication, and then 
we follow similar steps in previous subsection to upper bound the communication complexity of \AAEODC.

Note that agent $j$ makes a Type-I decision if~\eqref{eq:type-I} holds, otherwise it is a Type-II decision. By Lemma~\ref{lemma:decision-type}, with $\delta^t_j =  1/N^2$, the expected number of Type-II decisions made by an agent in the entire learning horizon is upper bounded by $2K$. By Lemma~\ref{lemma:type-I-suboptimal-aae}, with $\delta^t_j = 1/N^2$, the expected number of Type-I decisions made by an agent $j$ before the candidate set size reduces to one can be upper bounded by 
\begin{align*}
    \sum_{i\in\mathcal{K}} \frac{ 16\alpha\log N}{\max\{\Delta^2_i, \Delta^2\}}.
\end{align*}
Note that, $N_j$ is the total number of decisions made by agent $j$.
Hence, the expected number of decisions an agent $j$ makes before agents stop communicate with one another can be upper bounded by  
\begin{align*}
    \min \Big\{2K+\sum_{i\in\mathcal{K}} \frac{ 16\alpha\log N}{\max\{\Delta^2_i, \Delta^2\}}, N_j \Big\}.
\end{align*}

Then, we claim that, under \AAEODC, an agent $j\in\mathcal{A}$ would not send in total more than 
$$\min\{C_j, C_{j^\prime}\}+1$$ messages to agent $j^\prime \in \mathcal{A}\setminus\{j\}$, where 
$$C_j = \max \Big\{C\in \{1,...,N_j\}: \Big(\sum_{c=1}^C f(c) \Big)\leq \min \Big\{2K+\sum_{i\in\mathcal{K}} \frac{ 16\alpha\log N}{\max\{\Delta^2_i, \Delta^2\}}, N_j \Big\}\Big\},$$
upper bounds the number of times agent $j$ executing \AAEODC can fulfill the \BTs when \BTs are updated according to a monotonically increasing function $f$.

Suppose $C_j \leq C_{j^\prime}$. Under \ODC, the number of observations in buffer $\sum_{i\in\mathcal{K}} b_n^{j\rightarrow j^\prime}(i)$ must be greater than or equal to the \BT $f(c)$ when (right before) agent $j$ sends the $c$-th message to agent $j^\prime$. Hence, agent $j$ executing \AAEODC can send in total at most $C_j$ messages to agent $j^\prime$ when $C_j \leq C_{j^\prime}$. Because if agent $j$ sends in total more than $C_j$ messages, e.g., $C_j +1$ messages, to agent $j^\prime$, that means at least one message transmission violates the rule of \ODC or \AAEODC as $\Big(\sum_{c=1}^{C_j+1} f(c) \Big)> \min \Big\{2K+\sum_{i\in\mathcal{K}} \frac{ 16\alpha\log N}{\max\{\Delta^2_i, \Delta^2\}}, N_j \Big\}$.

Suppose $C_j > C_{j^\prime}$. Under \ODC, the exchange demand $E^{j\rightarrow j^\prime}$ must be \texttt{true} when (right before) an agent $j$ sends a message to agent $j^\prime$. According to the rules of \ODC, the exchange demand $E^{j\rightarrow j^\prime}$ is set to \texttt{true} when: 1) during algorithm initialization, 2) agent $j^\prime$ sends agent $j$ a message. Agent $j^\prime$ can send agent $j$ at most $C_{j^\prime}$ messages when $C_j > C_{j^\prime}$ for otherwise it must violate the \BTs rule of \ODC.
Hence, $E^{j\rightarrow j^\prime}$ is set to \texttt{true} at most $C_{j^\prime} + 1$ times. Then, agent $j$ can send agent $j^\prime$ in total at most $C_{j^\prime} + 1$ messages if agent $j$ follows the exchange demand rule of \ODC.

Now take into account the communications between all pairs of agents, we have the communication complexity:
$$C \leq \sum\nolimits_{j \in \mathcal{A}} \sum\nolimits_{j^\prime \in \mathcal{A}\setminus\{j\}} \min\{C_j, C_{j^\prime}\}+1.$$

This completes the proof.

%!TEX root = ODC-CameraReady.tex

\section{PSEUDO CODE OF \UCBODC}\label{sec:pseudo-UCB}
We present \UCBODC in Algorithm~\ref{alg:UCB-ODC}.
\begin{algorithm}
\small
	\caption{The \UCBODC Algorithm for Agent $j$}
	\label{alg:UCB-ODC}
	\begin{algorithmic}[1]
		\State \textbf{Initialize:} 
		exchange demands $E^{j\rightarrow j^\prime} \gets \texttt{True}$, $\forall j^\prime \in \mathcal{A}\setminus \{j\}$, buffers $b_n^{j \rightarrow j^\prime}(i) \gets 0$, $b_\mu^{j \rightarrow j^\prime}(i) \gets 0$, $\forall j^\prime \in \mathcal{A}\setminus \{j\}, i\in \mathcal{K}$, number of communications $c^{j\rightarrow j^\prime} \gets 1$, $\forall j^\prime \in \mathcal{A}\setminus \{j\}$, buffer thresholds $f(c^{j\rightarrow j^\prime}) \gets f(1)$, $\forall j^\prime \in \mathcal{A}\setminus \{j\}$, \texttt{UCB} parameters $\hat{n}_j(i)=0$, $\hat{\mu}_j(i) = 0$, $\forall i\in \mathcal{K}$, $n_j=0$, $\delta_j^t=1/n_j$, $\alpha\geq2$
	    \For{$t = 1...T$}
	        \If{$t$ is a decision time slot of agent $j$, i.e., $t \in \{t^j_1,...,t^j_{N_j}\}$}
	            \State Pull arm $I_t^{j}$ with highest UCB, i.e., $I^j_t \equiv \arg\max_{i\in\mathcal{K}} \hat{\mu}(i) + \texttt{CI}^t_j(i)$, and receive instantaneous reward  $x_t(I^{j}_t)$
	            \State Increase $\hat{n}_j(I_t^j)$ and $n_j$ by $1$, and update the empirical mean value, $\hat{\mu}(I_t^j)$, with instantaneous reward  $x_t(I^{j}_t)$ \label{algline:update-para} 
	            \State Reconstruct the UCBs based on the updated values of $\hat{n}_j(I_t)$, $n_j$, and $\hat{\mu}_j(I_t^j)$ by using Equation~\eqref{eq:confidence-interval}\label{algline:reconstruct} 
	            \For{each agent $j^\prime \in \mathcal{A}\setminus \{j\}$}
	                \State Update the buffer for  agent $j^\prime$: $b_n^{j\rightarrow j^\prime}(I^j_t) \gets b_n^{j\rightarrow j^\prime}(I^j_t) +1$, $b_\mu^{j\rightarrow j^\prime}(I^j_t) \gets b_\mu^{j\rightarrow j^\prime}(I^j_t) + x_t(I^j_t)$
	                \If{$E^{j\rightarrow{j^\prime}}$ is $\texttt{True}$ and $\sum_{i\in \mathcal{K}}b_n^{j\rightarrow j^\prime}(i) \geq f(c^{j \rightarrow j^\prime})$} 
            		\State Share the buffered information with $j^\prime$, i.e., send a message as defined in Definition~\ref{def:msg}, Set $c^{j\rightarrow j^\prime} \gets c^{j\rightarrow j^\prime} + 1$
            		\State Set exchange demand  $E^{j\rightarrow{j^\prime}} \gets \texttt{False}$ and renew the buffer for agent $j^\prime$ 
            		\State Update \BT $f(c^{j\rightarrow j^\prime})$, e.g., double it $f(c^{j\rightarrow j^\prime}) \gets 2f(c^{j\rightarrow j^\prime}-1)$ or keep it the same 
            		\EndIf
	            \EndFor
	        \EndIf
	        \For{each new message received from any agent $j^\prime \in \mathcal{A}\setminus\{j\}$}
	            \State Increase $\hat{n}_j(i), \forall i \in\mathcal{K}$ and update empirical means, $\hat{\mu}_j(i), \forall i \in\mathcal{K}$, according to the information in the message 
	            \State Execute Line (\ref{algline:reconstruct}) to reconstruct UCBs
        		\If{agent $j$ has buffered $f(c^{j \rightarrow j^\prime})$ observations for $j^\prime$, i.e., $\sum_{i\in \mathcal{K}}b_n^{j\rightarrow j^\prime}(i) \geq f(c^{j \rightarrow j^\prime})$}
        		\State Share information by sending a message as defined in Definition~\ref{def:msg} to $j^\prime$, Set $c^{j\rightarrow j^\prime} \gets c^{j\rightarrow j^\prime} + 1$, renew buffer for $j^\prime$
        		\State Update \BT $f(c^{j\rightarrow j^\prime})$, e.g., double it $f(c^{j\rightarrow j^\prime}) \gets 2f(c^{j\rightarrow j^\prime}-1)$ or keep it the same
        		\Else
        		\State Set exchange demand $E^{j\rightarrow{j^\prime}} \gets \texttt{True}$
        		\EndIf
    		\EndFor
	    \EndFor
	\end{algorithmic}
\end{algorithm}

\clearpage
\section{PSEUDO CODE OF \AAEODC}\label{sec:pseudo-AAE}
We present \AAEODC in Algorithm~\ref{alg:AAE-ODC}.
\begin{algorithm}
\small
	\caption{The \AAEODC Algorithm for Agent $j$ }
	\label{alg:AAE-ODC}
	\begin{algorithmic}[1]
		\State \textbf{Initialize:} exchange demands $E^{j\rightarrow j^\prime} \gets \texttt{True}$, $\forall j^\prime \in \mathcal{A}\setminus \{j\}$, buffers $b_n^{j \rightarrow j^\prime}(i) \gets 0$, $b_\mu^{j \rightarrow j^\prime}(i) \gets 0$, $\forall j^\prime \in \mathcal{A}\setminus \{j\}, i\in \mathcal{K}$, number of communications $c^{j\rightarrow j^\prime} \gets 1$, $\forall j^\prime \in \mathcal{A}\setminus \{j\}$, buffer thresholds $f(c^{j\rightarrow j^\prime}) \gets f(1)$, $\forall j^\prime \in \mathcal{A}\setminus \{j\}$, \texttt{AAE} parameters
		$\hat{n}_j(i)=0$, $\hat{\mu}_j(i)=0$, $\forall i\in \mathcal{K}$, $n_j = 0$, $\delta_j^t=1/n_j$, $\alpha\geq 2$, candidate set $\mathcal{C} = \{1, 2,... , K\}$
		\For{$t=1...T$}
		    \If{$t$ is a decision time slot of agent $j$, i.e., $t \in \{t^j_1,...,t^j_{N_j}\}$}
		        \State Recompute confidence intervals $\texttt{CI}^t_j(i), \forall i \in \mathcal{K}$ as defined in~\eqref{eq:confidence-interval}
		        \For{$i\in\mathcal{C}$}
		            \If{$|\mathcal{C}|>1$ and $\exists i^\prime \in \mathcal{K} \text{ s.t. } \hat{\mu}^t_j(i) + \texttt{CI}^t_j(i) < \hat{\mu}^t_j(i^\prime) -\texttt{CI}^t_j(i^\prime)$}
		                \State Eliminate arm $i$ from the candidate set, i.e., $\mathcal{C}\gets \mathcal{C} \setminus \{i\}$
		                \State Broadcast index of arm $i$ to all agents $j \in \mathcal{A}$
		            \EndIf
		        \EndFor
		        \State Pull arm $I_t^j$ from the candidate set $\mathcal{C}$ with the least observations, and receive instantaneous reward  $x_t(I^{j}_t)$
		        \State Increase $\hat{n}_j(I_t^j)$ and $n_j$ by $1$, and update empirical mean, $\hat{\mu}(I_t^j)$, with instantaneous reward  $x_t(I^{j}_t)$
		        \If{$|\mathcal{C}|>1$}
    		        \For{each agent $j^\prime \in \mathcal{A}\setminus \{j\}$}
    	                \State Update the buffer for  agent $j^\prime$: $b_n^{j\rightarrow j^\prime}(I^j_t) \gets b_n^{j\rightarrow j^\prime}(I^j_t) +1$, $b_\mu^{j\rightarrow j^\prime}(I^j_t) \gets b_\mu^{j\rightarrow j^\prime}(I^j_t) + x_t(I^j_t)$
    	                \If{$E^{j\rightarrow{j^\prime}}$ is $\texttt{True}$ and $\sum_{i\in \mathcal{K}}b_n^{j\rightarrow j^\prime}(i) \geq f(c^{j \rightarrow j^\prime})$} 
                		\State Share the buffered information with $j^\prime$, i.e., send a message as defined in Definition~\ref{def:msg}, Set $c^{j\rightarrow j^\prime} \gets c^{j\rightarrow j^\prime} + 1$
                		\State Set exchange demand  $E^{j\rightarrow{j^\prime}} \gets \texttt{False}$ and renew the buffer for agent $j^\prime$ 
                		\State Update \BT $f(c^{j\rightarrow j^\prime})$, e.g., double it $f(c^{j\rightarrow j^\prime}) \gets 2f(c^{j\rightarrow j^\prime}-1)$ or keep it the same 
                		\EndIf
    	            \EndFor
	            \EndIf
		    \EndIf
		    \For{each new message received from any agent $j^\prime \in \mathcal{A}\setminus\{j\}$}
		        \If{it is an elimination notice of arm $i$}
		            \State Eliminate arm $i$ from the candidate set, i.e., $\mathcal{C}\gets \mathcal{C} \setminus \{i\}$
		        \Else
    		        \State Increase $\hat{n}_j(i), \forall i \in\mathcal{K}$ and update empirical means, $\hat{\mu}_j(i), \forall i \in\mathcal{K}$, according to the information in the message 
    		        \If{agent $j$ has buffered $f(c^{j \rightarrow j^\prime})$ observations for $j^\prime$, i.e., $\sum_{i\in \mathcal{K}}b_n^{j\rightarrow j^\prime}(i) \geq f(c^{j \rightarrow j^\prime})$}
            		\State Share information by sending a message as defined in Definition~\ref{def:msg} to $j^\prime$, Set $c^{j\rightarrow j^\prime} \gets c^{j\rightarrow j^\prime} + 1$, renew buffer for $j^\prime$
            		\State Update \BT $f(c^{j\rightarrow j^\prime})$, e.g., double it $f(c^{j\rightarrow j^\prime}) \gets 2f(c^{j\rightarrow j^\prime}-1)$ or keep it the same
            		\Else
            		\State Set exchange demand $E^{j\rightarrow{j^\prime}} \gets \texttt{True}$
            		\EndIf
        		\EndIf
		    \EndFor
		\EndFor
	\end{algorithmic}
\end{algorithm}

%!TEX root = ODC-CameraReady.tex

\clearpage
\section{ACCOUNTING FOR COMMUNICATION DELAY}\label{sec:delay}

Suppose message transmission between agents suffers a deterministic delay, $d$. In the following, we discuss how the communication delays affect the group regrets and communication complexities of \UCBODC and \AAEODC.

For \UCBODC (resp. \AAEODC), we consider time slot $\tau_i$ for each suboptimal arm $i$ such that~\eqref{eq:obs-at-tau} (resp.~\eqref{eq:obs-at-tau-aae}) holds, and consider agent $j$ such that, at time $\tau_i$,~\eqref{eq:delayed-transmission} (resp.~\eqref{eq:delayed-transmission-aae}) holds; by Lemma~\ref{lemma:type-I-suboptimal-ucb} (resp. Lemma~\ref{lemma:type-I-suboptimal-aae}), agent $j$ makes Type-I decisions to pull arm $i$ after time $\tau_i$. In the following, we upper bound the extra number of times (under deterministic communication delays) agent $j$ pulls arm $i$ to make up for the delayed transmission of observations from other agents.

Recall that $B_t^{j\rightarrow j^\prime}(i)$ denotes the number of reward samples of arm $i$ stored in agent $j$'s buffer for agent $j^\prime$ (and not yet been sent) at time $t$, and $B_t^{j\rightarrow j^\prime}$ denotes the total number of observations stored in agent $j$'s buffer for agent $j^\prime$ at time $t$.
For an agent $j^\prime\in \mathcal{A}\setminus \{j\}$ such that $E_{\tau_i}^{j^\prime \rightarrow j} = \texttt{false}$, if $B_{\tau_i}^{j^\prime \rightarrow j}(i) < f(c_{\tau_i}^{j^\prime \rightarrow j})$, agent $j$ has to make at most $f(c_{\tau_i}^{j^\prime \rightarrow j})$ extra pulls of $i$ to make up for agent $j^\prime$'s delay. If $B_{\tau_i}^{j^\prime \rightarrow j}(i) \geq  f(c_{\tau_i}^{j^\prime \rightarrow j})$, agent $j$ can send a message to $j^\prime$ once agent $j$ buffers $f(c_{\tau_i}^{j\rightarrow j^\prime})$ observations for $j^\prime$; the message takes $d$ time slots to reach agent $j^\prime$, and the reply from agent $j^\prime$ with the outstanding observations takes $d$ time slots to reach agent $j$. During the $2d$ time slots, agent $j$ makes at most $2d$ pulls on arm $i$.
Hence, because of the delayed transmission from agents $j^\prime \in \mathcal{A}\setminus \{j\}: E_{\tau_i}^{j^\prime \rightarrow j} = \texttt{false}$, agent $j$ pulls arm $i$ after time $\tau_i$ at most following number of times:
\begin{align}
    \sum_{j^\prime \in \mathcal{A}\setminus \{j\}} 2d+f(\max\{c_{\tau_i}^{j^\prime \rightarrow j}, c_{\tau_i}^{j \rightarrow j^\prime}\}) \mathds{1}_{E_{\tau_i}^{j^\prime \rightarrow j} = \texttt{false}} \leq \sum_{j^\prime \in \mathcal{A}\setminus \{j\}} 2d+f(c_{\tau_i}^{j^\prime \rightarrow j}) \mathds{1}_{E_{\tau_i}^{j^\prime \rightarrow j} = \texttt{false}},
\end{align}
where the inequality is because, by the definition of the \ODC, for any pair of agents $j, j^\prime \in \mathcal{A}$ at any time $t$, if $E_t^{j^\prime \rightarrow j} = \texttt{false}$, $1\geq c_t^{j^\prime \rightarrow j} - c_t^{j \rightarrow j^\prime} \geq 0$.
On the other hand, agents $j^\prime\in \mathcal{A}\setminus \{j\}$ such that $E_{\tau_i}^{j^\prime \rightarrow j} = \texttt{true}$ delay transmission of $\sum_{j^\prime \in \mathcal{A}\setminus \{j\}} B_{\tau_i}^{j^\prime \rightarrow j}(i) \mathds{1}_{E_{\tau_i}^{j^\prime \rightarrow j} = \texttt{true}}$ observations of $i$ to agent $j$ at time $\tau_i$ due to waiting for the \BTs to be satisfied.  To make up for this type of delay, agent $j$ pulls arm $i$ after time $\tau_i$ at most following number of times:
\begin{align}
    \sum_{j^\prime \in \mathcal{A}\setminus \{j\}} f(c_{\tau_i}^{j^\prime \rightarrow j})\mathds{1}_{E_{\tau_i}^{j^\prime \rightarrow j} = \texttt{true}}.
\end{align}

Therefore, the upper bound of expected group regret of \UCBODC under deterministic communication delay $d$ has the same form as~\eqref{eq:UCBODC-regret} in Theorem~\ref{thm:group-regrets}(a) but with $F^j_i$ defined as follows:
\begin{align}
    F^j_i = \min\Big\{\Big(\sum_{j^\prime \in \mathcal{A}\setminus\{j\}}2d+f(c_{\tau_i}^{j^\prime \rightarrow j})\Big), \frac{2\alpha \log N}{\Delta_i^2}\Big\}.
\end{align}
The upper bound of expected group regret of \AAEODC under deterministic communication delay $d$ has the same form as~\eqref{eq:AAEODC-regret} in Theorem~\ref{thm:group-regrets}(b) but with $G^j_i$ defined as follows:
\begin{align}
    G^j_i = \min\Big\{\Big(\sum_{j^\prime \in \mathcal{A}\setminus\{j\}}2d+f(c_{\tau_i}^{j^\prime \rightarrow j})\Big), \frac{16\alpha \log N}{\Delta_i^2}\Big\}. %\label{eq:aae-end}
\end{align}

Under \ODC, any agent $j$ needs the exchange demand $E^{j\rightarrow j^\prime}$ to be set to \texttt{true} to be allowed to send a message to another agent $j^\prime$. Communication delay would never increase the number of times exchange demands be set to \texttt{true}. Hence, the communication complexity upper bounds in Theorem~\ref{thm:comm}(a) and Theorem~\ref{thm:comm}(b) still hold for \UCBODC and \AAEODC respectively
under deterministic communication delay $d$.

%!TEX root = ODC-CameraReady.tex

\section{ACCOUNTING FOR HETEROGENEOUS ARM SETS}\label{sec:heterogeneous}
Agents having different but overlapping arm sets is a practical scenario in MAMAB. In the following, we discuss how to generalize \UCBODC and \AAEODC to account for heterogeneous arm sets.

\subsection{Model Formulation}

We need additional notations for formulating heterogeneous arm set scenario. In this scenario, agents receive the same expected rewards from the same arms but each agent only has access to a \textit{local} subset of the $K$ arms, as in~\cite{yang2022distributed, chawla2020gossiping, yang2021cooperative}. Specifically, agent $j \in \mathcal{A}$ has access to a subset of arms $\mathcal{K}_j \subseteq \mathcal{K}$ \textit{known} to every agent. 
We refer to arms in $\mathcal{K}_j$ as \textit{local} arms of agent $j$. Let $K_j = |\mathcal{K}_j|$. Without loss of generality, we assume that at least two arm sets overlap; i.e., $\exists j, j^\prime \in \mathcal{A}$ s.t. $\mathcal{K}_j \cap \mathcal{K}_{j^\prime} \neq \emptyset$.

Let $i^*_j=\arg\max\nolimits_{i\in \mathcal{K}_j} \mu_i$ denote the local optimal arm of agent $j$.
Let $\mathcal{A}_i$ denote the set of agents whose local arm set includes arm $i$, i.e., $\mathcal{A}_i \equiv \{j \in \mathcal{A}: i \in \mathcal{K}_j\}$. Let $\mathcal{A}_i^*$ denote the set of agents whose local optimal arm is $i$, i.e., $\mathcal{A}_i^* \equiv \{j\in \mathcal{A}_i: i = i^*_j\}$ and let $\mathcal{A}^*_{-i} = \mathcal{A}_i \setminus \mathcal{A}_i^*$. Note that $\mathcal{A}_i^*$ or $\mathcal{A}_{-i}^*$ may be empty. Let $\mathcal{A}^{(j)}$ denote the set of agents that share arms with agent $j$, i.e., $\mathcal{A}^{(j)} \equiv \cup_{i\in\mathcal{K}_j} \mathcal{A}_i\setminus \{j\}$. Let $M_i = |\mathcal{A}_i|$, $M_i^* = |\mathcal{A}_i^*|$, $M_{-i}^* = |\mathcal{A}_{-i}^*|$, and $M^{(j)} = |\mathcal{A}^{(j)}|$.
Let $\Delta(i^*_j, i)$ denote the suboptimality gap of arm $i$ in agent $j$'s local arm set $\mathcal{K}_j$, $i \in \mathcal{K}_j$.
We further denote the smallest suboptimality gap of arm $i$ as $\tilde{\Delta}_i$ and denote the agent that contains $\tilde{\Delta}_i$ as $\tilde{j}_i$, i.e.,
\begin{equation*}
\tilde{\Delta}_i \equiv \begin{cases}
\min_{j \in \mathcal{A}^*_{-i}} \Delta(i^*_j, i),  & \mathcal{A}^*_{-i} \neq \emptyset, \\
0, &\text{otherwise},
\end{cases}
\quad \text{and} \quad
\tilde{j}_i \equiv \begin{cases}
\arg\min_{j \in \mathcal{A}^*_{-i}} \Delta(i^*_j, i),  & \mathcal{A}^*_{-i} \neq \emptyset, \\
0, &\text{otherwise}.
\end{cases}
\end{equation*}
The expected cumulative regret of each agent $j$ becomes
\begin{align*}
   \mathbb{E}[R^j_{N_j}] = \mu(i^*_j)N_j - \mathbb{E}[\sum\nolimits_{t \in\{t^j_1, t^j_2,..., t^j_{N_j}\}} x_t(I^j_t)].
\end{align*}
Expected group regret is $\mathbb{E}[R] = \sum_{j\in\mathcal{A}}\mathbb{E}[R^j_{N_j}]$.

\subsection{Algorithm}
We present the extension of \UCBODC in Algorithm~\ref{alg:UCB-ODC-heterogeneous}. 
\begin{algorithm}
\small
	\caption{The \UCBODC Algorithm for Agent $j$ (with heterogeneous arm sets)}
	\label{alg:UCB-ODC-heterogeneous}
	\begin{algorithmic}[1]
	    \State \textbf{Input:} other agents' local arm sets $(\mathcal{K}_1,..., \mathcal{K}_M)$
		\State \textbf{Initialize:} 
		exchange demands $E^{j\rightarrow j^\prime} \gets \texttt{True}$, $\forall j^\prime \in \mathcal{A}^{(j)}$, buffers $b_n^{j \rightarrow j^\prime}(i) \gets 0$, $b_\mu^{j \rightarrow j^\prime}(i) \gets 0$, $\forall j^\prime \in \mathcal{A}^{(j)}, i\in \mathcal{K}_j\cap\mathcal{K}_{j^\prime}$, number of communications $c^{j\rightarrow j^\prime} \gets 1$, $\forall j^\prime \in \mathcal{A}^{(j)}$, buffer thresholds $f(c^{j\rightarrow j^\prime}) \gets f(1)$, $\forall j^\prime \in \mathcal{A}^{(j)}$, \texttt{UCB} parameters $\hat{n}_j(i)=0$, $\hat{\mu}_j(i) = 0$, $\forall i\in \mathcal{K}_j$, $n_j=0$, $\delta_j^t=1/n_j$, $\alpha\geq2$
	    \For{$t = 1...T$}
	        \If{$t$ is a decision time slot of agent $j$, i.e., $t \in \{t^j_1,...,t^j_{N_j}\}$}
	            \State Pull arm $I_t^{j}$ with highest UCB, i.e., $I^j_t \equiv \arg\max_{i\in\mathcal{K}_j} \hat{\mu}(i) + \texttt{CI}^t_j(i)$, and receive instantaneous reward  $x_t(I^{j}_t)$
	            \State Increase $\hat{n}_j(I_t^j)$ and $n_j$ by $1$, and update the empirical mean value, $\hat{\mu}(I_t^j)$, with instantaneous reward  $x_t(I^{j}_t)$ \label{algline:update-para-heterogeneous} 
	            \State Reconstruct the UCBs based on the updated values of $\hat{n}_j(I_t)$, $n_j$, and $\hat{\mu}_j(I_t^j)$ by using Equation~\eqref{eq:confidence-interval}\label{algline:reconstruct-heterogeneous} 
	            \For{each agent $j^\prime \in \mathcal{A}_{I^j_t}$}
	                \State Update the buffer for  agent $j^\prime$: $b_n^{j\rightarrow j^\prime}(I^j_t) \gets b_n^{j\rightarrow j^\prime}(I^j_t) +1$, $b_\mu^{j\rightarrow j^\prime}(I^j_t) \gets b_\mu^{j\rightarrow j^\prime}(I^j_t) + x_t(I^j_t)$
	                \If{$E^{j\rightarrow{j^\prime}}$ is $\texttt{True}$ and $\sum_{i\in \mathcal{K}_j\cap\mathcal{K}_{j^\prime}}b_n^{j\rightarrow j^\prime}(i) \geq f(c^{j \rightarrow j^\prime})$} 
            		\State Share the buffered information with $j^\prime$, i.e., send a message as defined in Definition~\ref{def:msg}, Set $c^{j\rightarrow j^\prime} \gets c^{j\rightarrow j^\prime} + 1$
            		\State Set exchange demand  $E^{j\rightarrow{j^\prime}} \gets \texttt{False}$ and renew the buffer for agent $j^\prime$ 
            		\State Update \BT $f(c^{j\rightarrow j^\prime})$, e.g., double it $f(c^{j\rightarrow j^\prime}) \gets 2f(c^{j\rightarrow j^\prime}-1)$ or keep it the same 
            		\EndIf
	            \EndFor
	        \EndIf
	        \For{each new message received from any agent $j^\prime \in \mathcal{A}^{(j)}$}
	            \State Increase $\hat{n}_j(i), \forall i \in\mathcal{K}_j \cap \mathcal{K}_{j^\prime}$ and update empirical means, $\hat{\mu}_j(i), \forall i \in\mathcal{K}_j \cap \mathcal{K}_{j^\prime}$, according to the message 
	            \State Execute Line (\ref{algline:reconstruct-heterogeneous}) to reconstruct UCBs
        		\If{agent $j$ has buffered $f(c^{j \rightarrow j^\prime})$ observations for $j^\prime$, i.e., $\sum_{i\in \mathcal{K}_j\cap\mathcal{K}_{j^\prime}}b_n^{j\rightarrow j^\prime}(i) \geq f(c^{j \rightarrow j^\prime})$}
        		\State Share information by sending a message as defined in Definition~\ref{def:msg} to $j^\prime$, Set $c^{j\rightarrow j^\prime} \gets c^{j\rightarrow j^\prime} + 1$, renew buffer for $j^\prime$
        		\State Update \BT $f(c^{j\rightarrow j^\prime})$, e.g., double it $f(c^{j\rightarrow j^\prime}) \gets 2f(c^{j\rightarrow j^\prime}-1)$ or keep it the same
        		\Else
        		\State Set exchange demand $E^{j\rightarrow{j^\prime}} \gets \texttt{True}$
        		\EndIf
    		\EndFor
	    \EndFor
	\end{algorithmic}
\end{algorithm}

\clearpage
We present the extension of \AAEODC in Algorithm~\ref{alg:AAE-ODC-heterogeneous}. Note that, in heterogeneous arm sets setting, each agent needs to maintain a \textit{candidate set}; two agents stop communicating once both of their candidate set sizes reduce one. 

\begin{algorithm}
\small
	\caption{The \AAEODC Algorithm for Agent $j$ (with heterogeneous arm sets)}
	\label{alg:AAE-ODC-heterogeneous}
	\begin{algorithmic}[1]
	    \State \textbf{Input:} Other agents' local arm sets $(\mathcal{K}_1,... , \mathcal{K}_M)$
		\State \textbf{Initialize:} exchange demands $E^{j\rightarrow j^\prime} \gets \texttt{True}$, $\forall j^\prime \in \mathcal{A}^{(j)}$, buffers $b_n^{j \rightarrow j^\prime}(i) \gets 0$, $b_\mu^{j \rightarrow j^\prime}(i) \gets 0$, $\forall j^\prime \in \mathcal{A}^{(j)}, i \in \mathcal{K}_j\cap\mathcal{K}_{j^\prime}$, number of communications $c^{j\rightarrow j^\prime} \gets 1$, $\forall j^\prime \in \mathcal{A}^{(j)}$, buffer thresholds $f(c^{j\rightarrow j^\prime}) \gets f(1)$, $\forall j^\prime \in \mathcal{A}^{(j)}$, \texttt{AAE} parameters
		$\hat{n}_j(i)=0$, $\hat{\mu}_j(i)=0$, $\forall i\in \mathcal{K}_j$, $n_j = 0$, $\delta_j^t=1/n_j$, $\alpha\geq 2$, candidate sets $\mathcal{C}_j = \mathcal{K}_j$ and $\mathcal{C}_{j^\prime} = \mathcal{K}_{j^\prime}, \forall j^\prime \in \mathcal{A}^{(j)}$
		\For{$t=1...T$}
		    \If{$t$ is a decision time slot of agent $j$, i.e., $t \in \{t^j_1,...,t^j_{N_j}\}$}
		        \State Recompute confidence intervals $\texttt{CI}^t_j(i), \forall i \in \mathcal{K}_j$ as defined in~\eqref{eq:confidence-interval}
		        \For{$i\in\mathcal{C}_j$}
		            \If{$|\mathcal{C}_j|>1$ and $\exists i^\prime \in \mathcal{K}_j \text{ s.t. } \hat{\mu}^t_j(i) + \texttt{CI}^t_j(i) < \hat{\mu}^t_j(i^\prime) -\texttt{CI}^t_j(i^\prime)$}
		                \State Eliminate arm $i$ from the candidate set, i.e., $\mathcal{C}_j\gets \mathcal{C}_j \setminus \{i\}$
		                \State Broadcast index of arm $i$ to all agents $j \in \mathcal{A}_i$
		            \EndIf
		        \EndFor
		        \State Pull arm $I_t^j$ from the candidate set $\mathcal{C}_j$ with the least observations, and receive instantaneous reward  $x_t(I^{j}_t)$
		        \State Increase $\hat{n}_j(I_t^j)$ and $n_j$ by $1$, and update empirical mean, $\hat{\mu}(I_t^j)$, with instantaneous reward  $x_t(I^{j}_t)$
		        \For{each agent $j^\prime \in \mathcal{A}_{I^j_t}$ that $|\mathcal{C}_{j^\prime}|>1$ or $|\mathcal{C}_j|>1$}
	                \State Update the buffer for  agent $j^\prime$: $b_n^{j\rightarrow j^\prime}(I^j_t) \gets b_n^{j\rightarrow j^\prime}(I^j_t) +1$, $b_\mu^{j\rightarrow j^\prime}(I^j_t) \gets b_\mu^{j\rightarrow j^\prime}(I^j_t) + x_t(I^j_t)$
	                \If{$E^{j\rightarrow{j^\prime}}$ is $\texttt{True}$ and $\sum_{i\in \mathcal{K}_j\cap\mathcal{K}_{j^\prime}}b_n^{j\rightarrow j^\prime}(i) \geq f(c^{j \rightarrow j^\prime})$} 
            		\State Share the buffered information with $j^\prime$, i.e., send a message as defined in Definition~\ref{def:msg}, Set $c^{j\rightarrow j^\prime} \gets c^{j\rightarrow j^\prime} + 1$
            		\State Set exchange demand  $E^{j\rightarrow{j^\prime}} \gets \texttt{False}$ and renew the buffer for agent $j^\prime$ 
            		\State Update \BT $f(c^{j\rightarrow j^\prime})$, e.g., double it $f(c^{j\rightarrow j^\prime}) \gets 2f(c^{j\rightarrow j^\prime}-1)$ or keep it the same 
            		\EndIf
	            \EndFor
		    \EndIf
		    \For{each new message received from any agent $j^\prime \in \mathcal{A}^{(j)}$}
		        \If{it is an elimination notice of arm $i$ from agent $j^\prime$}
		            \State Eliminate arm $i$ from the candidate set, i.e., $\mathcal{C}_{j^\prime}\gets \mathcal{C}_{j^\prime} \setminus \{i\}$
		        \Else
    		        \State Increase $\hat{n}_j(i), \forall i \in\mathcal{K}_j\cap\mathcal{K}_{j^\prime}$ and update empirical means, $\hat{\mu}_j(i), \forall i \in\mathcal{K}_j\cap\mathcal{K}_{j^\prime}$, according to the message 
    		        \If{agent $j$ has buffered $f(c^{j \rightarrow j^\prime})$ observations for $j^\prime$, i.e., $\sum_{i\in \mathcal{K}_j\cap\mathcal{K}_{j^\prime}}b_n^{j\rightarrow j^\prime}(i) \geq f(c^{j \rightarrow j^\prime})$}
            		\State Share information by sending a message as defined in Definition~\ref{def:msg} to $j^\prime$, Set $c^{j\rightarrow j^\prime} \gets c^{j\rightarrow j^\prime} + 1$, renew buffer for $j^\prime$
            		\State Update \BT $f(c^{j\rightarrow j^\prime})$, e.g., double it $f(c^{j\rightarrow j^\prime}) \gets 2f(c^{j\rightarrow j^\prime}-1)$ or keep it the same
            		\Else
            		\State Set exchange demand $E^{j\rightarrow{j^\prime}} \gets \texttt{True}$
            		\EndIf
        		\EndIf
		    \EndFor
		\EndFor
	\end{algorithmic}
\end{algorithm}

\clearpage
\subsection{Analysis of Regret and Communication Complexity}

\textbf{Expected Group Regret of \UCBODC under Heterogeneous Arm Sets.} 
With algorithm parameters $\delta^t_j =  1/N$ and $\alpha \geq 3$, the expected group regret of \emph{\UCBODC} under heterogeneous arm sets satisfies
	\begin{align}\label{eq:UCBODC-regret-heterogeneous}
	\mathbb{E}[R] 
	\leq 3KM + \sum\limits_{i\in \mathcal{K}: \tilde{\Delta}_i > 0} \Bigg(\frac{4 \alpha \log N }{\tilde{\Delta}_i}+\sum_{j\in\mathcal{A}^*_{-i}}\min\Bigg\{\Big(\sum_{j^\prime \in \mathcal{A}_i\setminus\{j\}}f(c_{\tau_i}^{j^\prime \rightarrow j})\Big), \frac{2\alpha \log N}{\Delta^2(i^*_j, i)}\Bigg\} \Delta(i^*_j, i)\Bigg).
	\end{align}

Recall that $\mathcal{A}^*_{-i}$ is the set of agent with arm $i$ as a local suboptimal arm. Following similar arguments in the proof of Lemma~\ref{lemma:type-I-suboptimal-ucb}, if agent $j\in\mathcal{A}^*_{-i}$ makes a Type-I decision and pulls arm $i\in\mathcal{K}_j$ by \UCBODC algorithm at time $t$, we have that 
\begin{equation}
    \hat{n}^t_j(i) \leq \frac{2\alpha \log (1/\delta^{t}_j)}{\Delta^2(i^*_j, i)}.\label{eq:aae-heterogeneous-type-I}
\end{equation}
Without loss of generality, we let $\mathcal{A}^*_{-i} = \{j_m: m=1,2,...,M_{-i}\}$, where ${\Delta(i^*_{j_1}, i) \geq \Delta(i^*_{j_2}. i) \geq \dots\geq \Delta(i^*_{j_{M_{-i}}}, i)}$ and $M_{-i} = |\mathcal{A}^*_{-i}|$. Agent $j_{M_{-i}}$ needs the most number of observations of arm $i$ to differentiate it from its local optimal arm because $j_{M_{-i}}$ is agent with the smallest $\Delta(i^*_j, i)$ among all $j \in \mathcal{A}^*_{-i}$. Though $j_{M_{-i}}$ is the agent in $\mathcal{A}^*_{-i}$ that needs the most number of observations of arm $i$, each time agent $j_{M_{-i}}$ pulls arm $i$ in fact incur the smallest regret than each time other agents in $\mathcal{A}^*_{-i}$ pull arm $i$ because it has the smallest $\Delta(i^*_j, i)$ among all $j \in \mathcal{A}^*_{-i}$. When those agents $j_m \in \mathcal{A}^*_{-i}$ with largest $\Delta(i^*_{j_m}, i)$s make the most number of pulls of arm $i$, the largest regret on arm $i$ is incurred with the same number of times arm $i$ being pulled. %$\frac{2\alpha \log N}{\tilde{\Delta}^2_i}$. 
With $\delta^t_j \geq 1/N, \forall j \in \mathcal{A}_{-i}$, and $A_{m} = \frac{2 \alpha \log N}{\Delta^2(i^*_{j_m}, i)}$, we have
\begin{align}
&A_1 \Delta(i^*_{j_1}, i) + \sum_{m=1}^{M_{-i}-1}(A_{m+1} -A_m)\Delta(i^*_{j_{m+1}}, i)\\
&=  \sum_{m=1}^{M_{-i}-1} A_m (\Delta(i^*_{j_{m}}, i) -\Delta(i^*_{j_{m+1}}, i)) + A_{M_{-i}}\Delta(i^*_{j_{M_{-i}}}, i)\\
&\leq \int_{\Delta(i^*_{j_{M_{-i}}}, i)}^{\Delta(i^*_{j_1}, i)} \frac{2\alpha\log N}{z^2}dz+\frac{2 \alpha \log N}{\Delta(i^*_{j_{M_{-i}}}, i)}
\leq \frac{4 \alpha \log N}{\Delta(i^*_{j_{M_{-i}}}, i)} = \frac{4 \alpha \log N}{\tilde{\Delta}_i}.
\end{align}

Consider time slot $\tau_i$ for each suboptimal arm $i$ such that 
$$ \frac{2\alpha \log N}{\tilde{\Delta}_i^2} + M \geq \sum_{j^\prime \in \mathcal{A}_i} n_{j^\prime}^{\tau_i}(i) > \frac{2\alpha \log N}{\tilde{\Delta}_i^2} \geq \sum_{j^\prime \in \mathcal{A}_i} n_{j^\prime}^{\tau_i-1}(i).$$
Consider agent $j\in\mathcal{A}_{-i}$ such that, at time $\tau_i$,
\begin{align}
    \frac{2\alpha \log N}{\tilde{\Delta}_i^2}\geq \frac{2\alpha \log 1/\delta^{\tau_i}_j}{\Delta^2(i^*_{j}, i)} &\geq \hat{n}^{\tau_i}_j(i) = n^{\tau_i}_j(i) + \sum_{j^\prime \in \mathcal{A}_i\setminus \{j\}} n^{\tau_i}_{j^\prime}(i) - B_{\tau_i}^{j^\prime \rightarrow j}(i)\\
    &\geq \frac{2\alpha \log N}{\tilde{\Delta}_i^2} - \sum_{j^\prime \in \mathcal{A}_i\setminus \{j\}} B_{\tau_i}^{j^\prime \rightarrow j}(i) \mathds{1}_{E_{\tau_i}^{j^\prime \rightarrow j} = \texttt{false}}- \sum_{j^\prime \in \mathcal{A}_i\setminus \{j\}} f(c_{\tau_i}^{j^\prime \rightarrow j})\mathds{1}_{E_{\tau_i}^{j^\prime \rightarrow j} = \texttt{true}},
\end{align}
where $B_t^{j\rightarrow j^\prime}(i)$ denotes the number of reward samples of arm $i$ stored in agent $j$'s buffer for agent $j^\prime$ (and not yet been sent) at time $t$; $B_t^{j\rightarrow j^\prime}$ denotes the total number of observations stored in agent $j$'s buffer for agent $j^\prime$.
By~\eqref{eq:aae-heterogeneous-type-I}, such agent $j\in\mathcal{A}_{-i}$ makes Type-I decisions to pull arm $i$ after time $\tau_i$. 

In the following, we bound the extra number of times agent $j\in\mathcal{A}_{-i}$ pulls arm $i$ to make up for the delayed transmission from other agents $j^\prime\in\mathcal{A}_{i}$. For an agent $j^\prime\in\mathcal{A}_{i}\setminus \{j\}$ such that $E_{\tau_i}^{j^\prime \rightarrow j} = \texttt{false}$, if $B_{\tau_i}^{j^\prime \rightarrow j}(i) < f(c_{\tau_i}^{j^\prime \rightarrow j})$, agent $j$ has to make at most $f(c_{\tau_i}^{j^\prime \rightarrow j})$ extra pulls of $i$ to make up for agent $j^\prime$'s delay; if $B_{\tau_i}^{j^\prime \rightarrow j}(i) \geq  f(c_{\tau_i}^{j^\prime \rightarrow j})$, agent $j$ can receive those observations from $j^\prime$ once agent $j$ buffers $f(c_{\tau_i}^{j\rightarrow j^\prime})$ observations for $j^\prime$ and sends a message to $j^\prime$. Hence, because of the delayed transmission from agents $j^\prime \in \mathcal{A}_i\setminus \{j\}: E_{\tau_i}^{j^\prime \rightarrow j} = \texttt{false}$, agent $j$ pulls arm $i$ after time $\tau_i$ at most following number of times:
\begin{align}
    \sum_{j^\prime \in \mathcal{A}_i\setminus \{j\}} f(\max\{c_{\tau_i}^{j^\prime \rightarrow j}, c_{\tau_i}^{j \rightarrow j^\prime}\}) \mathds{1}_{E_{\tau_i}^{j^\prime \rightarrow j} = \texttt{false}} \leq \sum_{j^\prime \in \mathcal{A}_i\setminus \{j\}} f(c_{\tau_i}^{j^\prime \rightarrow j}) \mathds{1}_{E_{\tau_i}^{j^\prime \rightarrow j} = \texttt{false}},
\end{align}
where the inequality is because, by the definition of the \ODC, for any pair of agents $j, j^\prime \in \mathcal{A}$ at any time $t$, if $E_t^{j^\prime \rightarrow j} = \texttt{false}$, $1\geq c_t^{j^\prime \rightarrow j} - c_t^{j \rightarrow j^\prime} \geq 0$.
On the other hand, agents $j^\prime\in \mathcal{A}_i\setminus \{j\}$ such that $E_{\tau_i}^{j^\prime \rightarrow j} = \texttt{true}$ delay transmission of $\sum_{j^\prime \in \mathcal{A}_i\setminus \{j\}} B_{\tau_i}^{j^\prime \rightarrow j}(i) \mathds{1}_{E_{\tau_i}^{j^\prime \rightarrow j} = \texttt{true}}$ observations of $i$ to agent $j$ at time $\tau_i$ due to waiting for the \BTs to be satisfied.  To make up for this type of delay, agent $j$ pulls arm $i$ after time $\tau_i$ at most following number of times:
\begin{align}
    \sum_{j^\prime \in \mathcal{A}_i\setminus \{j\}} f(c_{\tau_i}^{j^\prime \rightarrow j})\mathds{1}_{E_{\tau_i}^{j^\prime \rightarrow j} = \texttt{true}}.
\end{align}
Hence, agent $j\in\mathcal{A}_{-i}$ incur at most 
\begin{align}
    \min\Bigg\{\Big(\sum_{j^\prime \in \mathcal{A}_i\setminus\{j\}}f(c_{\tau_i}^{j^\prime \rightarrow j})\Big), \frac{2\alpha \log N}{\Delta^2(i^*_j, i)}\Bigg\} \Delta(i^*_j, i)
\end{align}
extra regret by pulling arm $i$ after time $\tau_i$.

As for Type-II decisions, Lemma~\ref{lemma:decision-type} still holds under heterogeneous arm sets. Thus, the expected regret incurred under Type-II decisions can still be upper bounded by $2KM$. 

Combining the regret upper bounds for Type-II and Type-I decisions, we obtain Eq. (\ref{eq:UCBODC-regret-heterogeneous}).

\textbf{Expected Group Regret of \AAEODC under Heterogeneous Arm Sets.}
With algorithm parameters $\delta^t_j = 1/N^2$ and $\alpha \geq 3$,
    the expected group regret of \emph{\AAEODC} under heterogeneous arm sets satisfies
\begin{equation} 
	\mathbb{E}[R] 
	\leq 3KM + \sum\limits_{i\in \mathcal{K}:\tilde{\Delta}_i>0} \Bigg(\frac{32 \alpha \log N}{\tilde{\Delta}_i} + \sum_{j\in\mathcal{A}^*_{-i}} \min\Bigg\{\Big(\sum_{j^\prime \in \mathcal{A}_i\setminus\{j\}}f(c_{\tau_i}^{j^\prime \rightarrow j})\Big), \frac{16\alpha \log N}{\Delta^2(i^*_j, i)}\Bigg\} \Delta(i^*_j, i)\Bigg).
\end{equation}
The analysis of the expected group regret of \AAEODC under heterogeneous arm sets follows similar steps as the analysis for \UCBODC.

\textbf{Communication Complexity of \UCBODC under Heterogeneous Arm Sets.}
When \BTs are updated according to a positive and monotonically increasing function $f$, the communication complexity \UCBODC under heterogeneous arm sets satisfies:\\
\begin{equation}
    C \leq \sum\nolimits_{j \in \mathcal{A}} \sum\nolimits_{j^\prime \in \mathcal{A}^{(j)}\setminus\{j\}} \min\{C_j, C_{j^\prime}\}+1,
\end{equation}
where $C_j$ is the largest integer in set $\{1,...,N_j\}$ such that
$\Big(\sum_{c=1}^{C_j} f(c)\Big) \leq N_j$.

Under \ODC, any agent $j$ needs the exchange demand $E^{j\rightarrow j^\prime}$ to be set to \texttt{true} to be allowed to send a message to another agent $j^\prime$. Having heterogeneous arm sets would never increase the number of times exchange demands be set to \texttt{true}. Under heterogeneous arm sets, an agent $j$ may make $f(c^{j \rightarrow j^\prime})$ observations but still cannot fulfill the \BT because some of those observations may not be of arm $i \in \mathcal{K}_j\cap\mathcal{K}_{j^\prime}$ and we need $\sum_{i\in \mathcal{K}_j\cap\mathcal{K}_{j^\prime}}b_n^{j\rightarrow j^\prime}(i) \geq f(c^{j \rightarrow j^\prime})$.

\textbf{Communication Complexity of \AAEODC under Heterogeneous Arm Sets.}
When \BTs are updated according to a positive and monotonically increasing function $f$, the communication complexity \AAEODC under heterogeneous arm sets satisfies:\\
\begin{equation}
    C \leq \sum\nolimits_{j \in \mathcal{A}} \sum\nolimits_{j^\prime \in \mathcal{A}^{(j)}\setminus\{j\}} \min\{C_j, C_{j^\prime}\}+1,
\end{equation}
where $C_j$ is the largest integer in set $\{1,...,N_j\}$ such that
\begin{equation}
\Big(\sum_{c=1}^{C_j} f(c)\Big) \leq \min \Big\{ 2K+\sum_{i\in\mathcal{K}_j}\frac{16\alpha\log N}{\max\{\Delta^2(i^*_j, i), \min_{i\in\mathcal{K}_j\setminus\{i^*_j\}}\Delta^2(i^*_j, i)\}}, N_j\Big\}.
\end{equation}

%!TEX root = ODC-CameraReady.tex

\clearpage
\section{SUPPLEMENTARY EXPERIMENTAL RESULTS}\label{sec:extra-simulation}

In this section, we present supplementary numerical experimental results to provide more insights about \ODC protocol.

\begin{figure}
\vskip -0.2 in
    \centering
    \subfloat[UCB Group Regret]{\includegraphics[width=0.245\textwidth]{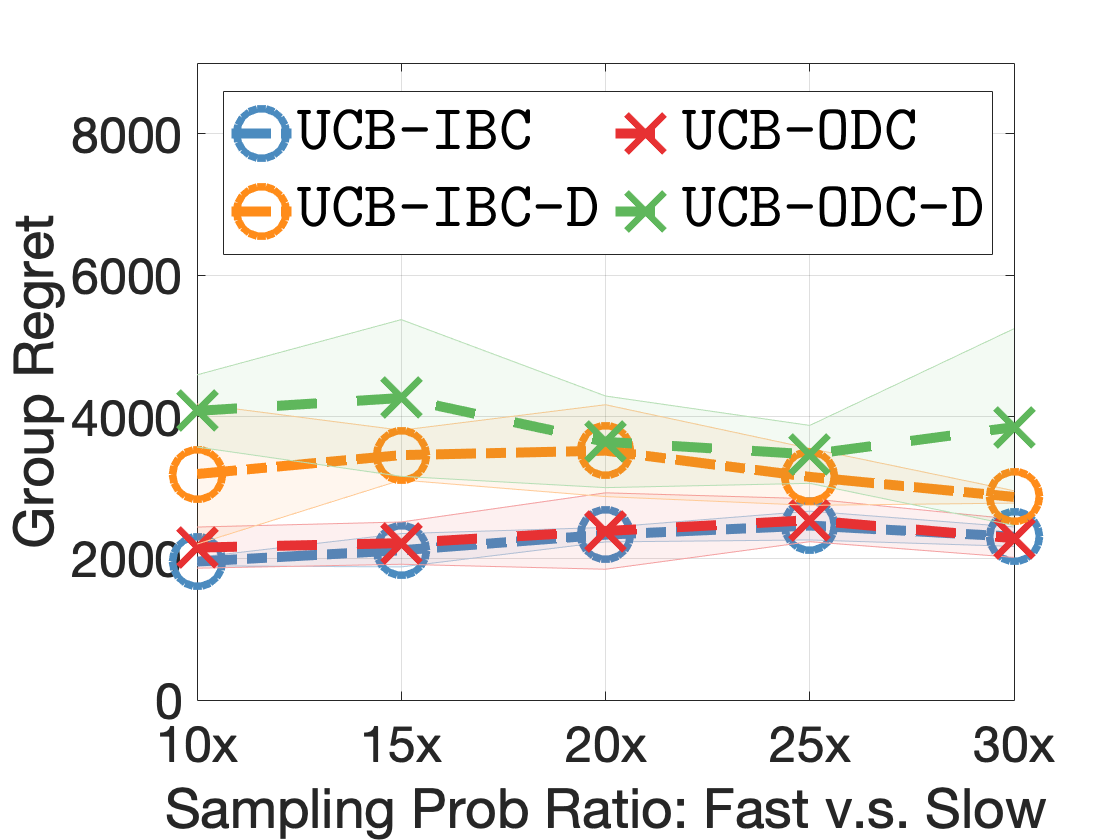}}
    \subfloat[UCB Comm Between]{\includegraphics[width=0.245\textwidth]{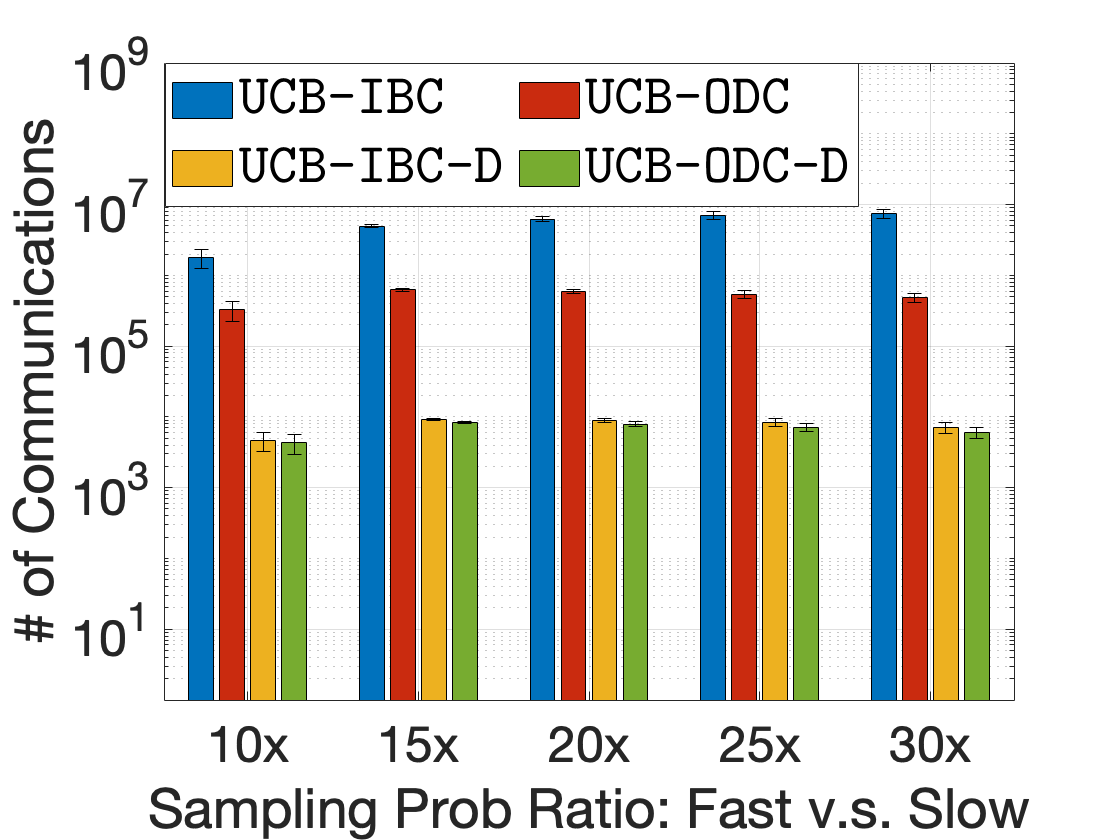}}
    \subfloat[UCB Comm Slow]{\includegraphics[width=0.245\textwidth]{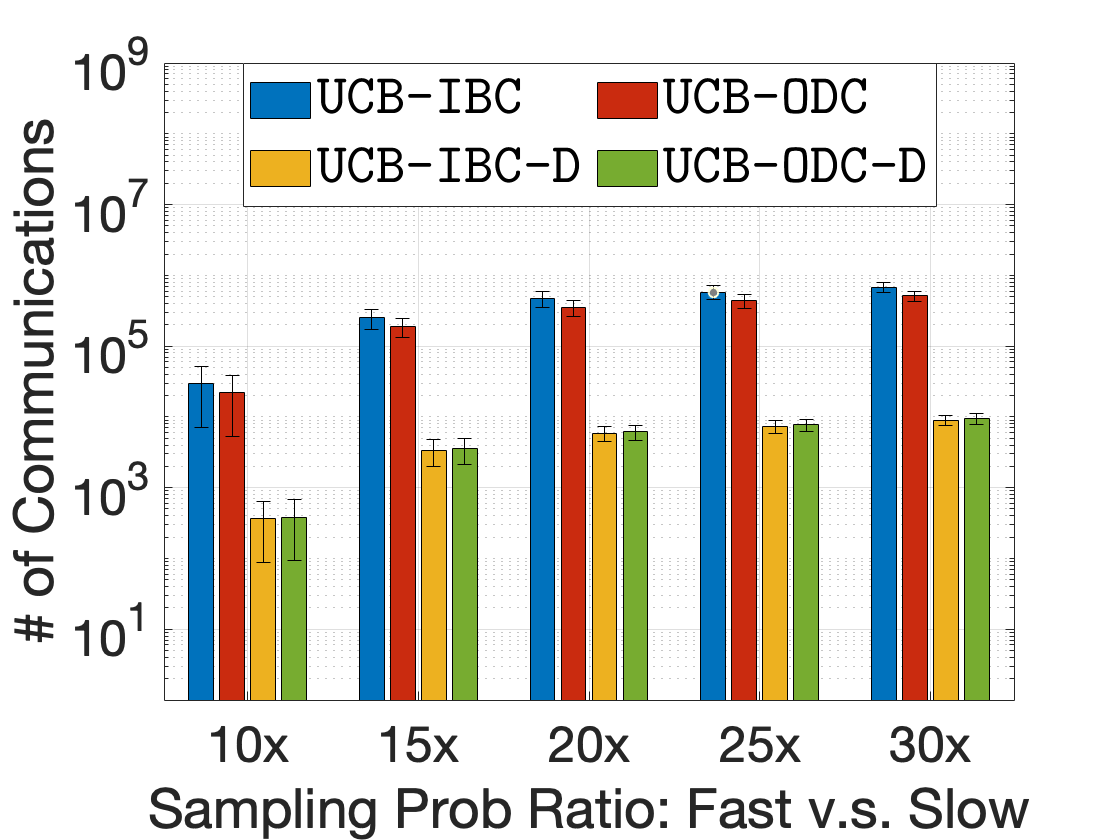}}
    \subfloat[UCB Comm Fast]{\includegraphics[width=0.245\textwidth]{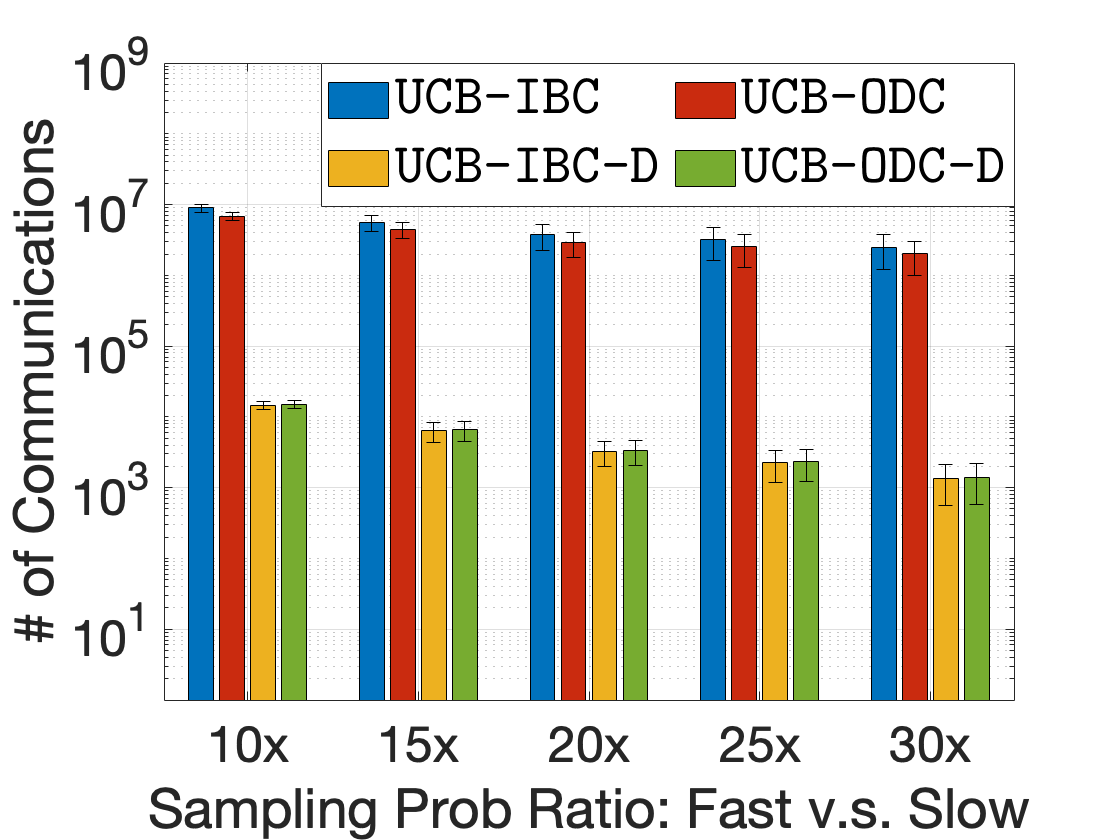}}\\
    \vskip -0.1 in
    \subfloat[AAE Group Regret]{\includegraphics[width=0.245\textwidth]{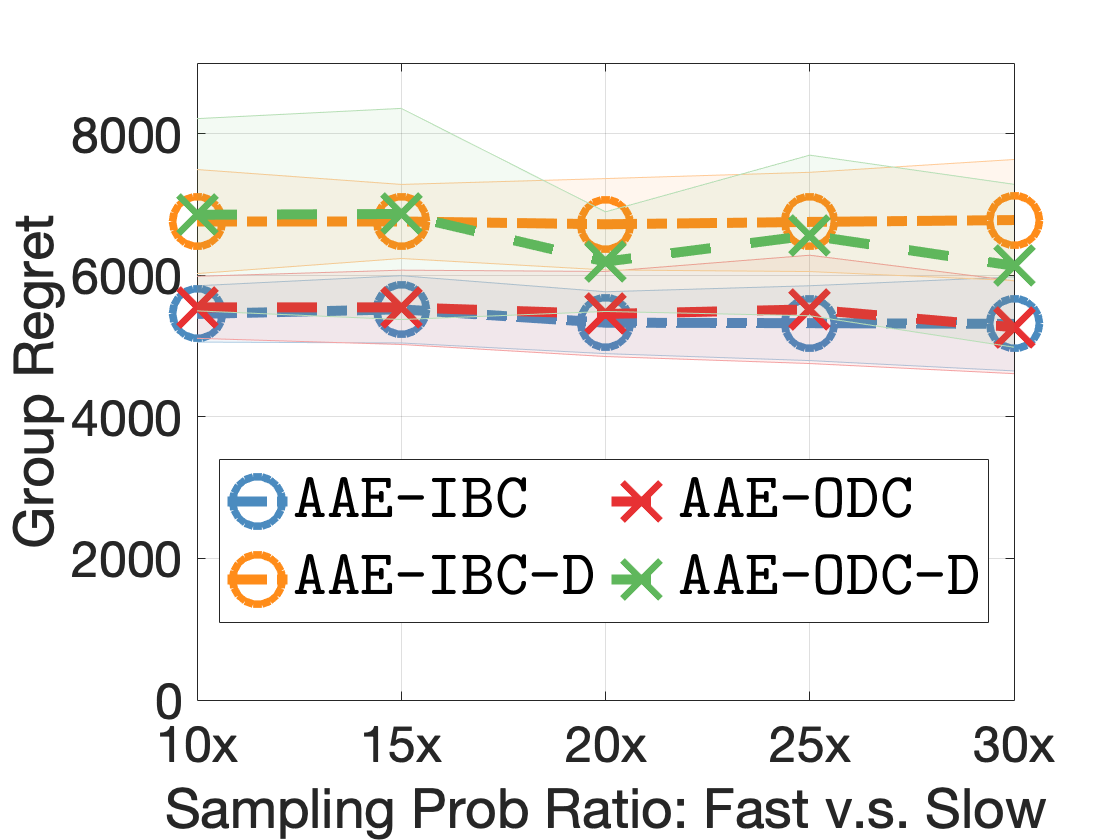}}
    \subfloat[AAE Comm Between]{\includegraphics[width=0.245\textwidth]{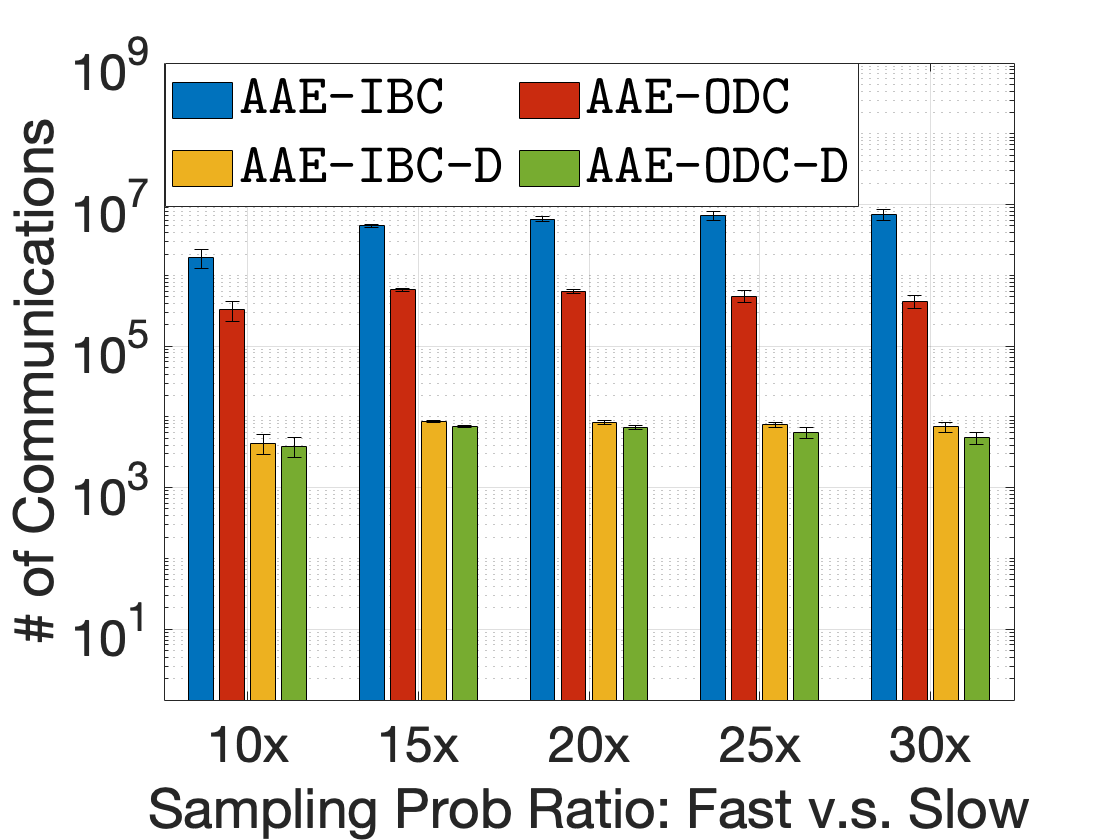}}
    \subfloat[AAE Comm Slow]{\includegraphics[width=0.245\textwidth]{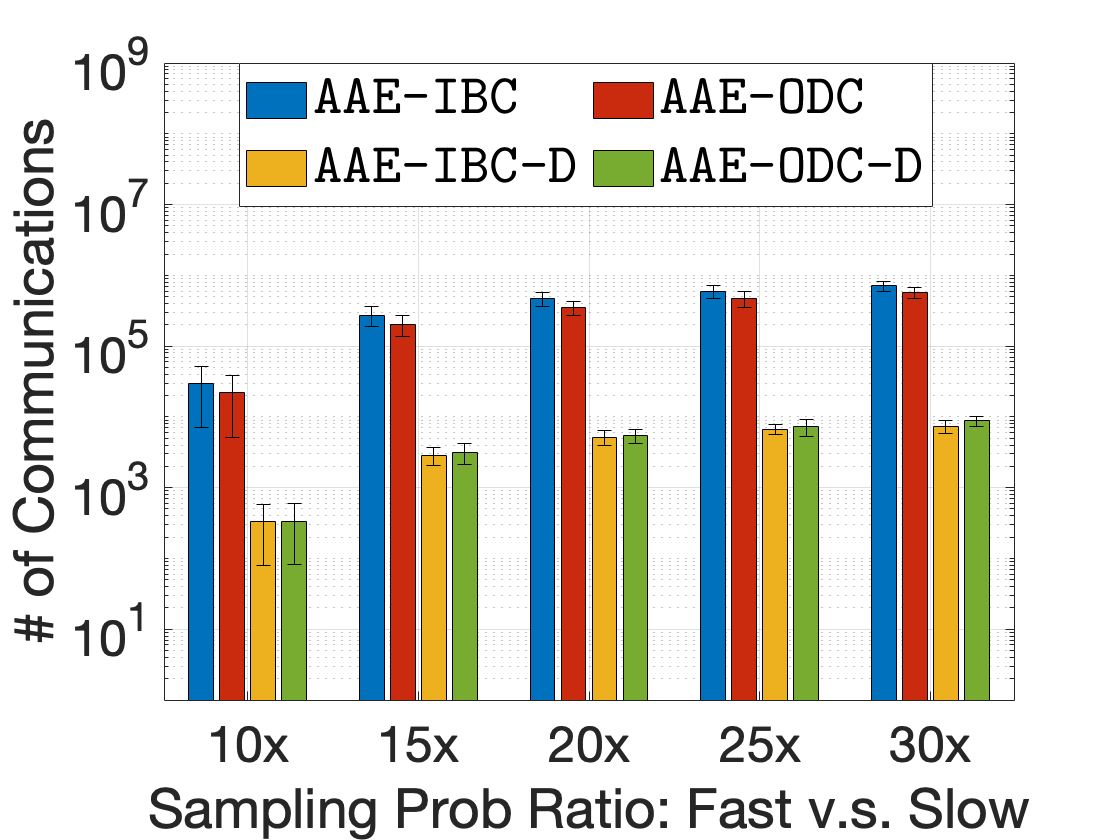}}
    \subfloat[AAE Comm Fast]{\includegraphics[width=0.245\textwidth]{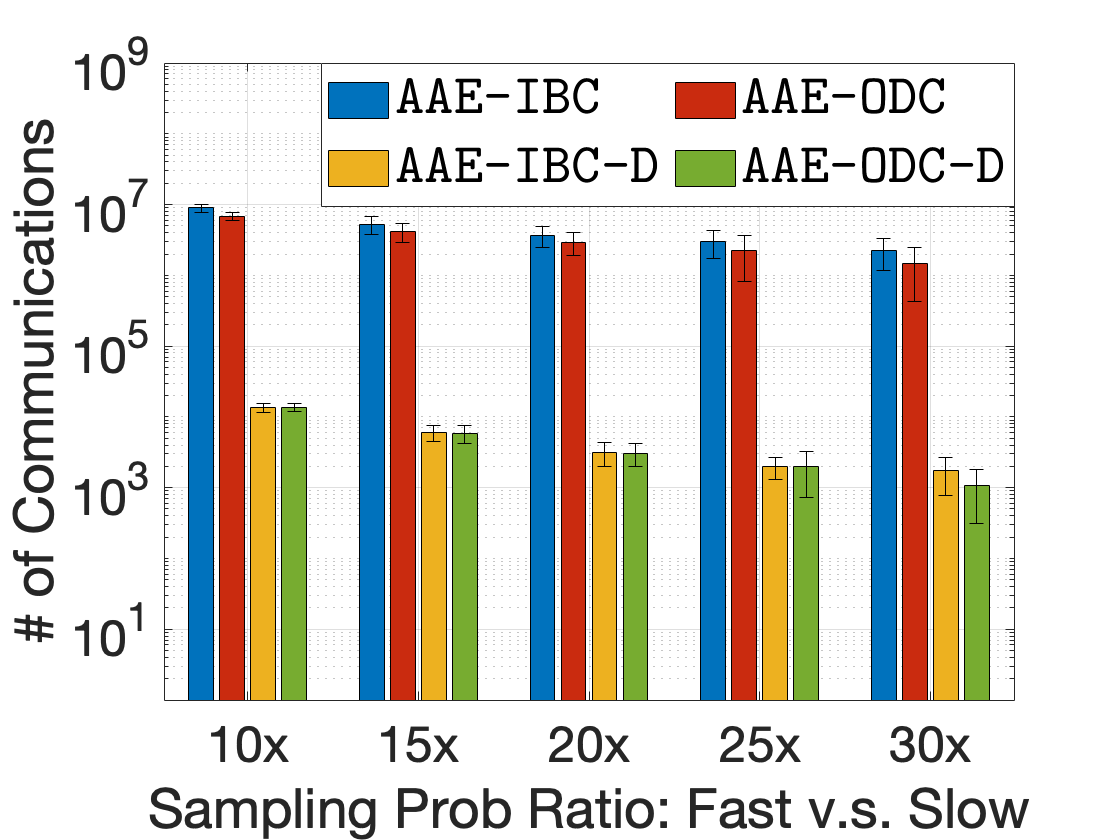}}
    \vskip -0.05 in
    \caption{Experiment 1  --- impact of the heterogeneity of agent speeds. Comparison between \texttt{IBC} and \ODC with \BTs set to one as well as \texttt{IBC} and \ODC with \BTs set to be doubling. For communication complexities, we present the numbers of communications between fast and slow agents, among slow agents, and among fast agents separately in different subfigures. Note that, in Subfigures (b)(c)(d) and (f)(g)(h), the Y axis is in Log scale.}
    \label{fig:exp1-supplementary}
    \vskip -0.1 in
\end{figure}

\begin{wrapfigure}{r}{0.26\textwidth}
\vskip -0.55 in
    \centering
    \subfloat[UCB Regret]{\includegraphics[width=0.245\textwidth]{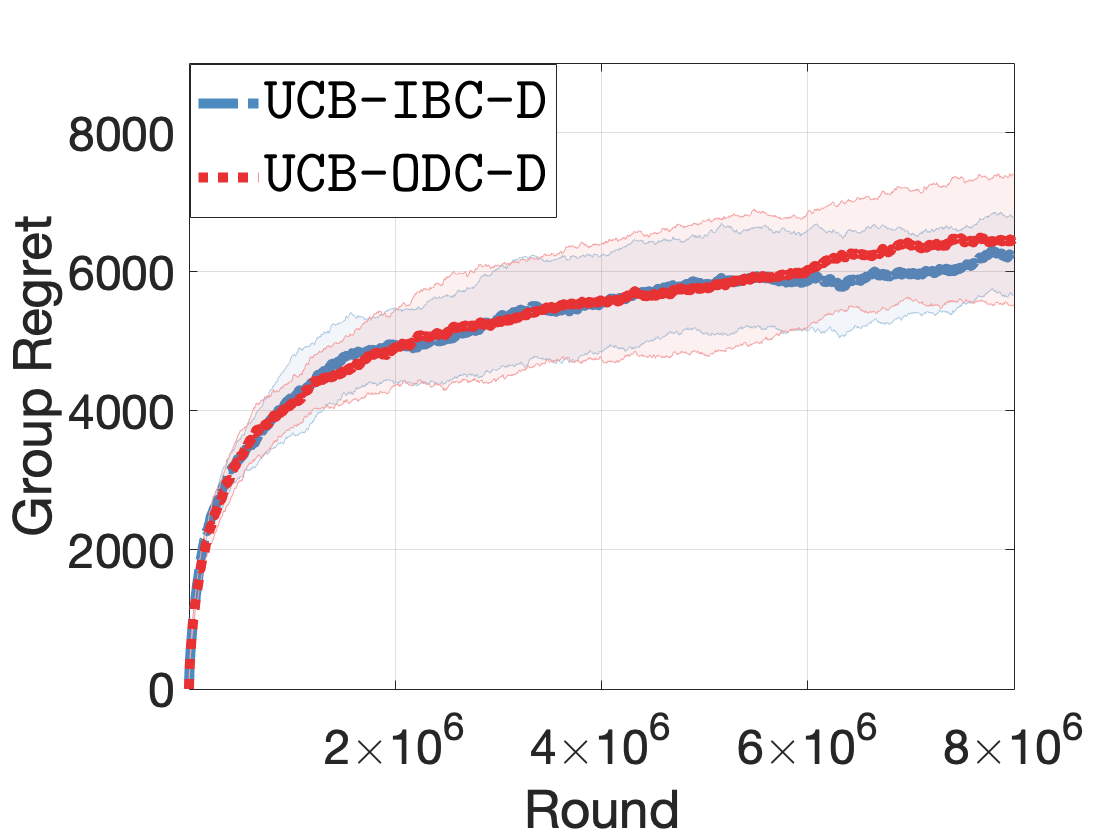}}\\
    \vskip -0.005 in
    \subfloat[UCB Comm]{\includegraphics[width=0.245\textwidth]{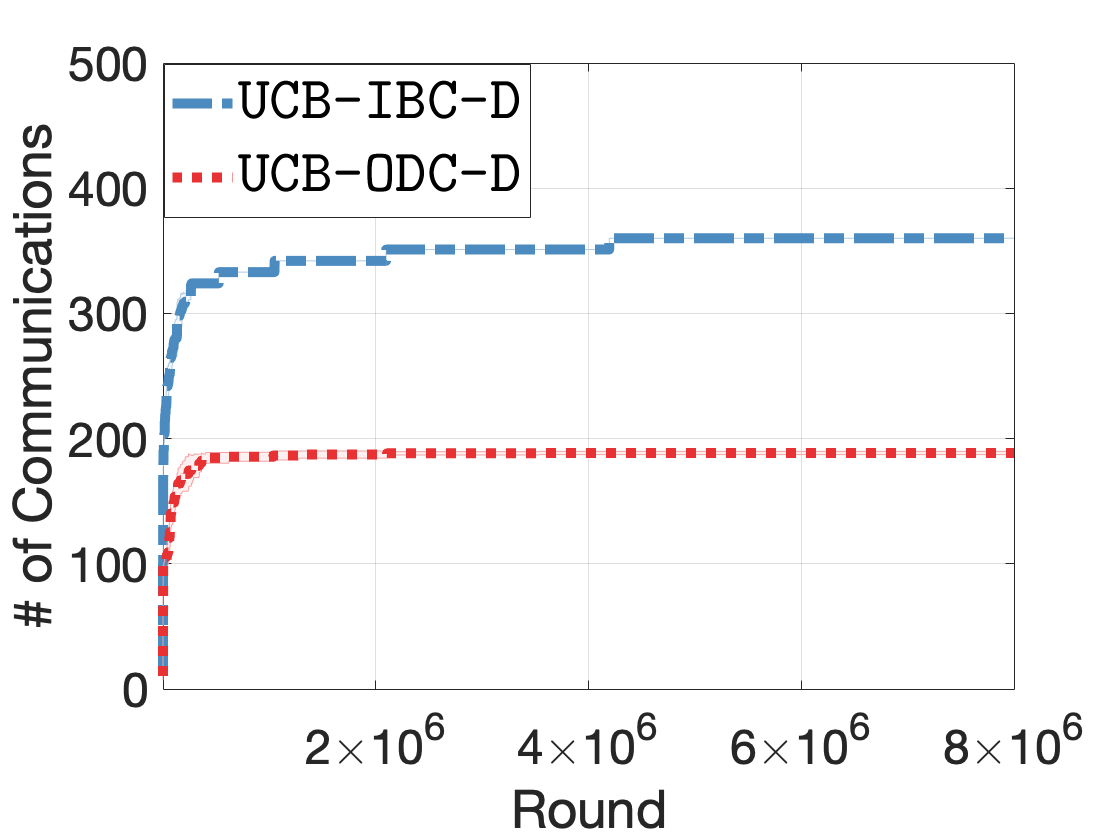}}
    \vskip -0.01 in
    %\subfloat[AAE Regret]{\includegraphics[width=0.245\textwidth]{figure/exp3-aae-reg.png}}
    %\subfloat[AAE Comm]{\includegraphics[width=0.245\textwidth]{figure/exp3-aae-comm.png}}
    \caption{Experiment 4 --- A system with agents that have exponentially large differences in their sampling probabilities}
    \label{fig:exp-large-differences}
    \vskip -0.4 in
\end{wrapfigure}

\subsection{Performance of \ODC with constant or doubling \BTs}\label{sec:extra-simu-thres}

In the Experiment 1 and Experiment 2 results presented in Section~\ref{sec:numerical}, we observe that, when agent pull speeds are highly diversified and when there exist many slow agents, the on-demand rule of \ODC saves communication overheads in contrast to \texttt{IBC} while achieving similar group regrets as \texttt{IBC} when both of them have constant \BTs. In Figure~\ref{fig:exp1-supplementary} and Figure~\ref{fig:exp2-supplementary}, we compare the performance of both \texttt{IBC} and \ODC with both constant (size one) \BTs (denoted as \texttt{AAE-IBC}, \AAEODC, \texttt{UCB-IBC}, \UCBODC) and doubling \BTs (denoted as \texttt{AAE-IBC-D}, \texttt{AAE-ODC-D}, \texttt{UCB-IBC-D}, \texttt{UCB-ODC-D}) under Experiment 1 and Experiment 2 setups respectively.

From Figures~\ref{fig:exp1-supplementary}(a),~\ref{fig:exp1-supplementary}(e) and Figures~\ref{fig:exp2-supplementary}(a),~\ref{fig:exp2-supplementary}(c), we observe that, with doubling \BTs, both policies under \texttt{IBC} and under \ODC have higher group regrets than those with constant \BTs. From the communication complexities results in Figure~\ref{fig:exp1-supplementary} and Figure~\ref{fig:exp2-supplementary}, we observe that, with doubling \BTs, both policies under \texttt{IBC} and under \ODC incur logarithmic communication overheads than those with constant \BTs. With doubling \BTs, policies under \ODC incur slightly smaller communication overheads than policies under \texttt{IBC} but the improvements are not as significant as when their \BTs are all set to be constant. This is because the ratio of sampling probabilities between fast and slow agents are at most $30$ times in both experimental setups; hence, when doubling \BTs is applied, the effect of the on-demand rule of \ODC is diminished. 

The advantage of the on-demand rule of \ODC is obvious, even with doubling \BTs applied, when the differences of pull rates are exponentially large, as shown in Experiment 4 (Figure~\ref{fig:exp-large-differences}). Figure~\ref{fig:exp-large-differences} shows the results of simulations of a system with $10$ agents, where there is a fast agent with sampling probability set to be always $1$ and nine slow agents with sampling probabilities initially set to be $0.1$ and halved after each message transmission. We report the cumulative group regret and number of communication over $T=8,000,000$ rounds. Figure~\ref{fig:exp-large-differences}(b) shows that \texttt{UCB-ODC-D} effectively saves communication overheads and Figure~\ref{fig:exp-large-differences}(a) shows that \texttt{UCB-ODC-D} still achieves similar group regrets as \texttt{UCB-IBC-D}.

%\clearpage
\begin{figure}
    \centering
    \subfloat[UCB Group Regret]{\includegraphics[width=0.245\textwidth]{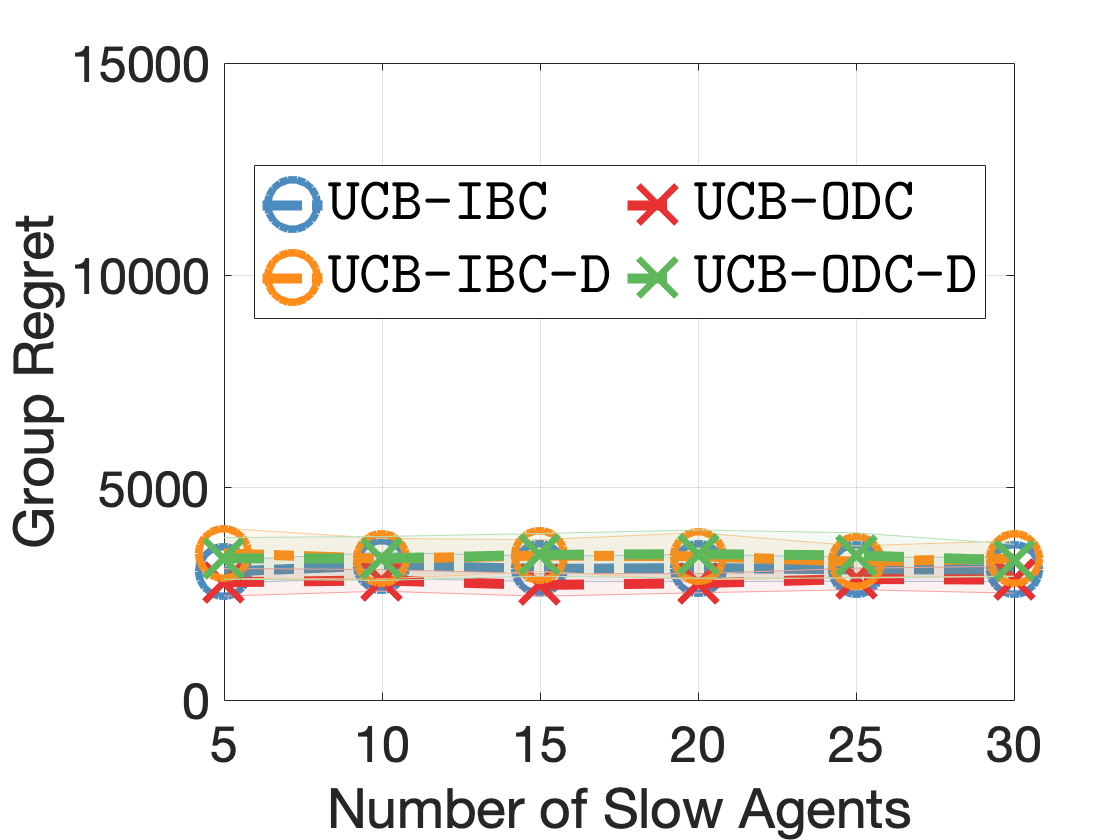}}
    \subfloat[UCB Comm]{\includegraphics[width=0.245\textwidth]{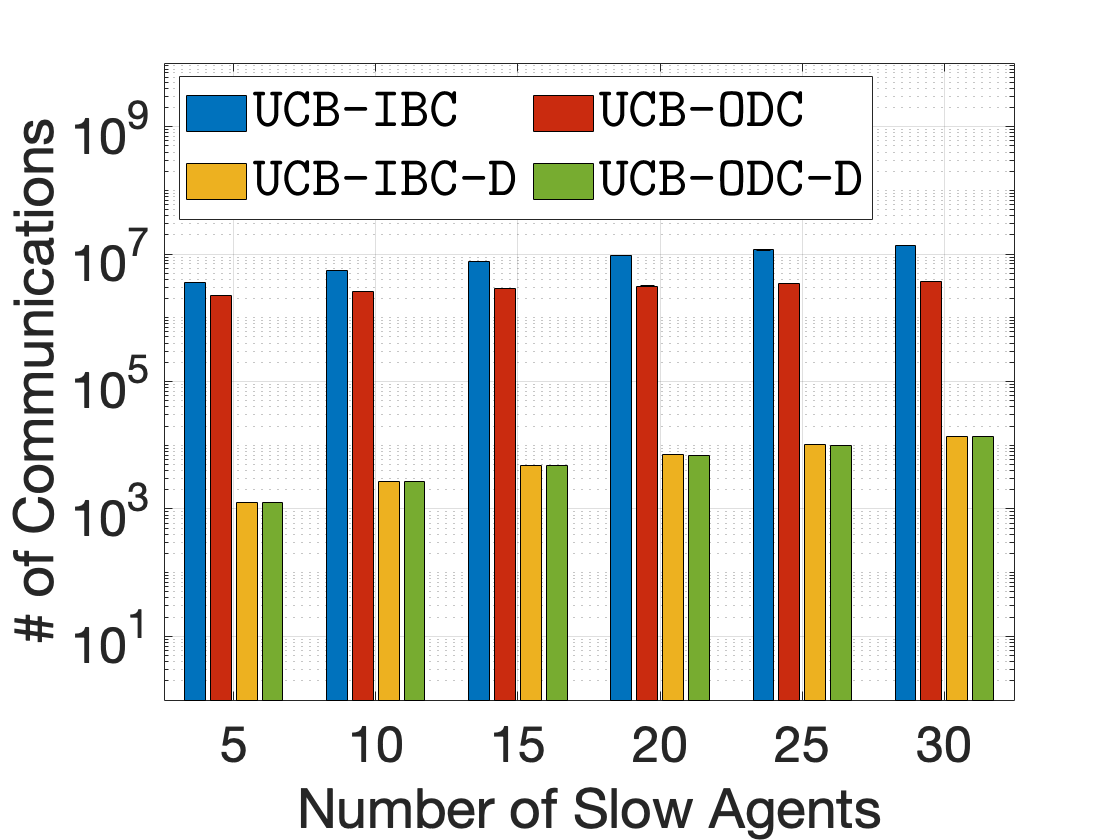}}
    \subfloat[AAE Group Regret]{\includegraphics[width=0.245\textwidth]{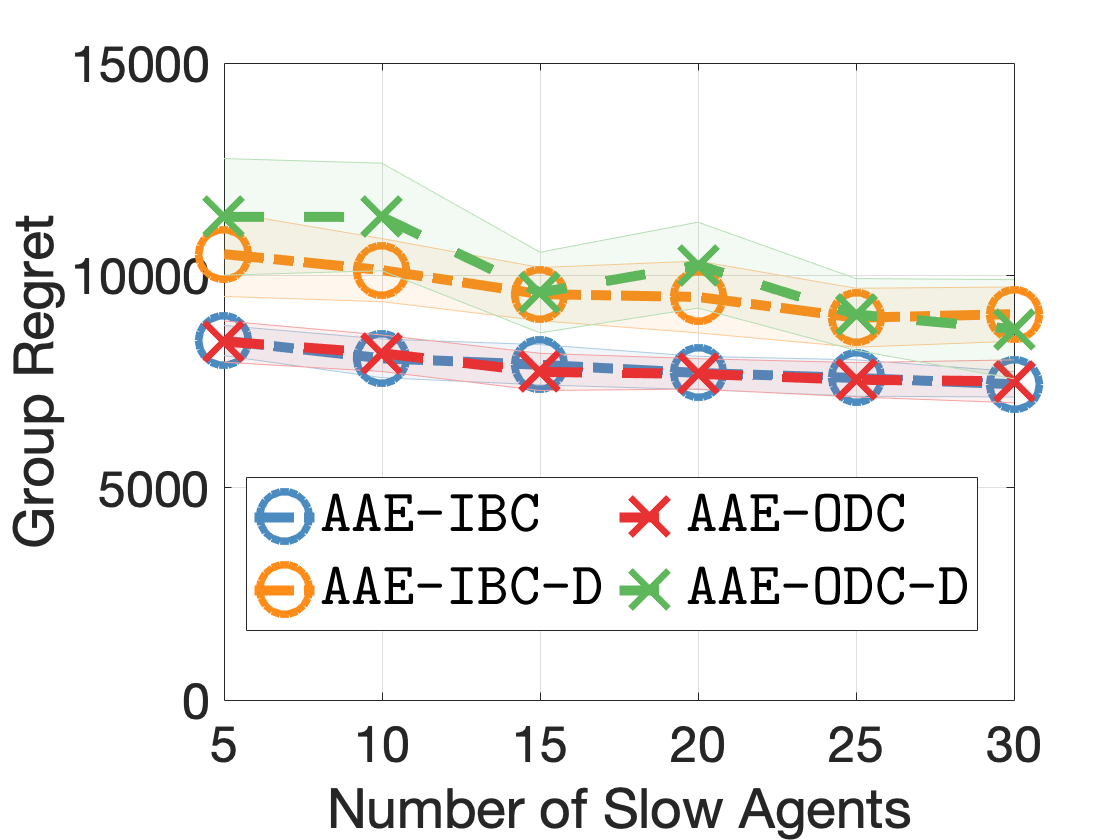}}
    \subfloat[AAE Comm]{\includegraphics[width=0.245\textwidth]{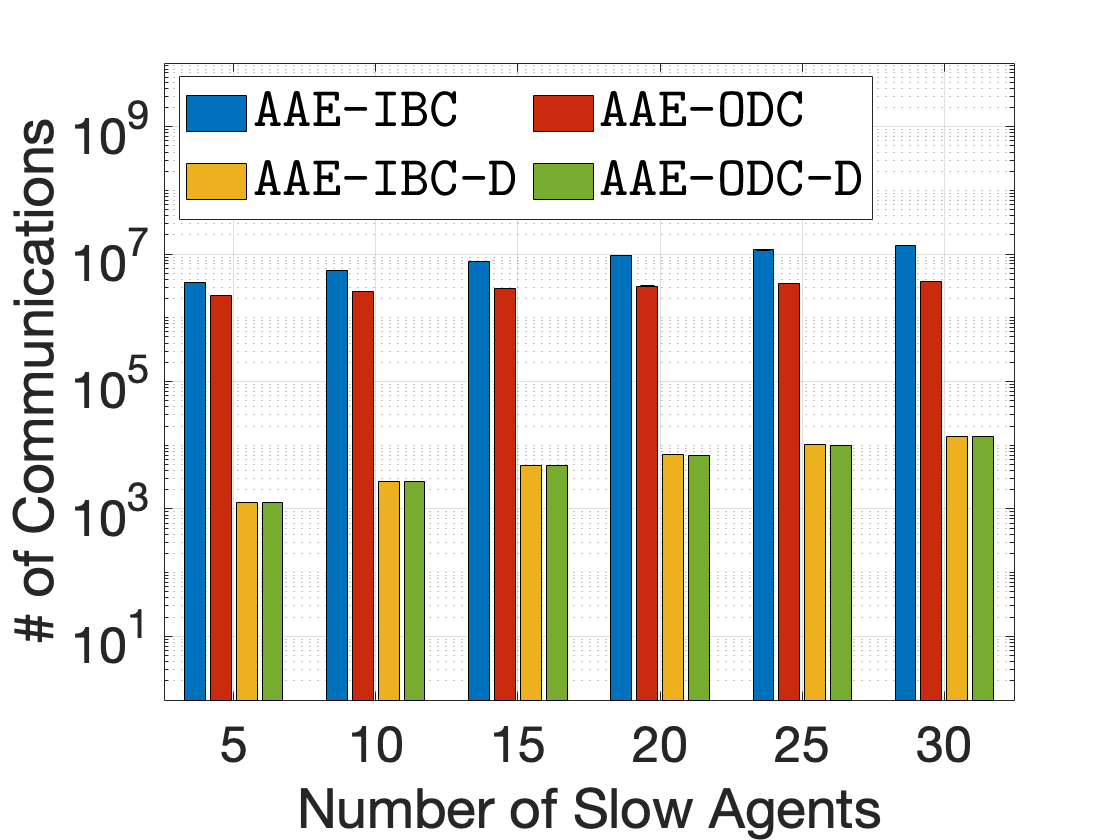}}
    \caption{Experiment 2  --- impact of the number of slow agents in the system. Comparison between \texttt{IBC} and \ODC with \BTs set to one as well as \texttt{IBC} and \ODC with \BTs set to be doubling. Note that, in Subfigures (b) and (d), the Y axis is in Log scale.}
    \label{fig:exp2-supplementary}
\end{figure}

\begin{figure}
    \begin{minipage}{0.49\textwidth}
        \centering
        \hfill
     	\subfloat[Individual Regret Mean]{\includegraphics[width=0.49\textwidth]{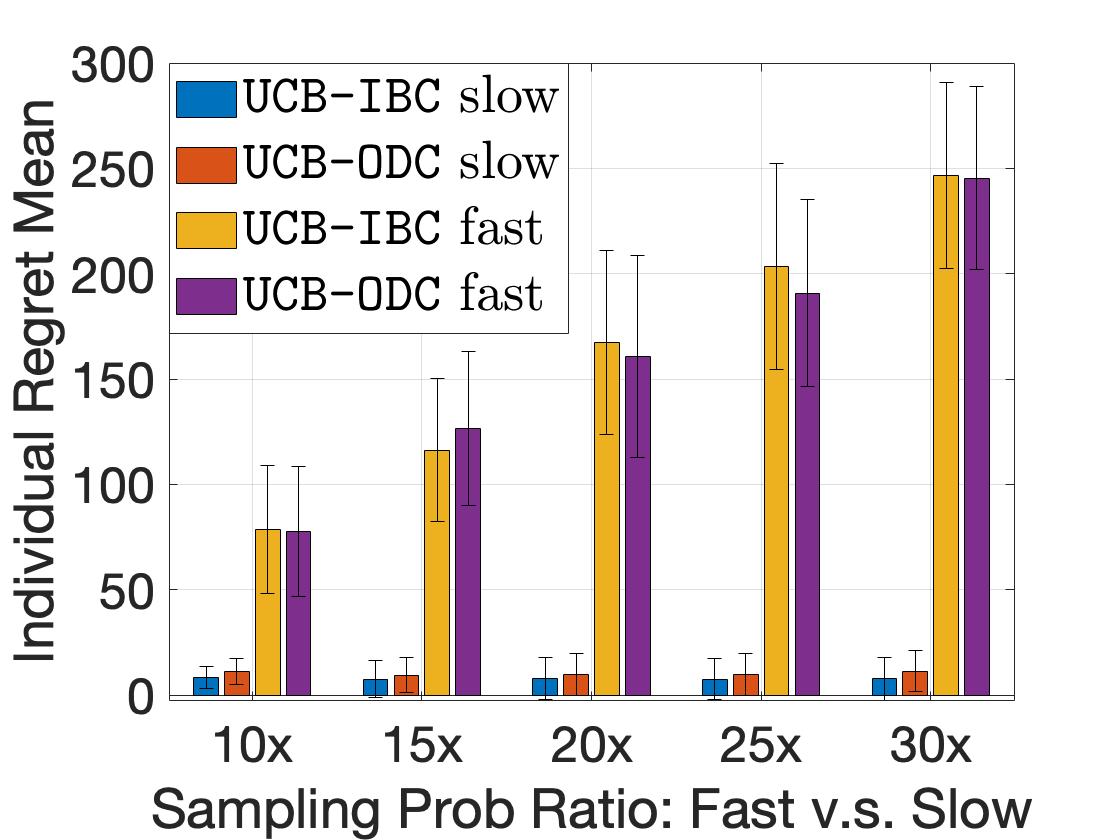}}
     	\hfill
    	\subfloat[Individual Regret Mean]{\includegraphics[width=0.49\textwidth]{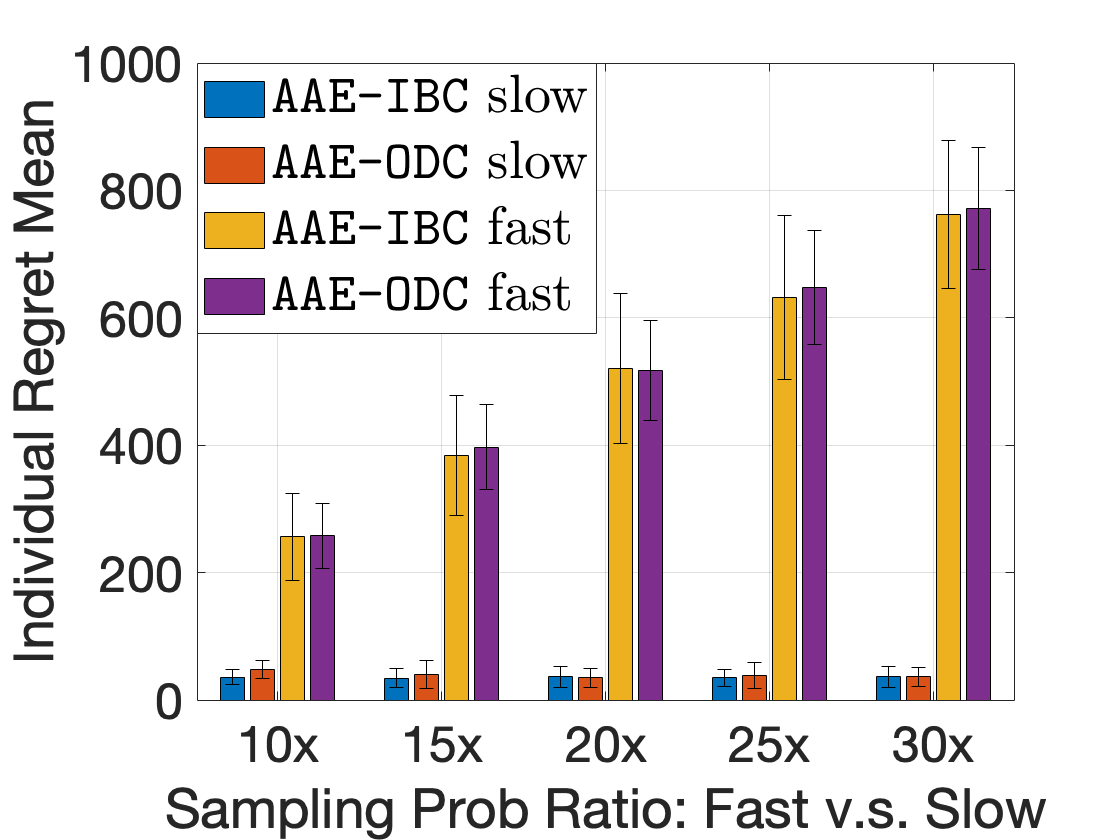}}
    	\hfill
    	\caption{Experiment 1 --- impact of the heterogeneity of agent speeds. %Forty agents with fixed mean sampling probability and increasing sampling probability ratio between fast and slow agents.
    	} 
    	\label{fig:experiment1-indi-regret}
    \end{minipage}
    \hfill
    \begin{minipage}{0.49\textwidth}
        \centering
        \hfill
     	\subfloat[Individual Regret]{\includegraphics[width=0.49\textwidth]{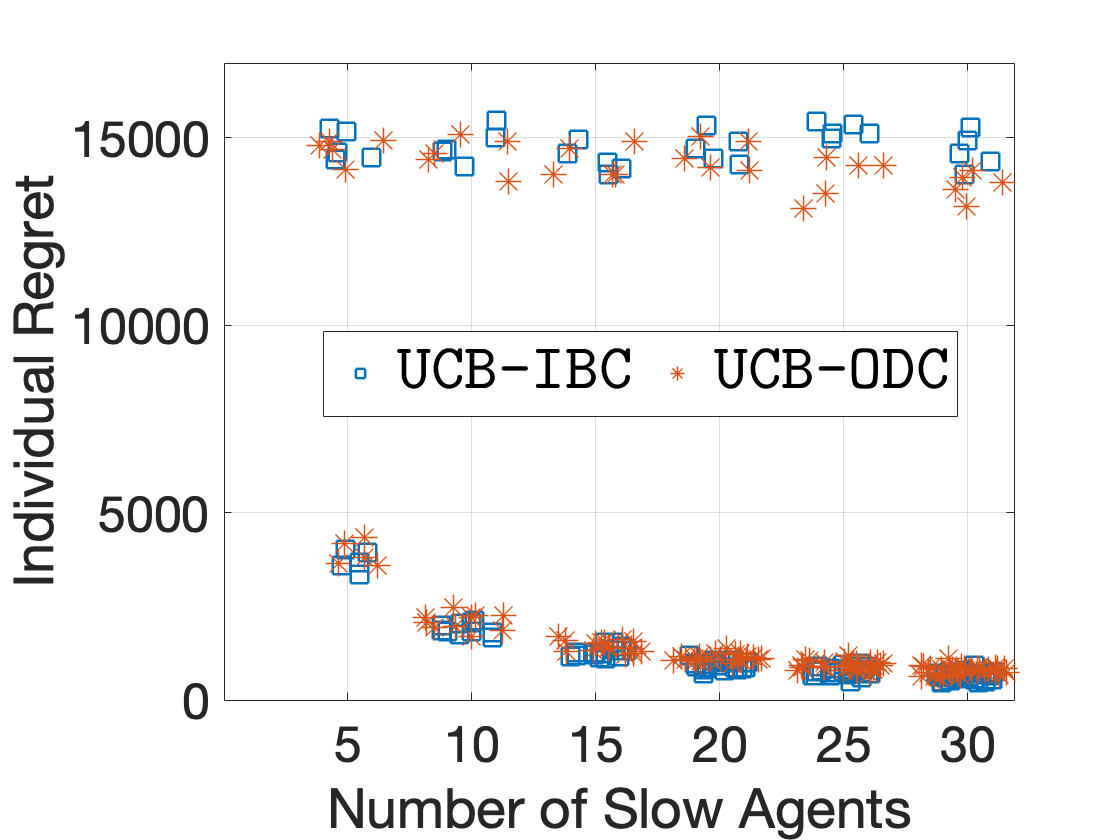}}
     	\hfill
    	\subfloat[Individual Regret]{\includegraphics[width=0.49\textwidth]{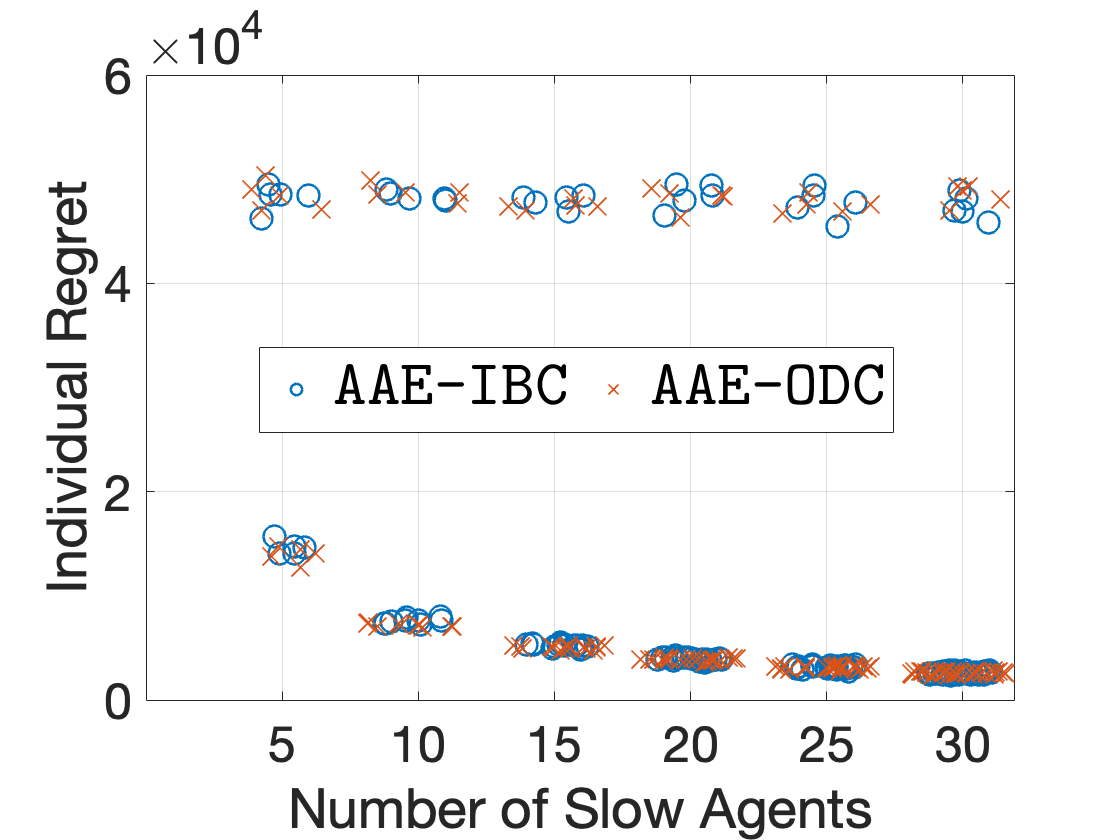}}
     	\hfill
    	\caption{Experiment 2 --- impact of the number of slow agents in the system. 
    	%Increasing the number of slow agents while fixing the expected total number of decisions in the entire system.
    	} 
    	\label{fig:experiment2-indi-regret}
    \end{minipage}
\end{figure}

\subsection{Individual Regrets in Experiment 1 and Experiment 2}\label{sec:extra-simu-ind}

In asynchronous MAMAB setting, individual agent's expected regret varies as the pulling times and the total number of decision rounds of the agent, $N_j$, vary. 
%A \textit{fast} agent who makes more number of decisions, $n_j$, in general has larger individual regret than a \textit{slow} agent with smaller $n_j$.

For Experiment 1 and Experiment 2 in Section~\ref{sec:numerical}, we add Figure~\ref{fig:experiment1-indi-regret} and Figure~\ref{fig:experiment2-indi-regret} respectively here to provide experimental observations about individual regrets.
Specifically, Figure~\ref{fig:experiment1-indi-regret} (for Experiment 1) contains two bar charts to present the mean and variance of individual regrets in between \textit{fast} agents and in between \textit{slow} agents after $T=80,000$ time slots. The height of a bar shows the individual regret mean of agents with same sampling probability and the error bar on each bar denotes mean plus/minus one standard deviation of the individual regrets of agents with same sampling probability. 
Figure~\ref{fig:experiment2-indi-regret} (for Experiment 2) contains two scatter charts on which each dot represents the individual regret of an agent.

Following are the experimental observations about individual regret.

In Experiment 1, we fix the sampling probability of each slow agent and vary the sampling probability of fast agents. In Figure~\ref{fig:experiment1-indi-regret}, the individual regret mean among slow agents stays almost the same when the difference in sampling probabilities of fast and slow agents increases; the variance of individual regrets among slow agents also stays almost the same. The individual regret mean of fast agents increases as the sampling probability of fast agent increases while the variance of individual regrets among fast agents stays almost the same.

In Experiment 2, we fix the number of fast agents as well as the sampling probability of fast agents and increase the number of slow agents. In Figure~\ref{fig:experiment2-indi-regret}, dots are clustered into two groups; the five dots with larger individual regrets are of the fast agents, and the other dots with smaller individual regrets are of the slow agents.

Figure~\ref{fig:experiment1-indi-regret} and Figure~\ref{fig:experiment2-indi-regret} show that \ODC achieve similar regret performance as \texttt{IBC} not only in terms of group regret but also in terms of individual regrets.

\subsection{Performance of \ODC under different types of asynchronicity}\label{sec:extra-simu-async}

In this subsection, we study the performance of \ODC under three more variants of asynchronicity, which are different from the stochastic asynchronicity considered in Experiments 1, 2, and 3.

\textbf{Experiment 5.} In this experiment, we study the impact of agents going offline and online, which models wireless sensing devices with sleeping/active modes for power saving. Specifically, there are five slow agents each with sampling probability $0.2$ and five fast agents each with sampling probability $0.8$. Fast agents, while having high pull rates when they are online, may go offline for a long time. Specifically, fast agents stay online or offline both according to a geometric distribution with parameter $0.01$ in this experiment. We report the number of communications and group regret after $T=80,000$ time slots averaged over $30$ independent trials in Table~\ref{tab:exp5}. 

\begin{table}[H]
    \vskip -0.06 in
    \centering
    {\small
    \caption{Experiment 5}
    \begin{center}
    \vskip -0.135 in
    \begin{tabular}{||c || c | c||} 
     \hline
       & Communication & Group Regret \\ [0.1ex]
     \hline
     \texttt{UCB-IBC} & $(2.1604 \pm 0.0247)\times10^6$ & $2442\pm267$ \\ 
     \hline
     \UCBODC & $(1.0119\pm 0.0135)\times 10^6$ & $2225\pm232$ \\
     \hline
     \texttt{AAE-IBC} & $(2.1605\pm0.0209)\times 10^6$ & $6788\pm412$ \\ 
     \hline
     \AAEODC & $(1.0105\pm0.0165)\times 10^6$ & $6957\pm446$ \\
     \hline
    \end{tabular}
    \end{center}
    \label{tab:exp5}
    }
    \vskip -0.2 in
\end{table}

\textbf{Experiment 6.} In this experiment, we study the impact of less learning horizons overlapping among agents. We have five slow agents each with sampling probability $0.1$ and five fast agents each with sampling probability $0.7$. In Experiment 6(a), we let the five slow agents go online from the very beginning and let the five fast agents go online at time slot $t = 40,000$. We do the other way around in Experiment 6(b) -- we let the five fast agents go online from the very beginning and let the five slow agents go online at time slot $t = 40,000$. We report the number of communications and group regrets after $T=80,000$ time slots averaged over $30$ independent trials in Tables~\ref{tab:exp6-a} and~\ref{tab:exp6-b} respectively.

\begin{table}[H]
    \vskip -0.06 in
    \centering
    {\small
    \begin{center}
    \begin{minipage}{.5\linewidth}
    \caption{Experiment 6(a)}
    \label{tab:exp6-a}
    \begin{tabular}{||c || c | c||} 
     \hline
       & Communication & Group Regret \\ [0.1ex]
     \hline
     \texttt{UCB-IBC} & $(1.6196 \pm 0.0025)\times 10^6$ & $1931 \pm 257$ \\ 
     \hline
     \UCBODC & $(0.8332 \pm 0.0021)\times 10^6$ & $2052 \pm 228$ \\
     \hline
     \texttt{AAE-IBC} & $(1.6199 \pm 0.0024)\times 10^6$ & $3906 \pm 813$ \\ 
     \hline
     \AAEODC & $(0.8325 \pm 0.0023)\times 10^6$ & $5021 \pm 641$ \\
     \hline
    \end{tabular}
    \end{minipage}%
    \begin{minipage}{.5\linewidth}
    \caption{Experiment 6(b)}
    \label{tab:exp6-b}
    \begin{tabular}{||c || c | c||} 
     \hline
       & Communication & Group Regret \\ [0.1ex]
     \hline
     \texttt{UCB-IBC} & $(2.7000 \pm 0.0025)\times 10^6$ & $2803 \pm 283$ \\ 
     \hline
     \UCBODC & $(1.2804 \pm 0.0022) \times 10^6$ &  $2568 \pm 285$\\
     \hline
     \texttt{AAE-IBC} & $(2.5073 \pm 0.4978)\times 10^6$ &  $6264 \pm 568$\\ 
     \hline
     \AAEODC & $(1.2797 \pm 0.0023)\times 10^6$ & $6741 \pm 447$ \\
     \hline
    \end{tabular}
    \end{minipage}%
    \end{center}
    }
    \vskip -0.2 in
\end{table}

\textbf{Experiment 7.} In this experiment, we study the impact of non-stationary asynchronicity. Specifically, we have ten agents and the sampling probability of agent $j$ follows a sine function, $\sin(\theta_j + t/30)$, where the phase shifts  $\theta_j = j/5, j\in \{1,...,10\}$ are different for different agents. We report the number of communications and group regrets after $T=80,000$ time slots averaged over $30$ independent trials in Table~\ref{tab:exp7}.

\begin{table}[H]
    \vskip -0.06 in
    \centering
    {\small
    \caption{Experiment 7}
    \begin{center}
    \vskip -0.135 in
    \begin{tabular}{||c || c | c||} 
     \hline
       & Communication & Group Regret \\ [0.1ex]
     \hline
     \texttt{UCB-IBC} & $(2.2936 \pm 0.0020)\times 10^6$ & $2411\pm 296$  \\ 
     \hline
     \UCBODC & $(1.5762 \pm 0.0021)\times 10^6$ & $2335\pm 224$ \\
     \hline
     \texttt{AAE-IBC} & $(2.2934 \pm 0.0023)\times 10^6$ & $7327 \pm 341$ \\ 
     \hline
     \AAEODC & $(1.5764 \pm 0.0026)\times 10^6$ & $7526 \pm 422$ \\
     \hline
    \end{tabular}
    \end{center}
    \label{tab:exp7}
    }
    \vskip -0.2 in
\end{table}

Results of Experiment 5, 6, and 7 in Table~\ref{tab:exp5},~\ref{tab:exp6-a},~\ref{tab:exp6-b}, and~\ref{tab:exp7} support our theoretical and experimental observations that \ODC incurs less communication than \texttt{IBC} while achieving similar group regret, and further show that \ODC is affective under various kinds of asynchronicity.

\subsection{When would \AAEODC incur fewer communications than \UCBODC?}\label{sec:extra-simu-algo}

Note that our theoretical analysis suggests that \AAEODC outperforms \UCBODC in terms of communication complexity. However, this has not been clearly shown in previous experiments because, in previous experiments, the time horizon $T$ is comparatively small for the arm reward suboptimality gap considered. In Figure~\ref{fig:easy}, we present the number of communications and group regret (averaged over 30 independent trials) incurred by \AAEODC and \UCBODC in the setting with $K=16$ arms, $5$ fast agents each with sampling probability $0.8$, $10$ slow agents each with sampling probability $0.1$, and $T = 80\,000$. Note that this setting is same as one of the cases in Experiment 2, except that here we experiment with an easier arm reward instance (with larger suboptimality gap). 

\begin{figure}[H]
	\centering
	\subfloat[Communication]{\includegraphics[width=0.3\textwidth]{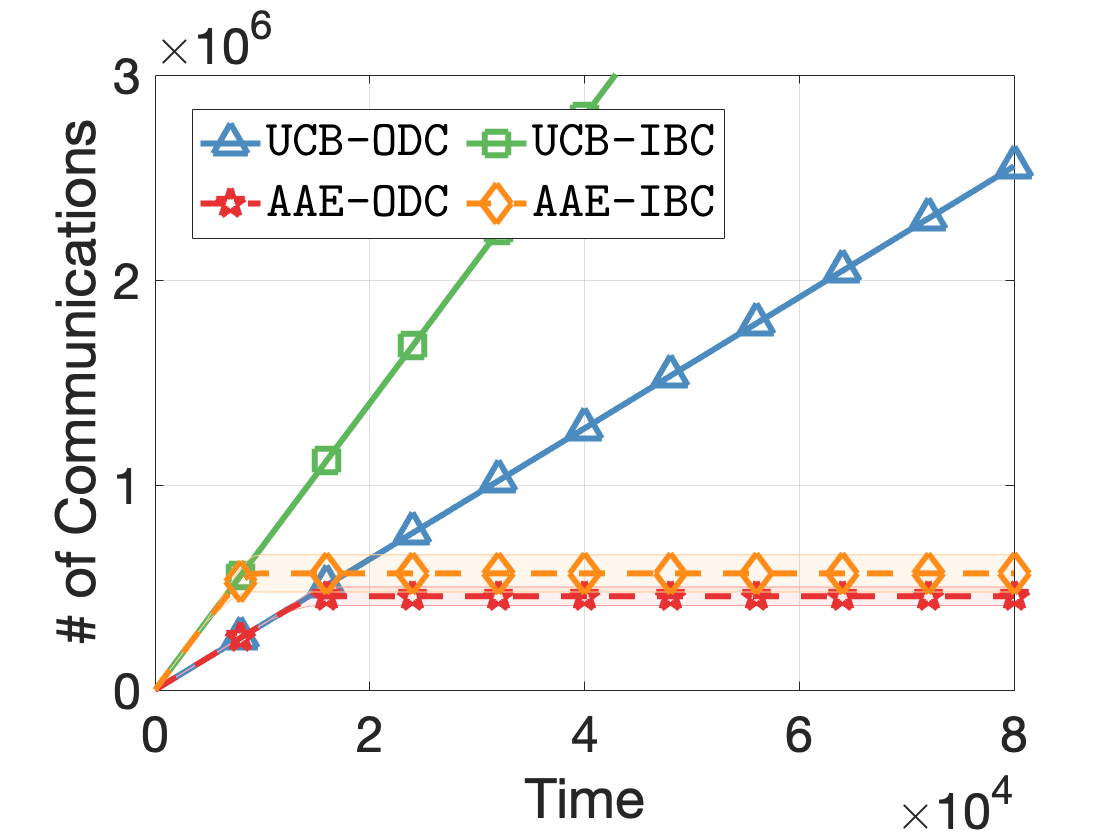}}
	\subfloat[Group Regret]{\includegraphics[width=0.3\textwidth]{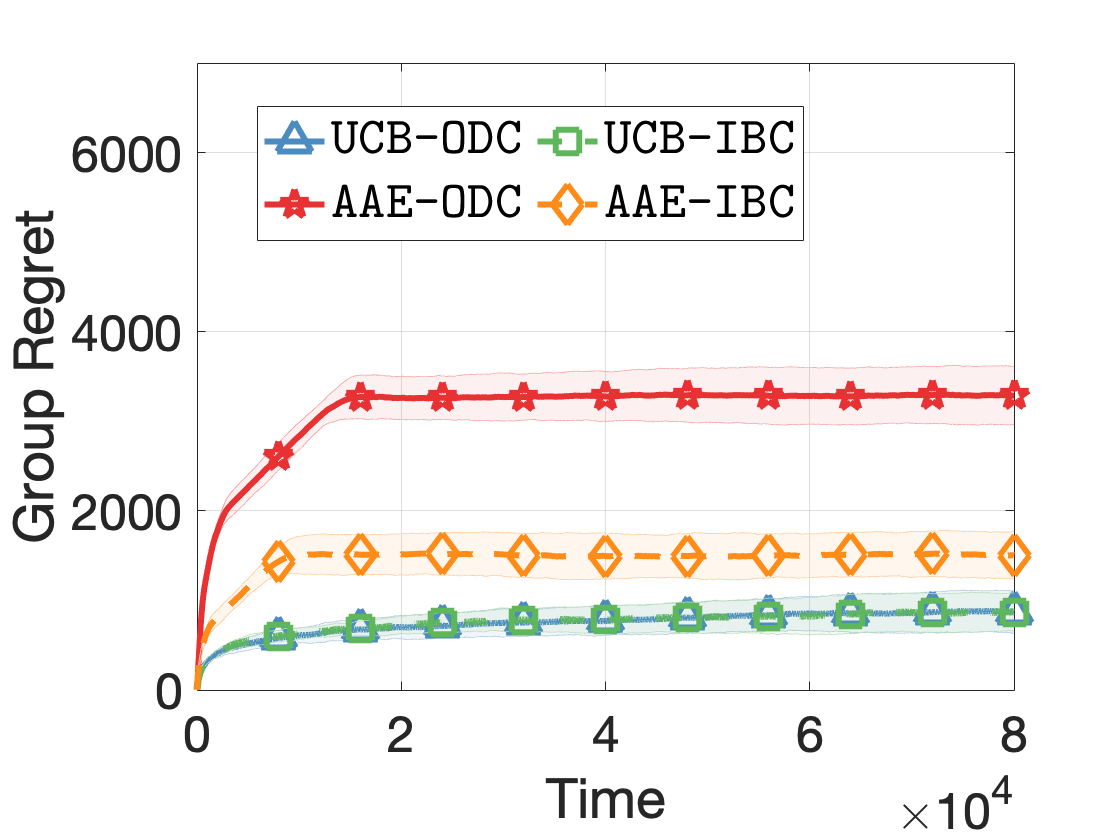}}
	\caption{\AAEODC outperforms \UCBODC in terms of communication complexity when the time horizon $T$ is comparatively large for the arm reward instance.
	} \label{fig:easy}
\end{figure}

\vfill

\end{document}